\providecommand*{\boxast}{%
  \mathbin{
    \mathpalette\@boxit{*}%
  }%
}
\newcommand*{\@boxit}[2]{%
  \sbox0{$\m@th#1\Box$}%
  \ifx#1\displaystyle \ht0=\dimexpr\ht0+.05ex\relax \fi
  \ifx#1\textstyle \ht0=\dimexpr\ht0+.05ex\relax \fi
  \ifx#1\scriptstyle \ht0=\dimexpr\ht0+.04ex\relax \fi
  \ifx#1\scriptscriptstyle \ht0=\dimexpr\ht0+.065ex\relax \fi
  \sbox2{$#1\vcenter{}$}
  \rlap{%
    \hbox to \wd0{%
      \hfill
      \raisebox{%
        \dimexpr.5\dimexpr\ht0+\dp0\relax-\ht2\relax
      }{$\m@th#1#2$}%
      \hfill
    }%
  }%
  \Box
}
\def\BState{\State\hskip-\ALG@thistlm}
\newcommand{\tsn}[1]{{\left\vert\kern-0.25ex\left\vert\kern-0.25ex\left\vert #1 
    \right\vert\kern-0.25ex\right\vert\kern-0.25ex\right\vert}}
\newenvironment{itemize*}%
{\begin{itemize}[leftmargin=*,topsep=0pt]%
		\setlength{\itemsep}{0pt}%
		\setlength{\parskip}{0pt}}%
	{\end{itemize}}
\newenvironment{enumerate*}%
{\begin{enumerate}[leftmargin=*,topsep=0pt]%
		\setlength{\itemsep}{0pt}%
		\setlength{\parskip}{0pt}}%
	{\end{enumerate}}
\newtheorem{theorem}{Theorem}[section]
\newtheorem{claim}[theorem]{Claim}
\newtheorem{assumption}{Assumption}[section]
\newtheorem{lemma}[theorem]{Lemma}
\newtheorem{proposition}[theorem]{Proposition}
\newtheorem{definition}[theorem]{Definition}
\newtheorem{remark}{Remark}[section]
\newcommand{\vct}{\boldsymbol }
\renewcommand{\sp}{\mathrm{span}}
\newcommand{\rank}{\mathrm{rank}}
\newcommand{\diag}{\mathrm{diag}}
\newcommand{\ER}{\mathrm{ER}}
\newcommand{\tr}{\mathrm{tr}}
\def\R{\mathbb{R}}
\def\cC{\mathcal{C}}
\def\cF{\mathcal{F}}
\def\cK{\mathcal{K}}
\def\cS{\mathcal{S}}
\def\cN{\mathcal{N}}
\def\cO{\mathcal{O}}
\def\cP{\mathcal{P}}
\def\cS{\mathcal{S}}
\def\cX{\mathcal{X}}
\newcommand{\norm}[1]{\left\|#1\right\|}
\newcommand{\expect}{\mathbb{E}}
\newtheorem{thm}{Theorem}[section]
\newtheorem{defn}{Definition}[section]
\def\approxcorrect{\cmark\kern-1.4ex\raisebox{.30ex}{$\xmark$}}
\newcommand{\idxn}[1][]{\ifthenelse{\equal{#1}{}}{\mathbb{INDQ}_n}{\mathbb{INDQ}_{#1}}}
\newcommand{\beq}{\begin{equation}}
\newcommand{\eeq}{\end{equation}}
\newcommand{\bx}{{\bm{x}}}
\newcommand{\bg}{{\bm{g}}}
\newcommand{\bw}{{\bm{w}}}
\newcommand{\br}{{\bm{r}}}
\newcommand{\ba}{{\bm{a}}}
\newcommand{\bu}{{\bm{u}}}
\newcommand{\bv}{{\bm{v}}}
\newcommand{\bz}{{\bm{z}}}
\newcommand{\be}{{\bm{e}}}
\newcommand{\by}{{\bm{y}}}
\newcommand{\hbw}{\hat{{\bm{w}}}}
\newcommand{\hp}{\hat{p}}
\newcommand{\I}{{\text{I}}}
\newcommand{\btheta}{{\bm{\theta}}}
\newcommand{\bdelta}{{\bm{\delta}}}
\newcommand{\ulambda}{{\underline{\lambda}}}
\newcommand{\barlambda}{{\bar{\lambda}}}
\newcommand{\hphi}{{\hat{\phi}}}
\newcommand{\order}[1]{{\cal{O}}(#1)}
\newcommand{\Otil}[1]{{\tilde{\cal{O}}}(#1)}
\newcommand{\balpha}{{\bar{\alpha}}}
\newcommand{\bb}{\vct{b}}
\newcommand{\hB}{\hat{B}}
\newcommand{\hW}{\hat{W}}
\newcommand{\hTheta}{\hat{\Theta}}
\definecolor{emmanuel}{RGB}{255,127,0}
\newcommand{\E}{\operatorname{\mathbb{E}}}
\def \endprf{\hfill {\vrule height6pt width6pt depth0pt}\medskip}
\newcommand{\shortrnote}[1]{ &  &  & \text{\footnotesize\llap{#1}}}
\newcommand{\longrnote}[1]{ &  &  \\   &  &  &  &  & \notag \text{\footnotesize\llap{#1}}}
\newcommand{\Trace}{{\mathrm{Tr}}}
\DeclareMathOperator*{\argmin}{arg\,min}
\DeclareMathOperator*{\minimize}{minimize}
\newcommand{\rep}{\phi}
\newcommand{\repclass}{\Phi}
\newcommand{\divergence}{D}
\newcommand{\gausswidth}{\mathcal{G}}
\title{Few-Shot Learning via Learning the Representation, Provably}
\author{
	Simon S. Du\textsuperscript{1$\ast$}, Wei Hu\textsuperscript{2$\ast$}, Sham M. Kakade\textsuperscript{1,3$\ast$}, Jason D. Lee\textsuperscript{2$\ast$}, and Qi Lei\textsuperscript{2}\thanks{Alphabetical Order.}  \\
	\textsuperscript{1}University of Washington \quad \textsuperscript{2}Princeton University \quad \textsuperscript{3}Microsoft Research\\
	\texttt{\{ssdu,sham\}@cs.washington.edu, \{huwei@cs., jasonlee@,qilei@\}princeton.edu}
}
\begin{document}
	
\maketitle 
	
	
	

	\vspace*{-0.2cm}
	\begin{abstract}

This paper studies few-shot learning via representation learning, where one uses $T$ source tasks with $n_1$ data per task to learn a representation in order to reduce the sample complexity of a target task for which there is only $n_2 (\ll n_1)$ data. Specifically, we focus on the setting where there exists a good \emph{common representation} between source and target, and our goal is to understand how large of a sample size reduction is possible. First, we study the setting where this common representation is low-dimensional and provide a fast rate of $O\left(\frac{\mathcal{C}\left(\Phi\right)}{n_1T} + \frac{k}{n_2}\right)$; here, $\Phi$ is the representation function class, $\mathcal{C}\left(\Phi\right)$ is its complexity measure, and $k$ is the dimension of the representation. When specialized to linear representation functions, this rate becomes $O\left(\frac{dk}{n_1T} + \frac{k}{n_2}\right)$ where $d (\gg k)$ is the ambient input dimension, which is a substantial improvement over the rate without using representation learning, i.e. over the rate of $O\left(\frac{d}{n_2}\right)$. This result bypasses the $\Omega(\frac{1}{T})$ barrier under the i.i.d. task assumption, and can capture the desired property that all $n_1T$ samples from source tasks can be \emph{pooled} together for representation learning. Next, we consider the setting where the common representation may be high-dimensional but is capacity-constrained (say in norm); here, we again demonstrate the advantage of representation learning in both high-dimensional linear regression and neural network learning. Our results demonstrate representation learning can fully utilize all $n_1T$ samples from source tasks.

	\end{abstract}

	\section{Introduction}
	\label{sec:intro}
A popular scheme for \emph{few-shot learning}, i.e., learning in a data-scarce environment, is \emph{representation learning}, where one first learns a feature extractor, or representation, e.g., the last layer of a convolutional neural network, from different but related source tasks, and then uses a simple predictor (usually a linear function) on top of this representation in the target task.
The hope is that the learned representation captures the common structure across tasks, which makes a linear predictor sufficient for the target task.
If the learned representation is good enough, it is possible that a few samples are sufficient for learning the target task, which can be much smaller than the number of samples required to learn the target task from scratch.

While representation learning has achieved tremendous success in a variety of applications \citep{bengio2013representation}, its theoretical studies are limited. In existing theoretical work, the most natural algorithm is to explicitly look for the optimal representation given source data, which when combined with a  (different) linear predictor on top for each task can achieve the smallest cumulative training error on the source tasks. Of course, it is not guaranteed that the representation found will be useful for the target task unless one makes some assumptions to characterize the connections between different tasks.
Existing work often imposes a probabilistic assumption about the connection between tasks: each task is sampled i.i.d. from an underlying distribution.
Under this assumption, \citet{maurer2016benefit} showed an $\tilde{O}(\frac{1}{\sqrt{T}} + \frac{1}{\sqrt{n_2}})$ risk bound on the target task, where $T$ is the number of source tasks, $n_1$ is the number of samples per source task, and $n_2$ is the number of samples from the target task.\footnote{We only focus on the dependence on $T$, $n_1$ and $n_2$ in this paragraph. Note that \citet{maurer2016benefit} only considered $n_1=n_2$, but their approach does not give a better result even if $n_1>n_2$.} 
Unsatisfactorily, this bound necessarily requires the number of tasks $T$ to be large, and it does not improve when the number of samples per source task, $n_1$, increases.
Intuitively, one should expect more data to help, and therefore an ideal bound would be $\frac{1}{\sqrt{n_1T}} + \frac{1}{\sqrt{n_2}}$ (or $\frac{1}{n_1T} + \frac{1}{n_2}$ in the realizable case), because $n_1T$ is the total number of training data points from source tasks, which can be potentially pooled to learn the representation.

Unfortunately, as pointed out by \citet{maurer2016benefit}, there exists an example that satisfies the i.i.d. task assumption for which $\Omega(\frac{1}{\sqrt{T}})$ is unavoidable (or $\Omega(\frac1T)$ in the realizable setting).
This means that \emph{the i.i.d. assumption alone is not sufficient} if we want to take advantage of a large amount of samples per task.
Therefore, a natural question is:
\begin{center}
\emph{What connections between  tasks enable representation learning to utilize \textbf{all} source data?}
\end{center}

In this paper, we obtain the first set of results that fully utilize the $n_1T$ data from source tasks.
We replace the i.i.d. assumption over tasks with natural structural conditions on the input distributions and linear predictors.
These conditions depict that the target task can be in some sense ``covered'' by the source tasks, which will further give rise to the desirable guarantees.


First, we study the setting where there exists a common well-specified \emph{low-dimensional} representation in source and target tasks, and obtain an $\tilde{O}(\frac{dk}{n_1T} + \frac{k}{n_2})$ risk bound on the target task where $d$ is the ambient input dimension, $k(\ll d)$ is the dimension of the representation, and $n_2$ is the number of data from the target task.
Note that this improves the $\frac{d}{n_2}$ rate of just learning the target task without using representation learning.
The term $\frac{dk}{n_1T}$ indicates that we can fully exploit all $n_1T$ data in the source tasks to learn the representation.
We further extend this result to handle general nonlinear representation function class and obtain an
$\tilde{O}(\frac{\mathcal{C}\left(\Phi\right)}{n_1T} + \frac{k}{n_2})$ risk bound on the target task, where $\Phi$ is the representation function class and $\mathcal{C}\left(\Phi\right)$ is a certain complexity measure of $\Phi$.

Second, we study the setting where there exists a common linear \emph{high-dimensional} representation for source and target tasks, and obtain an $\tilde{O}\big(\frac{\bar R\sqrt{\Trace\left(\Sigma\right)}}{\sqrt{n_1T}} + \frac{\bar R\sqrt{\norm{\Sigma}}_2}{\sqrt{n_2}} \big)$ rate where $\bar R$ is a normalized nuclear norm control over linear predictors, and $\Sigma$ is the covariance matrix of the raw feature. 
 This also improves over the baseline 
  rate for the case without using representation learning. We further extend this result to two-layer neural networks with ReLU activation.
Again, our results indicate that we can fully exploit $n_1T$ source data.

A technical insight coming out of our analysis is that any capacity-controlled method that gets low test error on the source tasks must also get low test error on the target task by virtue of being forced to learn a good representation.
Our result on high-dimensional representations and overparametrized neural networks shows that the capacity control for representation learning \textit{does not have to be through explicit low dimensionality}.

\paragraph{Organization.} 
The rest of the paper is organized as follows.
We review related work in Section~\ref{sec:rel}. In Section~\ref{sec:setup}, we formally describe the setting we consider. We next present our analysis in four different settings:
\begin{itemize}
	\item  Section~\ref{sec:subspace} presents the results for low-dimensional linear representation learning. 
	\item  Section~\ref{sec:general} presents the results for low-dimensional nonlinear representation classes, including neural networks.
	\item Section~\ref{sec:kernel} presents the results for high-dimensional linear representation learning. 
\item Section~\ref{sec:nn} presents the results for representation learning in overparametrized neural networks.	
\end{itemize}
Finally, we conclude in Section~\ref{sec:dis} and leave most of the proofs to appendices.

	\section{Related Work}
	\label{sec:rel}
The idea of multitask representation learning at least dates back to \citet{caruana1997,thrun1998,baxter2000model}.
Empirically, representation learning has shown its great power in various domains;
see \citet{bengio2013representation} for a survey.
In particular, representation learning is widely adopted for few-shot learning tasks~\citep{sun2017revisiting,goyal2019scaling}.
Representation learning is also closely connected to meta-learning~\citep{schaul2010metalearning}.
Recent work~\cite{raghu2019rapid} empirically suggested that the effectiveness of the popular meta-learning algorithm Model Agnostic Meta-Learning (MAML) is due to its ability to learn a useful representation.
The scheme we analyze in this paper is closely related to \citet{lee2019meta,bertinetto2018meta} for meta-learning.

%
On the theoretical side, \citet{baxter2000model} performed the first theoretical analysis and gave sample complexity bounds using covering numbers.
\citet{maurer2016benefit} and follow-up work gave analyses on the benefit of representation learning for reducing the sample complexity of the target task.
They assumed every task is i.i.d. drawn from an underlying distribution and can obtain an $\tilde{O}(\frac{1}{\sqrt{T}} + \frac{1}{\sqrt{n_2}})$ rate.
As pointed out in \citet{maurer2016benefit}, the $\frac{1}{\sqrt{T}}$ dependence is not improvable even if $n_1 \rightarrow \infty$ because $\frac{1}{\sqrt{T}}$ is the rate of concentration for the distribution over tasks. 

The concurrent work of~\citet{tripuraneni2020provable} studies low-dimensional linear representation learning and obtains a similar result as ours in this case, but they assume isotropic inputs for all tasks, which is a special case of our result. Furthermore, we also provide results for high-dimensional linear representations, general non-linear representations, and overparametrized neural networks. 
\citet{tripuraneni2020provable} also give a computationally efficient algorithm for standard Gaussian inputs and a lower bound for subspace recovery in the low-dimensional linear setting. Subsequent work of \citet{tripuraneni2020theory} generalizes this work from linear task-specific layers $w_t$ to nonlinear task specific layers and bounded loss functions via proposing a new diversity assumption.


Another recent line of theoretical work analyzed gradient-based meta-learning methods \citep{denevi2019learning,finn2019online,khodak2019adaptive} and showed guarantees for convex losses by using tools from online convex optimization.
Lastly, we remark that there are analyses for other representation learning schemes~\citep{arora2019theoretical,mcnamara2017risk,galanti2016theoretical,alquier2016regret,denevi2018incremental}.

	\section{Notation and Setup}
	\label{sec:setup}
	\textbf{Notation.}~
Let $[n] = \{1, 2, \ldots, n\}$.
We use $\norm{\cdot}$ or $\norm{\cdot}_2$ to denote the $\ell_2$ norm of a vector or the spectral norm of a matrix.
Denote by $\norm{\cdot}_F$ and $\norm{\cdot}_*$ the Frobenius norm and the nuclear norm of a matrix, respectively.
Let $\langle \cdot, \cdot \rangle$ be the Euclidean inner product between vectors or matrices.
Denote by $I$ the identity matrix. 
Let $\cN(\mu, \sigma^2)$/$\cN(\bm{\mu},\Sigma)$ be the one-dimensional/multi-dimensional Gaussian distribution, and $\chi^2(m)$ the chi-squared distribution with $m$ degrees of freedom.

For a matrix $A\in\R^{m\times n}$, let $\sigma_i(A)$ be its $i$-th largest singular value.
Let $\sp(A)$ be the subspace of $\R^m$ spanned by the columns of $A$, i.e., $\sp(A) = \{A\bv\mid \bv\in\R^n \}$.
Denote $P_A = A(A^\top A)^\dagger A^\top \in \R^{m\times m}$, which is the projection matrix onto $\sp(A)$. Here $^\dagger$ stands for the Moore-Penrose pseudo-inverse.
Note that $0\preceq P_A\preceq I$ and $P_A^2=P_A$.
We also define $P^\perp_A = I - P_A$, which is the projection matrix onto $\sp(A)^\perp$, the orthogonal complement of $\sp(A)$ in $\R^m$.
For a positive semidefinite (psd) matrix $B$, denote by $\lambda_{\max}(B)$ and $\lambda_{\min}(B)$ its largest and smallest eigenvalues, respectively; let $B^{1/2}$ be the psd matrix such that $(B^{1/2})^2 = B$.

We use the standard $O(\cdot)$, $\Omega(\cdot)$ and $\Theta(\cdot)$ notation to hide universal constant factors.
We also use $a \lesssim b $ or $b\gtrsim a$ to indicate $a=O(b)$, and use $a \gg b$ or $b \ll a$ to mean that $a\ge C\cdot b$ for a sufficiently large universal constant $C>0$.

\textbf{Problem Setup.}~
Suppose that there are $T$ source tasks. Each task $t\in[T]$ is associated with a distribution $\mu_t$ over the joint data space $\mathsf{X} \times \mathsf{Y} $, where $\mathsf X$ is the input space and $\mathsf{Y}$ is the output space.
In this paper we consider $\mathsf{X} \subseteq \R^d$ and $\mathsf{Y} \subseteq \R$.
For each source task $t\in[T]$ we have access to $n_1$ i.i.d. samples $(\bx_{t, 1}, y_{t, 1}), \ldots, (\bx_{t, n_1}, y_{t, n_1})$ from $\mu_t$.
For convenience, we express these $n_1$ samples collectively as an input matrix $X_t \in \R^{n_1\times d}$ and an output vector $\by_t \in \R^{n_1}$.

Multitask learning tries to learn prediction functions for all the $T$ source tasks simultaneously in the hope of discovering some underlying common property of these tasks.
The common property we consider in this paper is a \emph{representation}, which is a function $\phi: \mathsf{X} \to \mathsf{Z}$ that maps an input to some feature space $\mathsf{Z} \subseteq \R^k$.
We restrict the representation function to be in some function class $\Phi$, e.g., neural networks.
We try to use different linear predictors on top of a common representation function $\phi$ to model the input-output relations in different source tasks. Namely, for each task $t\in[T]$, we set the prediction function to be $\bx \mapsto \langle \bw_t, \phi(\bx) \rangle$ ($\bw_t\in\R^k$).
Therefore, using the training samples from $T$ tasks, we can solve the following optimization problem to learn the representation:\footnote{We use the $\ell_2$ loss throughout this paper.}
\begin{align} \label{eqn:mtrl}
\minimize_{\phi\in\Phi, \bw_1, \ldots, \bw_T\in\R^k } \frac{1}{2n_1T}\sum_{t=1}^T \sum_{i=1}^{n_1} \left( y_{t, i} - \langle \bw_t, \phi(\bx_{t,i}) \rangle \right)^2.
\end{align}
We overload the notation to allow $\phi$ to apply to all the samples in a data matrix simultaneously, i.e., $\phi(X_t) = \left[ \phi(\bx_{t,1}), \ldots, \phi(\bx_{t, n_1}) \right]^\top \in \R^{n_1\times k}$. Then~\eqref{eqn:mtrl} can be rewritten as
\begin{align} \label{eqn:mtrl_concise}
\minimize_{\phi\in\Phi, \bw_1, \ldots, \bw_T\in\R^k } \frac{1}{2n_1T}\sum_{t=1}^T  \norm{ \by_{t} - \phi(X_{t}) \bw_t  }^2.
\end{align}

Let $\hphi \in \Phi$ be the representation function obtained by solving~\eqref{eqn:mtrl_concise}.
Now we retain this representation and apply it to future (target) tasks.
For a target task specified by a distribution $\mu_{T+1}$ over $\mathsf{X}\times\mathsf{Y}$, suppose we receive $n_2$ i.i.d. samples $X_{T+1} \in \R^{n_2\times d}, \by_{T+1}\in\R^{n_2}$.
We further train a linear predictor on top of $\hphi$ for this task:
\begin{equation}
\minimize_{\bw_{T+1}\in\R^k} \frac{1}{2n_2} \norm{ \by_{T+1} -  \hphi(X_{T+1})\bw_{T+1} }^2.
\end{equation}
Let $\hbw_{T+1}$ be the returned solution. We are interested in whether our learned predictor $\bx \mapsto \langle \hbw_{T+1}, \hphi(\bx)\rangle$ works well on average for the target task, i.e., we want the population loss
\[
L_{\mu_{T+1}}(\hphi,\hbw_{T+1}) = \E_{(\bx, y)\sim\mu_{T+1}} \frac12(y - \langle \hbw_{T+1}, \hphi(\bx)\rangle)^2
\]
to be small.
In particular, we are interested in the \emph{few-shot} learning setting, where the number of samples $n_2$ from the target task is small -- much smaller than the number of samples required for learning the target task from scratch.

\textbf{Data assumption.}~ In order for the above learning procedure to make sense, we assume that there is a ground-truth optimal representation function $\phi^*\in\Phi$ and specializations $\bw_1^*, \ldots, \bw_{T+1}^* \in \R^k$ for all the tasks such that for each task $t\in[T+1]$, we have $\E_{(\bx, y)\sim\mu_{t}}[y|\bx] = \langle \bw_t^*, \phi^*(\bx) \rangle$.
More specifically, we assume $(\bx, y)\sim \mu_t$ can be generated by
\begin{equation} \label{eqn:data_assump}
y = \langle \bw_t^*, \phi^*(\bx) \rangle + z, \quad \bx\sim p_t, z\sim\cN(0, \sigma^2) ,
\end{equation}
where $\bx$ and $z$ are independent. Our goal is to bound the \emph{excess risk} of our learned model on the target task, i.e., how much our learned model $(\hphi, \hbw_{T+1})$ performs worse than the optimal model $(\phi^*, \bw_{T+1}^*)$ on the target task:
\begin{equation} \label{eqn:excess_risk}
\begin{aligned}
\ER(\hphi, \hbw_{T+1}) &= L_{\mu_{T+1}}(\hphi,\hbw_{T+1}) - L_{\mu_{T+1}}(\phi^*, \bw_{T+1}^*) \\
&=\frac12 \E_{\bx\sim p_{T+1}} [( \langle \hbw_{T+1}, \hphi(\bx) \rangle - \langle \bw_{T+1}^*, \phi^*(\bx) \rangle  )^2].
\end{aligned}
\end{equation}
Here we have used the relation \eqref{eqn:data_assump}. Oftentimes we are interested in the average performance on a random target task (i.e., $\bw_{T+1}^*$ is random). In such case we look at the expected excess risk $\E_{\bw_{T+1}^*}[\ER(\hphi, \hbw_{T+1}) ]$.



	\section{Low-Dimensional Linear Representations}
	\label{sec:subspace}
	In this section, we consider the case where the representation is a linear map from the original input space $\R^d$ to a low-dimensional space $\R^k$ ($k\ll d$). Namely, we let the representation function class be $\Phi = \{ \bx\mapsto B^\top \bx \mid B\in\R^{d\times k} \}$.
Then the optimization problem~\eqref{eqn:mtrl_concise} for learning the representation can be written as:
\begin{equation} \label{eqn:linear_rl}
	(\hat B, \hat W)  \leftarrow  \argmin_{B \in \R^{d\times k} \atop W=[\bw_1,\ldots,\bw_T] \in \R^{k\times T}} \frac{1}{2n_1T} \sum_{t=1}^T \norm{\by_t - X_t B \bw_t}^2.
\end{equation}
The inputs from $T$ source tasks, $X_1, \ldots, X_T \in \R^{n_1 \times d}$, can be written in the form of a linear operator $\cX:\R^{d\times T}\rightarrow \R^{n_1\times T}$, where \[\cX(\Theta)=[X_1\btheta_1,\ldots,X_T\btheta_T], \quad \forall \Theta=[\btheta_1,\ldots,\btheta_T]\in\R^{d\times T}.\]
With this notation, \eqref{eqn:linear_rl} can be rewritten as
\begin{equation} \label{eqn:linear_rl_compact}
(\hat B, \hat W)  \leftarrow  \argmin_{B \in \R^{d\times k}, W \in \R^{k\times T}} \frac{1}{2n_1T} \norm{Y - \cX( BW) }_F^2, \text{ where }Y = [\by_1, \ldots, \by_T]\in\R^{n_1\times T}.
\end{equation}
With the learned representation $\hB$ from~\eqref{eqn:linear_rl_compact}, for the target task, we further find a linear function on top of the representation:
\begin{equation} \label{eqn:linear_test_task_opt}
	\hbw_{T+1} \leftarrow \argmin_{\bw \in \R^k} \frac{1}{2n_2}\norm{\by_{T+1} - X_{T+1} \hB \bw}^2 .
\end{equation}

As described in Section~\ref{sec:setup}, we assume that all $T+1$ tasks share a common ground-truth representation specified by a matrix $B^* \in \R^{d\times k}$ such that a sample $(\bx, y)\sim\mu_t$ satisfies $\bx \sim p_t$ and $y = (\bw_t^*)^\top (B^*)^\top \bx + z$ where $z\sim \cN(0, \sigma^2)$ is independent of $\bx$. Here $\bw_t^* \in \R^k$, and we assume $\norm{\bw_t^*} = \Theta(1)$ for all $t\in[T+1]$.
Denote $W^* = [\bw_1^*, \ldots, \bw_T^*] \in \R^{k\times T}$.
Then we can write $Y=\cX(B^* W^*)+Z$, where the noise matrix $Z$ has i.i.d. $\cN(0, \sigma^2)$ entries.

Assume $\E_{\bx\sim p_t} [\bx] = \bm0$ and let $\Sigma_t = \E_{\bx\sim p_t} [\bx\bx^\top]$ for all $t\in[T+1]$.
Note that a sample $\bx \sim p_t$ can be generated from $\bx = \Sigma_t^{1/2} \bar{\bx}$ for $\bar{\bx} \sim \bar p_t$ such that $\E_{\bar{\bx}\sim \bar p_t} [\bar{\bx} ]= \bm0$ and $\E_{\bar{\bx}\sim \bar p_t} [\bar{\bx} \bar{\bx}^\top] = I$. ($\bar p_t$ is called the whitening of $p_t$.)
In this section we make the following assumptions on the input distributions $p_1, \ldots, p_{T+1}$.

\begin{assumption}[subgaussian input]
	\label{assump:linear_subgaussian}
	There exists $\rho > 0$ such that,
	for all $t\in[T+1]$, the random vector $\bar{\bx} \sim \bar p_t$ is $\rho^2$-subgaussian.\footnote{A random vector $\bx$ is called $\rho^2$-subgaussian if for any fixed unit vector $\bv$ of the same dimension, the random variable $\bv^\top \bx$ is $\rho^2$-subgaussian, i.e., $\E[e^{s\cdot\bv^\top(\bx-\E[\bx])}]\le e^{s^2\rho^2/2}$ ($\forall s\in\R$).}
\end{assumption}

\begin{assumption}[covariance dominance]
	\label{assump:linear_covariance_dominance}
	There exists $c>0$ such that $\Sigma_t\succeq c \cdot \Sigma_{T+1}$ for all $t\in [T]$.\footnote{Note that Assumption~\ref{assump:linear_covariance_dominance} is a significant generalization of the identically distributed isotropic assumption used in concurrent work~\citet{tripuraneni2020provable}: they require $\Sigma_1=\Sigma_2=\cdots=\Sigma_{T+1}=I$.}
\end{assumption}

Assumption~\ref{assump:linear_subgaussian} is a standard assumption in statistical learning to obtain probabilistic tail bounds used in our proof.
 It may be replaced with other moment or boundedness conditions if we adopt different tail bounds in the analysis.
 
Assumption~\ref{assump:linear_covariance_dominance} says that every direction spanned by $\Sigma_{T+1}$ should also be spanned by $\Sigma_t$ ($t\in[T]$), and the parameter $c$ quantifies how ``easy'' it is for $\Sigma_t$ to cover $\Sigma_{T+1}$.
Intuitively, the larger $c$ is, the easier it is to cover the target domain using source domains, and we will indeed see that the risk will be proportional to $\frac1c$.
We remark that we do not necessarily need $\Sigma_t \succeq c\cdot\Sigma_{T+1}$ for all $t\in[T]$; as long as this holds for a constant fraction of $t$'s, our result is valid.

We also make the following assumption that characterizes the diversity of the source tasks.
\begin{assumption}[diverse source tasks] \label{assump:linear_diverse_source}
	The matrix $W^* = [\bw_1^*, \ldots, \bw_T^*] \in \R^{k\times T}$ satisfies $\sigma_{k}^2(W^*) \ge \Omega(\frac Tk)$.
\end{assumption}
Recall that $\norm{\bw_t^*} = \Theta(1)$, which implies $\sum_{j=1}^k \sigma_j^2(W^*) = \norm{W^*}_F^2 = \Theta(T) $.
Thus, Assumption~\ref{assump:linear_diverse_source} is equivalent to saying that $\frac{\sigma_1(W^*)}{\sigma_k(W^*)} = O(1)$.
Roughly speaking, this means that $\{\bw_t^*\}_{t\in[T]}$ can cover all directions in $\R^k$.
As an example, Assumption~\ref{assump:linear_diverse_source} is satisfied with high probability when $\bw_t^*$'s are sampled i.i.d. from $\cN(0, \Sigma)$ with  $\frac{\lambda_{\max}(\Sigma)}{\lambda_{\min}(\Sigma)}=O(1)$.

Finally, we make the following assumption on the distribution of the target task.

\begin{assumption}[distribution of target task] \label{assump:wT+1}
	Assume that $\bw_{T+1}^*$ follows a distribution $\nu$ such that $\norm{\E_{\bw\sim \nu}[\bw\bw^\top]} \le O\left(\frac1k\right).$
\end{assumption}

	Since we assume $\norm{\bw_{T+1}^*} = \Theta(1)$,
	 the assumption $\norm{\E_{\bw\sim \nu}[\bw\bw^\top]} \le O(\frac1k)$  means that the distribution of $\bw_{T+1}^*$ does not align with any direction significantly more than average.
	It is useful to think of the uniform distribution on the unit sphere as an example, though we can allow a much more general class of distributions. 
	This is also compatible with Assumption~\ref{assump:linear_diverse_source} which says that $\bw_t^*$'s cover all the directions.

Assumption~\ref{assump:wT+1} can be removed at the cost of a slightly worse risk bound. See Remark~\ref{rem:deterministic_target}. Our main result in this section is the following theorem.
\begin{theorem}[main theorem for linear representations] \label{thm:linear_main}
	Fix a failure probability $\delta\in(0, 1)$.
	Under Assumptions~\ref{assump:linear_subgaussian}, \ref{assump:linear_covariance_dominance}, \ref{assump:linear_diverse_source} and \ref{assump:wT+1}, we further assume $2k\le \min\{ d, T\}$ and that the sample sizes in source and target tasks satisfy $n_1 \gg \rho^4(d+\log\frac{T}{\delta})$, $n_2 \gg \rho^4(k+\log\frac1\delta)$, and $c n_1\ge n_2$.
	Define $\kappa = \frac{\max_{t\in[T]}\lambda_{\max}(\Sigma_t)}{\min_{t\in[T]}\lambda_{\min}(\Sigma_t)}$.
	 Then with probability at least $1-\delta$ over the samples, the expected excess risk of the learned predictor $\bx \mapsto \hbw_{T+1}^\top \hB\bx$ on the target task satisfies
	 \begin{equation} \label{eqn:linear_bound}
	 \begin{aligned}
	\E_{\bw_{T+1}^*\sim\nu} [\ER( \hB, \hbw_{T+1} )] 
	\lesssim\, \sigma^2 \left( \frac{kd \log(\kappa n_1)}{cn_1T} + \frac{k+\log\frac1\delta}{n_2} \right) .
	\end{aligned}
	\end{equation}
\end{theorem}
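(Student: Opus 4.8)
The plan is to bound the excess risk in two stages: first control how well the learned representation $\hB$ captures $B^*$ on the source tasks, then transfer this to the target task and add the target-side estimation error, linking the two through a trace inequality. For the first stage, since $(\hB,\hat W)$ minimizes \eqref{eqn:linear_rl_compact} while $(B^*,W^*)$ is feasible, writing $\Delta := B^*W^*-\hB\hat W$ (which has rank at most $2k$) and using $Y=\cX(B^*W^*)+Z$ gives the basic inequality $\norm{\cX(\Delta)}_F^2\le 2\abs{\langle Z,\cX(\Delta)\rangle}$. For the left-hand side, once $n_1\gg\rho^4(d+\log\frac{T}{\delta})$, Assumption~\ref{assump:linear_subgaussian} and standard subgaussian covariance concentration give $\frac1{n_1}\norm{X_t\bv}^2\ge\frac12\bv^\top\Sigma_t\bv$ for every $t\in[T]$ and every $\bv\in\R^d$ simultaneously; together with Assumption~\ref{assump:linear_covariance_dominance} this yields a restricted-eigenvalue bound $\frac1{n_1T}\norm{\cX(\Delta)}_F^2\gtrsim\frac{c}{T}\sum_{t=1}^T\norm{\Sigma_{T+1}^{1/2}\Delta_t}^2$, where $\Delta_t$ is the $t$-th column of $\Delta$.

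The right-hand side is a supremum of a Gaussian linear form over rank-$2k$ matrices pushed through $\cX$. Since all columns of such a $\Delta$ lie in a common $2k$-dimensional subspace $V$ (orthonormal basis $U_V$), each column of $\cX(\Delta)$ lies in $\sp(X_tU_V)$, so by Cauchy--Schwarz $\abs{\langle Z,\cX(\Delta)\rangle}/\norm{\cX(\Delta)}_F\le\big(\sum_{t}\norm{P_{X_tU_V}\bz_t}^2\big)^{1/2}$, with $\bz_t$ the $t$-th column of $Z$. For a fixed $V$, conditioned on the $X_t$'s this sum is $\sigma^2$ times a $\chi^2(2kT)$ variable and so concentrates around $2k\sigma^2T$; taking a union bound over an $\epsilon$-net of the Grassmannian of $2k$-dimensional subspaces of $\R^d$ (of log-cardinality $O(kd\log\frac1\epsilon)$) and controlling the net error via a Lipschitz estimate on $V\mapsto P_{X_tU_V}$ --- this is where $\kappa$ and $\norm{\bz_t}\lesssim\sigma\sqrt{n_1}$ enter, forcing $\epsilon\asymp(\kappa n_1)^{-O(1)}$ --- gives $\abs{\langle Z,\cX(\Delta)\rangle}\lesssim\sigma\sqrt{kd\log(\kappa n_1)+kT+\log\frac1\delta}\cdot\norm{\cX(\Delta)}_F$. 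Combining with the restricted-eigenvalue bound produces
\[
\frac1T\sum_{t=1}^T\norm{\Sigma_{T+1}^{1/2}(B^*\bw_t^*-\hB\hbw_t)}^2\;\lesssim\;\frac{\sigma^2}{c}\Big(\frac{kd\log(\kappa n_1)}{n_1T}+\frac{k}{n_1}+\frac{\log\frac1\delta}{n_1T}\Big)\;=:\;\epsilon_1^2,
\]
where $\hbw_t$ denotes the $t$-th column of $\hat W$.

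For the transfer, set $A:=\Sigma_{T+1}^{1/2}\hB$ and $N:=(B^*)^\top\Sigma_{T+1}^{1/2}P^\perp_A\Sigma_{T+1}^{1/2}B^*\succeq0$, so that $\min_{\ba}\norm{\Sigma_{T+1}^{1/2}(\hB\ba-B^*\bw)}^2=\bw^\top N\bw$ for any $\bw$. Taking $\ba=\hbw_t$ and averaging over $t$ bounds $\langle N,\frac1T W^*(W^*)^\top\rangle\lesssim\epsilon_1^2$; since $\lambda_{\min}(\frac1T W^*(W^*)^\top)=\frac1T\sigma_k^2(W^*)\gtrsim\frac1k$ by Assumption~\ref{assump:linear_diverse_source}, we get $\tr(N)\lesssim k\epsilon_1^2$, and then by Assumption~\ref{assump:wT+1} and $\tr(NM)\le\norm{M}\tr(N)$ for psd $M$ the expected best approximation error is $\E_{\bw_{T+1}^*\sim\nu}[\min_{\ba}\norm{\Sigma_{T+1}^{1/2}(\hB\ba-B^*\bw_{T+1}^*)}^2]=\tr(N\,\E_{\bw\sim\nu}[\bw\bw^\top])\lesssim\frac1k\tr(N)\lesssim\epsilon_1^2$. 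Finally, fixing $\hB$ (independent of the target sample) and letting $\bw^\dagger:=\argmin_{\ba}\norm{\Sigma_{T+1}^{1/2}(\hB\ba-B^*\bw_{T+1}^*)}$ and $\xi:=B^*\bw_{T+1}^*-\hB\bw^\dagger$ (so $\Sigma_{T+1}^{1/2}\xi\perp\sp(A)$), the closed form of \eqref{eqn:linear_test_task_opt} and this orthogonality give the exact split $2\,\ER(\hB,\hbw_{T+1})=\norm{\Sigma_{T+1}^{1/2}\hB(\hbw_{T+1}-\bw^\dagger)}^2+\norm{\Sigma_{T+1}^{1/2}\xi}^2$. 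When $n_2\gg\rho^4(k+\log\frac1\delta)$ the $k$-dimensional empirical covariance $\frac1{n_2}\hB^\top X_{T+1}^\top X_{T+1}\hB$ is within a constant of $\hB^\top\Sigma_{T+1}\hB$, so the first term is $\lesssim\frac1{n_2}\norm{P_{X_{T+1}\hB}(X_{T+1}\xi+z)}^2\lesssim\frac1{n_2}\norm{X_{T+1}\xi}^2+\frac{\sigma^2(k+\log\frac1\delta)}{n_2}$; since $\xi$ is linear in $\bw_{T+1}^*$ with image in the fixed $\le2k$-dimensional $\sp(B^*)+\sp(\hB)$ (independent of $X_{T+1}$), taking $\E_{\bw_{T+1}^*}$ and using the corresponding constant-factor concentration of $\frac1{n_2}X_{T+1}^\top X_{T+1}$ on that subspace reduces $\E_{\bw_{T+1}^*}\frac1{n_2}\norm{X_{T+1}\xi}^2$ to $\lesssim\tr(N\,\E_{\bw\sim\nu}[\bw\bw^\top])\lesssim\epsilon_1^2$. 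Adding the pieces and using $cn_1\ge n_2$ to absorb $\frac{\sigma^2k}{cn_1}$ and $\frac{\sigma^2\log\frac1\delta}{cn_1T}$ into $\frac{\sigma^2(k+\log\frac1\delta)}{n_2}$ yields \eqref{eqn:linear_bound}.

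The main obstacle is the chaining step: controlling the Gaussian noise term uniformly over all rank-$2k$ matrices after composing with the random operator $\cX$, where the delicate point is quantifying how the column-space projections $P_{X_tU_V}$ move as $V$ sweeps the Grassmannian --- precisely where subgaussian concentration of the $X_t$'s (hence the $n_1\gg\rho^4 d$ requirement) and the condition number $\kappa$ are needed, and where the $\log(\kappa n_1)$ factor in the rate is born. The restricted-eigenvalue lower bound, the $W^*$-trace identity linking source and target, and the finite-dimensional least-squares analysis on the target task are comparatively routine given standard concentration tools.
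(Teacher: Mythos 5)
Your proposal is correct and follows the same overall skeleton as the paper's proof (basic inequality with a rank-$2k$ error matrix, a fixed-subspace $\chi^2(2kT)$ bound plus an $\epsilon$-net of size $e^{O(kd\log(1/\epsilon))}$ with $\epsilon\asymp(\kappa n_1)^{-O(1)}$ giving the $kd\log(\kappa n_1)$ term, a transfer step driven by $\sigma_k^2(W^*)\gtrsim T/k$ and $\norm{\E_\nu[\bw\bw^\top]}\lesssim 1/k$, a $\chi^2(k)$ noise term on the target, and absorption of the leftover terms via $cn_1\ge n_2$), but two sub-steps are executed differently. First, in the chaining step the paper does not Lipschitz-control the projectors $V\mapsto P_{X_tU_V}$; it splits $\langle Z,\cX(\Delta)\rangle=\langle Z,\cX(\bar VR)\rangle+\langle Z,\cX((V-\bar V)R)\rangle$ and bounds the second term by $\norm{Z}_F\norm{\cX((V-\bar V)R)}_F$, using a bound on $\norm{\Delta}_F$ through $\min_t\lambda_{\min}(\Sigma_t)$ — this is where its $\kappa$ enters; your projector-perturbation route also works (with Lipschitz constant $\asymp\sqrt{\kappa}$ after covariance concentration) and yields the same $\log(\kappa n_1)$, but it is, if anything, the more delicate of the two because it needs Wedin-type control of $\norm{P_{X_tU_V}-P_{X_tU_{\bar V}}}$ and a consistent choice of orthonormal representatives along the Grassmannian, whereas the paper only needs operator-norm bounds on $X_t(V-\bar V)$. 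Second, your transfer step works at the population level: you use the best-approximation matrix $N=(B^*)^\top\Sigma_{T+1}^{1/2}P^\perp_{\Sigma_{T+1}^{1/2}\hB}\Sigma_{T+1}^{1/2}B^*$ together with $\bw_t^{*\top}N\bw_t^*\le\norm{\Sigma_{T+1}^{1/2}(\hB\hbw_t-B^*\bw_t^*)}^2$ (which you obtain directly from the per-task covariance concentration and Assumption~\ref{assump:linear_covariance_dominance}), and on the target side you split via the population-optimal $\bw^\dagger$ and $\xi$ and then apply $2k$-dimensional concentration on $\sp([\hB,B^*])$. The paper instead keeps everything in terms of the empirical quantity $\norm{P^\perp_{X_{T+1}\hB}X_{T+1}B^*}_F^2$ and moves between $X_t$, $\Sigma_t^{1/2}$ and $\Sigma_{T+1}^{1/2}$ by its Lemma~\ref{lemma:loewner_magic}; your version buys a cleaner argument that dispenses with that projection-comparison lemma (the minimization over $\ba$ plays its role), at no loss in the final rate.
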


The proof of Theorem~\ref{thm:linear_main} is in Appendix~\ref{app:proof_linear}. 
	Theorem~\ref{thm:linear_main} shows that it is possible to learn the target task using only $O(k)$ samples via learning a good representation from the source tasks, which is better than the baseline $O(d)$ sample complexity for linear regression, thus demonstrating the benefit of representation learning. 
	It also shows that all $n_1T$ samples from source tasks can be pooled together, bypassing the $\Omega(\frac{1}{T})$ barrier under the i.i.d. tasks assumption.

\begin{remark}[multi-class problems]
	We note that all our results apply to multi-class problems by removing $T$, with similar class-diversity assumption.	Specifically, when source and target have $n_1$ and $n_2$ multi-class labeled samples (instead of independent tasks), using quadratic loss on the one-hot labels, our results apply similarly and will attain an excess risk of the form $\sigma^2 O\left(\frac{kd\log(\kappa n_1)}{cn_1} + \frac{ k+\log1/\delta}{n_2} \right)$ (see e.g. \cite{lee2020predicting}). Notice the result is independent of the number of classes.   
\end{remark}

\begin{remark}[deterministic target task] \label{rem:deterministic_target}
	We can drop Assumption~\ref{assump:wT+1} and easily obtain the following excess risk bound for any deterministic $\bw_{T+1}^*$ by slightly modifying the proof of Theorem~\ref{thm:linear_main}:
	\[
	\ER( \hB, \hbw_{T+1} )
	\lesssim\, \sigma^2 \left( \frac{k^2d \log(\kappa n_1)}{cn_1T} + \frac{k^2}{cn_1} + \frac{k+\log\frac1\delta}{n_2} \right),
	\]
	 which is only at most $k$ times larger than the bound in~\eqref{eqn:linear_bound}.
\end{remark}

	\section{General Low-Dimensional Representations}
	\label{sec:general}
	Now we return to the general case described in Section~\ref{sec:setup} where we allow a general representation function class $\Phi$.
We still assume that the representation is of low dimension $k$.
The goal is to obtain a result similar to Theorem~\ref{thm:linear_main}.
In this section we assume that inputs from all the tasks follow the same distribution, i.e., $p_1=\cdots=p_{T+1}=p$, but each task $t$ still has its own specialization function $\bw_t^*$ (c.f.~\eqref{eqn:data_assump}).
We remark that despite this restriction, our result in this section still applies to many interesting and nontrivial scenarios -- consider the case where the inputs are all images from ImageNet and each task asks whether the image is from a specific class.

We overload the notation from Section~\ref{sec:subspace} and use $\cX$ to represent the collection of all the training inputs from $T$ source tasks $X_1, \ldots, X_T \in \R^{n_1\times d}$.
We can think of $\cX$ as a third-order tensor of dimension $n_1\times d\times T$.


To characterize the complexity of the representation function class $\Phi$, we need the standard definition of Gaussian width.
\begin{definition}[Gaussian width]
\label{defn:gaussian_width}
Given a set $\cK \subset \R^m$, the Gaussian width of $\cK$ is defined as \begin{align*}
	\gausswidth \left(\cK\right) = \expect_{\bz \sim \cN\left(\bm0,I\right)} \sup_{\bv \in \cK} \langle \bv, \bz\rangle .
\end{align*} 
\end{definition}

We will measure the complexity of $\Phi$ using the Gaussian width of the following set that depends on the input data $\cX$:
\begin{equation} \label{eqn:grassmanian_set}
\begin{aligned}
\cF_{\cX}(\Phi) = \big\{ & A = [\ba_1,\ldots,\ba_T]\in\R^{n_1\times T} : \norm{A}_F=1, \\& \exists \phi, \phi'\in\Phi\ \text{s.t.}\ \ba_t\in\sp([\phi(X_t), \phi'(X_t)]), \forall t\in[T] \big\} .
\end{aligned}
\end{equation}

We also need the following definition.
\begin{definition}[covariance between two representations] \label{defn:nonlinear_cov}
	Given a distribution $q$ over $\R^d$ and two representation functions $\rep, \rep' \in \repclass$, define the covariance between $\rep$ and $\rep'$ with respect to $q$ to be
	\begin{align*}
	\Sigma_q\left(\rep,\rep'\right) = \expect_{\bx \sim q} \big[\rep\left(\bx\right)\rep'\left(\bx\right)^\top \big] \in \R^{k \times k}.
	\end{align*}
	Also define the symmetric covariance as
	\begin{align*}
		\Lambda_q(\rep,\rep') = \begin{bmatrix}
		\Sigma_q \left(\rep,\rep\right) & \Sigma_q \left(\rep,\rep'\right)  \\
		\Sigma_q \left(\rep',\rep\right)  & \Sigma_q \left(\rep',\rep'\right)  
		\end{bmatrix} \in \R^{2k\times2k}.
	\end{align*}
\end{definition}
It is easy to verify $\Lambda_q(\rep,\rep')\succeq 0$ for any $\rep, \rep'$ and $q$.\footnote{See the proof of Lemma~\ref{lem:cov_implies_div}.}

We make the following assumptions on the input distribution $p$, which ensure concentration properties of the representation covariances.

\begin{assumption}[point-wise concentration of covariance]
	\label{asmp:covariance_point_concen}
	For $\delta\in(0, 1)$, there exists a number $N_{\mathrm{point}}\left(\repclass,p,\delta\right)$ such that if $n\ge N_{\mathrm{point}}\left(\repclass,p,\delta\right)$, then for any given $\rep,\rep'\in\Phi$, $n$ i.i.d. samples of $p$ will with probability at least $1-\delta$ satisfy
	\begin{align*}
0.9 \Lambda_{p}(\rep,\rep') \preceq \Lambda_{\hp}(\rep,\rep') \preceq 1.1 \Lambda_{p}(\rep,\rep'),
	\end{align*}
	where $\hp$ is the empirical distribution over the $n$ samples.
\end{assumption}

\begin{assumption}[uniform concentration of covariance]
	\label{asmp:covariance_unif_concen}
	For $\delta\in(0, 1)$, there exists a number $N_{\mathrm{unif}}\left(\repclass,p,\delta\right)$ such that if $n\ge N_{\mathrm{unif}}\left(\repclass,p,\delta\right)$, then $n$ i.i.d. samples of $p$ will with probability at least $1-\delta$ satisfy
	\begin{align*}
	0.9 \Lambda_{p}(\rep,\rep') \preceq \Lambda_{\hp}(\rep,\rep') \preceq 1.1 \Lambda_{p}(\rep,\rep'), \quad \forall \rep,\rep' \in \Phi, 
	\end{align*}
	where $\hp$ is the empirical distribution over the $n$ samples.
\end{assumption}

	Assumptions~\ref{asmp:covariance_point_concen} and \ref{asmp:covariance_unif_concen} are conditions on the representation function class $\Phi$ and the input distribution $p$ that ensure concentration of empirical covariances to their population counterparts.
	Typically, we expect $N_{\mathrm{unif}}\left(\repclass,p,\delta\right) \gg N_{\mathrm{point}}\left(\repclass,p,\delta\right) $ since uniform concentration is a stronger requirement.
In Section~\ref{sec:subspace}, we have essentially shown that for linear representations and subgaussian input distributions, $N_{\mathrm{unif}}\left(\repclass,p,\delta\right) = \tilde{O}\left(d\right)$ and $N_{\mathrm{point}}\left(\repclass,p,\delta\right)=\tilde{O}\left(k\right)$ (see Claims~\ref{claim:concentration_source_covariance} and~\ref{claim:concentration_target_covariance}).

Our main theorem in this section is the following:
\begin{thm}[main theorem for general representations]\label{thm:main_nonlinear}
	Fix a failure probability $\delta\in(0, 1)$.
	Suppose $n_1 \ge N_{\mathrm{unif}}\left(\repclass,p,\frac{\delta}{3T}\right)$ and $n_2 \ge  N_{\mathrm{point}}\left(\repclass,p, \frac\delta3\right)$.
	Under Assumptions~\ref{assump:linear_diverse_source} and~\ref{assump:wT+1},
 with probability at least $1-\delta$ over the samples, the expected excess risk of the learned predictor $\bx \mapsto \hbw_{T+1}^\top \hphi(\bx)$ on the target task satisfies
\begin{equation} \label{eqn:main_nonlinear}
\begin{aligned}
&\E_{\bw_{T+1}^*\sim\nu}[\ER(\hphi, \hbw_{T+1})]
\lesssim \,
\sigma^2 \left(  \frac{\gausswidth(\cF_{\cX}(\Phi))^2 +\log\frac1\delta}{n_1T} + \frac{k+\log\frac1\delta}{n_2}  \right) .
\end{aligned}
\end{equation}
\end{thm}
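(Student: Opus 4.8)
I would follow the same two-stage strategy indicated by the proof sketch of Theorem~\ref{thm:linear_main}. Stage~1: prove an \emph{in-sample} guarantee, showing that the learned pair $(\hphi,\{\hbw_t\}_{t\in[T]})$ from \eqref{eqn:mtrl_concise} fits the source data well, with the error controlled by $\gausswidth(\cF_{\cX}(\Phi))$ (this will produce the first term of \eqref{eqn:main_nonlinear}). Stage~2: transfer this guarantee to the population excess risk on the target task, using the diversity Assumption~\ref{assump:linear_diverse_source} to argue that fitting all $T$ source tasks forces $\hphi$ to approximate $\phi^*$ well along the directions relevant to the target (quantified through Assumption~\ref{assump:wT+1}). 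Throughout, the reduction to the same distribution $p=p_1=\cdots=p_{T+1}$ replaces the covariance-dominance bookkeeping of Section~\ref{sec:subspace} by the concentration Assumptions~\ref{asmp:covariance_point_concen}--\ref{asmp:covariance_unif_concen}.

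\textbf{Stage 1 (source in-sample bound).} Let $\widehat\Theta,\Theta^*\in\R^{n_1\times T}$ be the matrices whose $t$-th columns are $\hphi(X_t)\hbw_t$ and $\phi^*(X_t)\bw_t^*$ respectively, and recall $Y=\Theta^*+Z$ with $Z$ having i.i.d.\ $\cN(0,\sigma^2)$ entries. Since $(\phi^*,W^*)$ is feasible for \eqref{eqn:mtrl_concise}, optimality gives $\|Y-\widehat\Theta\|_F^2\le\|Y-\Theta^*\|_F^2$, which after substituting $Y=\Theta^*+Z$ becomes $\|\widehat\Theta-\Theta^*\|_F^2\le 2\langle Z,\widehat\Theta-\Theta^*\rangle$. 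The normalized direction $A:=(\widehat\Theta-\Theta^*)/\|\widehat\Theta-\Theta^*\|_F$ has its $t$-th column lying in $\sp([\hphi(X_t),\phi^*(X_t)])$, so $A\in\cF_{\cX}(\Phi)$ (take $\phi=\hphi,\ \phi'=\phi^*$ in \eqref{eqn:grassmanian_set}); hence $\|\widehat\Theta-\Theta^*\|_F\le 2\sup_{A\in\cF_{\cX}(\Phi)}\langle Z,A\rangle$. The right side is the supremum of a centered Gaussian process over a subset of the unit Frobenius sphere; by Definition~\ref{defn:gaussian_width} its mean is $\sigma\,\gausswidth(\cF_{\cX}(\Phi))$, and by Gaussian concentration of the $1$-Lipschitz map $Z\mapsto\sup_A\langle Z,A\rangle$ it is at most $\sigma\big(\gausswidth(\cF_{\cX}(\Phi))+\sqrt{2\log(3/\delta)}\big)$ with probability $\ge 1-\delta/3$. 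This yields $\frac1{n_1T}\|\widehat\Theta-\Theta^*\|_F^2\lesssim \frac{\sigma^2\,(\gausswidth(\cF_{\cX}(\Phi))^2+\log(1/\delta))}{n_1T}$.

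\textbf{Stage 2 (transfer to the target).} For representation functions $\phi,\phi'$ and a weight $\bw$, minimizing over $\bu$ gives $\min_{\bu}\E_{\bx\sim p}[(\bu^\top\phi(\bx)-\bw^\top\phi'(\bx))^2]=\bw^\top N\bw$, where $N$ is the Schur complement of the $\phi$-block of $\Lambda_p(\phi,\phi')$ from Definition~\ref{defn:nonlinear_cov}; put $N:=\Sigma_p(\phi^*,\phi^*)-\Sigma_p(\phi^*,\hphi)\Sigma_p(\hphi,\hphi)^\dagger\Sigma_p(\hphi,\phi^*)\succeq0$. For the \emph{source lower bound}, uniform concentration (Assumption~\ref{asmp:covariance_unif_concen} with $n_1\ge N_{\mathrm{unif}}(\repclass,p,\tfrac{\delta}{3T})$, union-bounded over $t$) gives $\frac1{n_1}\|\hphi(X_t)\hbw_t-\phi^*(X_t)\bw_t^*\|^2\ge 0.9\,[\hbw_t^\top,-\bw_t^{*\top}]\,\Lambda_p(\hphi,\phi^*)\,[\hbw_t^\top,-\bw_t^{*\top}]^\top\ge 0.9\,\bw_t^{*\top}N\bw_t^*$; summing and using $\sigma_k^2(W^*)\gtrsim T/k$ (Assumption~\ref{assump:linear_diverse_source}) yields $\frac1{n_1T}\|\widehat\Theta-\Theta^*\|_F^2\gtrsim\frac1T\Trace(N W^*W^{*\top})\gtrsim\frac1k\Trace(N)$. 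For the \emph{target upper bound}, $\hphi$ is independent of the target samples, so point-wise concentration (Assumption~\ref{asmp:covariance_point_concen} with $n_2\ge N_{\mathrm{point}}(\repclass,p,\tfrac{\delta}{3})$ applied to the pair $(\hphi,\phi^*)$) makes $\tfrac1{n_2}\hphi(X_{T+1})^\top\hphi(X_{T+1})$ and the empirical cross-covariance comparable to their population versions; a standard linear-regression analysis of the ERM $\hbw_{T+1}$ (approximation term $\bw_{T+1}^{*\top}N\bw_{T+1}^*$ plus the $k$-dimensional Gaussian term $\hphi(X_{T+1})^\top\bz$) gives $\ER(\hphi,\hbw_{T+1})\lesssim \bw_{T+1}^{*\top}N\bw_{T+1}^*+\sigma^2\frac{k+\log(1/\delta)}{n_2}$, hence $\E_{\bw_{T+1}^*\sim\nu}[\ER]\lesssim\Trace(N\,\E_{\bw\sim\nu}[\bw\bw^\top])+\sigma^2\frac{k+\log(1/\delta)}{n_2}\le \frac{O(1)}{k}\Trace(N)+\sigma^2\frac{k+\log(1/\delta)}{n_2}$ by Assumption~\ref{assump:wT+1}. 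Chaining the three bounds, $\E_{\bw_{T+1}^*\sim\nu}[\ER]\lesssim\frac1k\Trace(N)+\sigma^2\frac{k+\log(1/\delta)}{n_2}\lesssim\sigma^2\big(\frac{\gausswidth(\cF_{\cX}(\Phi))^2+\log(1/\delta)}{n_1T}+\frac{k+\log(1/\delta)}{n_2}\big)$, which is \eqref{eqn:main_nonlinear}.

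\textbf{Main obstacle.} The crux is Stage~1: uniformly controlling $\langle Z,A\rangle$ over the \emph{data-dependent} set $\cF_{\cX}(\Phi)$. The set \eqref{eqn:grassmanian_set} is engineered precisely so that this supremum equals the Gaussian width plus a sub-Gaussian fluctuation, and one must check that the random, $Z$-correlated prediction-difference direction indeed lands in it. A secondary subtlety is propagating empirical-to-population comparisons through $\hphi$, which is correlated with the source data — this is exactly why the source step must invoke \emph{uniform} concentration with a $\delta/(3T)$ union bound, while the target step (where $\hphi$ is already fixed) needs only point-wise concentration; the linear-regression transfer on the target is then routine given those concentration inputs.
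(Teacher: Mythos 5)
Your proposal is correct and takes essentially the same route as the paper: Stage 1 is exactly Claim~\ref{claim:nonlinear_in_sample}, and Stage 2 reproduces Claim~\ref{claim:nonlinear_target_guarantee} together with the final transfer, with your matrix $N$ being precisely the paper's divergence $D_p(\hphi,\phi^*)$ (the Schur complement of $\Lambda_p$, cf.\ Lemma~\ref{lem:cov_implies_div}), the same uniform-vs-pointwise concentration split with the $\delta/(3T)$ union bound, and the same use of Assumptions~\ref{assump:linear_diverse_source} and~\ref{assump:wT+1}. The only cosmetic difference is that you chain the source lower bound and target upper bound through the population quantity $\Trace(N)$, whereas the paper passes through the empirical target residual $\frac{1}{n_2}\|P^\perp_{\hphi(X_{T+1})}\phi^*(X_{T+1})\|_F^2$; the content is identical.
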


Theorem~\ref{thm:main_nonlinear} is very similar to Theorem~\ref{thm:linear_main} in terms of the result and the assumptions made.
	In the bound~\eqref{eqn:main_nonlinear}, the complexity of $\Phi$ is captured by the Gaussian width of the data-dependent set $\cF_{\cX}(\Phi)$ defined in~\eqref{eqn:grassmanian_set}.
	Data-dependent complexity measures are ubiquitous in generalization theory, one of the most notable examples being Rademacher complexity.
	Similar complexity measure also appeared in existing representation learning theory~\citep{maurer2016benefit}.
	Usually, for specific examples, we can apply concentration bounds to get rid of the data dependency, such as our result for linear representations (Theorem~\ref{thm:linear_main}).

	Our assumptions on the linear specification functions $\bw_t^*$'s are the same as in Theorem~\ref{thm:linear_main}.  The probabilistic assumption on $\bw_{T+1}^*$ can also be removed at the cost of an additional factor of $k$ in the bound -- see Remark~\ref{rem:deterministic_target}.


The proof of Theorem~\ref{thm:main_nonlinear} is given in Appendix~\ref{app:proof_nonlinear}. Here we prove an important intermediate result on the in-sample risk, which explains how the Gaussian width of $\cF_{\cX}(\Phi)$ arises.

\begin{claim}[
	analogue of Claim~\ref{claim:linear_training_guarantee}]\label{claim:nonlinear_in_sample}
	Let $\hat{\rep}$ and $\hbw_1, \ldots, \hbw_T$ be the optimal solution to \eqref{eqn:mtrl_concise}.
	Then with probability at least $1-\delta$ we have
	\begin{align*}
	&\sum_{t=1}^T \norm{\hphi(X_t) \hbw_t - \phi^*(X_t) \bw^*_t }^2 
	\lesssim\, \sigma^2 \left(\gausswidth(\cF_{\cX}(\Phi))^2 + \log\frac1\delta \right).
	\end{align*}
\end{claim}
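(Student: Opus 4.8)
The plan is to run the standard ``basic inequality'' argument for least squares and then localize so that the Gaussian width of $\cF_{\cX}(\Phi)$ appears exactly where it should. First I would set up notation: for each $t\in[T]$ write $M_t := \hphi(X_t)\hbw_t - \phi^*(X_t)\bw_t^* \in\R^{n_1}$, collect these columns into $M := [M_1,\dots,M_T]\in\R^{n_1\times T}$, and similarly stack the noise vectors into $Z := [\bz_1,\dots,\bz_T]\in\R^{n_1\times T}$, so that by the data assumption $\by_t = \phi^*(X_t)\bw_t^* + \bz_t$ with $Z$ having i.i.d.\ $\cN(0,\sigma^2)$ entries independent of all the inputs. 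Because $(\hphi,\hbw_1,\dots,\hbw_T)$ minimizes~\eqref{eqn:mtrl_concise}, its objective value is no larger than that of the ground truth $(\phi^*,\bw_1^*,\dots,\bw_T^*)$, which after substituting $\by_t=\phi^*(X_t)\bw_t^*+\bz_t$ and expanding gives
\[
\sum_{t=1}^T \norm{M_t}^2 \;\le\; 2\sum_{t=1}^T \langle \bz_t, M_t\rangle \;=\; 2\langle Z, M\rangle .
\]
Since $\norm{M}_F^2=\sum_t\norm{M_t}^2$, it now suffices to prove $\langle Z,M\rangle \lesssim \sigma\norm{M}_F\big(\gausswidth(\cF_{\cX}(\Phi)) + \sqrt{\log(1/\delta)}\big)$: dividing by $\norm{M}_F$ (the case $M=0$ being vacuous) and squaring yields the claim.

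Next I would identify where $M$ lives. By construction each column obeys $M_t \in \sp\big([\hphi(X_t),\,\phi^*(X_t)]\big)$, and crucially the \emph{same} pair of global representations $(\hphi,\phi^*)\in\Phi\times\Phi$ works for every $t$ simultaneously; hence $M/\norm{M}_F \in \cF_{\cX}(\Phi)$, the data-dependent set in~\eqref{eqn:grassmanian_set}. Consequently $\langle Z,M\rangle \le \norm{M}_F\cdot \sup_{A\in\cF_{\cX}(\Phi)}\langle Z, A\rangle$, and it remains to control this supremum. The key observation is that $\cF_{\cX}(\Phi)$ is a function of the inputs $X_1,\dots,X_T$ only, hence independent of $Z$. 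I would therefore condition on $\cX$: then $\cF_{\cX}(\Phi)$ is a fixed subset of the Frobenius-unit sphere, $Z/\sigma$ is a standard Gaussian matrix, so by definition $\E\big[\sup_{A\in\cF_{\cX}(\Phi)}\langle Z/\sigma, A\rangle \mid \cX\big] = \gausswidth(\cF_{\cX}(\Phi))$; moreover $Z\mapsto \sup_{A\in\cF_{\cX}(\Phi)}\langle Z,A\rangle$ is $\sigma$-Lipschitz in $Z$ (with respect to $\norm{\cdot}_F$) because every $A$ in the set has unit Frobenius norm, so Gaussian concentration gives $\sup_{A\in\cF_{\cX}(\Phi)}\langle Z,A\rangle \le \sigma\big(\gausswidth(\cF_{\cX}(\Phi)) + \sqrt{2\log(1/\delta)}\big)$ with conditional probability at least $1-\delta$, and hence unconditionally with probability at least $1-\delta$. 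Combining with the basic inequality and the self-bounding step gives $\norm{M}_F \lesssim \sigma\big(\gausswidth(\cF_{\cX}(\Phi)) + \sqrt{\log(1/\delta)}\big)$, and squaring proves the claim.

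The argument is mostly bookkeeping once the set $\cF_{\cX}(\Phi)$ is in hand, so there is no single ``hard part''; the point that needs genuine care is the data-dependence of $\cF_{\cX}(\Phi)$, which is benign only because $\cF_{\cX}(\Phi)$ depends on the inputs alone and is therefore independent of the label noise $Z$ — this is exactly what lets us pass from the random supremum to the deterministic quantity $\gausswidth(\cF_{\cX}(\Phi))$ after conditioning. A secondary point is to respect the ``one pair $(\phi,\phi')$ for all $t$'' structure in the definition of $\cF_{\cX}(\Phi)$, which is automatic here since $\hphi$ and $\phi^*$ are single functions rather than per-task objects. Tracking constants (the $2$ from $2\langle Z,M\rangle$, the $\sqrt2$ in Gaussian concentration, and $(a+b)^2\le 2a^2+2b^2$) turns the $\lesssim$ into an explicit absolute constant if desired.
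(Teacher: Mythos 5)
Your proposal is correct and follows essentially the same route as the paper's proof: the basic inequality from optimality, the observation that $M/\norm{M}_F \in \cF_{\cX}(\Phi)$, and Gaussian width plus Lipschitz Gaussian concentration to bound $\sup_{A\in\cF_{\cX}(\Phi)}\langle Z,A\rangle$. Your explicit conditioning on $\cX$ to justify independence of $\cF_{\cX}(\Phi)$ from $Z$ is a point the paper leaves implicit, and is a welcome clarification.
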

\begin{proof}
	By the optimality of $\hat{\rep}$ and $\hbw_1, \ldots, \hbw_T$ for \eqref{eqn:mtrl_concise}, we know
	\begin{align*}
	\sum_{t=1}^T \norm{\by_t - \hphi(X_t) \hbw_t}^2 \le \sum_{t=1}^T \norm{\by_t - \phi^*(X_t) \bw^*_t}^2.
	\end{align*}
	Plugging in $\by_t = \phi^*(X_t) \bw^*_t + \bz_t$ ($\bz_t \sim \cN(0, I)$ is independent of $X_t$), we get
	\begin{align*}
	\sum_{t=1}^T \norm{\phi^*(X_t) \bw^*_t + \bz_t - \hphi(X_t) \hbw_t}^2 \le \sum_{t=1}^T \norm{\bz_t}^2,
	\end{align*}
	which gives
	\begin{align*}
	&\sum_{t=1}^T \norm{ \hphi(X_t) \hbw_t - \phi^*(X_t) \bw^*_t }^2 
	\le\, 2\sum_{t=1}^t \langle \bz_t, \hphi(X_t) \hbw_t - \phi^*(X_t) \bw^*_t \rangle .
	\end{align*}
	Denote $Z = [\bz_1, \cdots, \bz_T]\in\R^{n_1\times T}$ and $A = [\ba_1, \cdots, \ba_T] \in \R^{n_1\times T}$ where $\ba_t = \hphi(X_t) \hbw_t - \phi^*(X_t) \bw^*_t $.
	Then the above inequality reads
	$\norm{A}_F^2 \le 2 \langle Z, A \rangle$.
	Notice that $\frac{A}{\norm{A}_F} \in \cF_{\cX}(\Phi)$ (c.f.~\eqref{eqn:grassmanian_set}).
	It follows that
	\begin{align} \label{eqn:nonlinear_in_sample_proof}
	\norm{A}_F \le 2 \left\langle Z, \frac{A}{\norm{A}_F}  \right\rangle
	\le 2 \sup_{\bar A \in \cF_{\cX}(\Phi)} \langle Z, \bar A \rangle .
	\end{align}
	By definition, we have $\E_{Z} \left[ \sup_{\bar A \in \cF_{\cX}(\Phi)} \langle \sigma^{-1}Z, \bar A \rangle \right] = \gausswidth(\cF_{\cX}(\Phi))$.
	Furthermore, since the function $Z \mapsto \sup_{\bar A \in \cF_{\cX}(\Phi)} \langle Z, \bar A \rangle $ is $1$-Lipschitz in Frobenius norm, by the standard Gaussian concentration inequality, we have with probability at least $1-\delta$,
	\begin{align*}
	\sup_{\bar A \in \cF_{\cX}(\Phi)} \langle \sigma^{-1} Z, \bar A \rangle
	&\le \E\left[ \sup\limits_{\bar A \in \cF_{\cX}(\Phi)} \langle \sigma^{-1}Z, \bar A \rangle \right] + \sqrt{\log\frac1\delta} 
	= \gausswidth(\cF_{\cX}(\Phi)) + \sqrt{\log\frac1\delta} .
	\end{align*}
	Then the proof is completed using~\eqref{eqn:nonlinear_in_sample_proof}.
\end{proof}

	\section{High-Dimensional Linear Representations}
	\label{sec:kernel}
	In this section, we consider the case where the representation is a general linear map without an explicit dimensionality constraint, and we will prove a norm-based result by exploiting the \emph{intrinsic dimension} of the representation. Such a generalization is desirable since in many applications the representation dimension is not restricted.

Without loss of generality, we let the representation function class be $\Phi = \{\bx\mapsto B^\top \bx \mid B\in \R^{d\times T}\}$. We note that a dimension-$T$ representation is sufficient for learning $T$ source tasks and any choice of dimension greater than $T$ will not change our argument. 
We use the same notation from Section~\ref{sec:subspace} unless otherwise specified.   

In this section we additionally assume that all tasks have the same input covariance:
\begin{assumption} \label{assump:same-input-cov}
	The input distributions in all tasks satisfy $\Sigma_1 = \cdots = \Sigma_{T+1} = \Sigma$.
\end{assumption}
Note that each task $t$ still has its own specialization function $\bw_t^*$ (c.f.~\eqref{eqn:data_assump}).
We remark that there are many interesting and nontrivial scenarios under Assumption~\ref{assump:same-input-cov} -- for example, consider the case where the inputs in each task are all images from ImageNet and each task asks whether the image is from a specific class. 

Since we do not have a dimensionality constraint, we modify~\eqref{eqn:linear_rl_compact} by adding norm constraints:
\begin{align} \label{eqn:regularized_mtrl}
(\hat{B}, \hat{W}) \leftarrow  \argmin_{B\in \R^{d\times T},W \in \R^{T\times T}}  & \frac{1}{2n_1} \|Y -\cX (BW)\|_F^2+\frac{\lambda}{2}\|W\|_F^2+\frac{\lambda}{2}\|B\|^2.
\end{align}
For the target task, we also modify~\eqref{eqn:linear_test_task_opt} by adding a norm constraint:
\begin{equation} \label{eqn:kernel_target_constrained}
\hat{\bw}_{T+1} \leftarrow \argmin_{\|\bw\|\le r}\frac{1}{2n_2}\|X_{T+1}\hB \bw - \by_{T+1}\|^2. 
\end{equation}
We will specify the choices of regularization, i.e., $\lambda$ and $r$ in Theorem~\ref{thm:kernel_main}.

Similar to Section~\ref{sec:subspace}, the source task data relation is denoted as $Y=\cX(\Theta^*)+Z$, where $\Theta^* \in \R^{d\times T}$ is the ground truth and $Z$ has i.i.d. $\cN(0, \sigma^2)$ entries.
Suppose that the target task data satisfy $\by_{T+1}=X_{T+1} \btheta_{T+1}^*+\bz_{T+1}\in \R^{n_2}$. 
 Similar to the setting in Section \ref{sec:subspace}, we assume the target task data is subgaussian as in Assumption \ref{assump:linear_subgaussian}.

\begin{theorem}[main theorem for high-dimensional representations]
	\label{thm:kernel_main}
Fix a failure probability $\delta\in (0,1)$. Under Assumptions~\ref{assump:linear_subgaussian} and~\ref{assump:same-input-cov}, we further assume 
$n_1\ge n_2$,  $R=\|\Theta\|_*.$ Let $r=2\sqrt{R/T}$, $ \bar R=R/\sqrt{T}$ and proper $\lambda$ specified in Lemma \ref{claim:kernel_source_concentration}. 
Let the target task model $\btheta_{T+1}^*$ be coherent with the source task models $\Theta^*$ in the sense that
$\btheta_{T+1}^*\sim \nu=\cN(\bm0,\Theta^*(\Theta^*)^\top /T)$. 
Then with probability at least $1-\delta$ over the samples, the expected excess risk of the learned predictor $\bx \mapsto \hat{\bw}_{T+1}^\top \hat{B}^\top \bx$ on the target task satisfies:
\begin{align}
\label{eqn:main_kernel} 
\E_{\btheta_{T+1}^*\sim \nu } [\ER( \hB, \hbw_{T+1} )] \le \sigma \bar R \cdot  \tilde O\left(\frac{\sqrt{\Trace(\Sigma)}}{\sqrt{n_1T}}+\frac{\sqrt{\|\Sigma\|_2}}{\sqrt{n_2}} \right)+\zeta_{n_1,n_2},
\end{align}
where $\zeta_{n_1,n_2}:=\rho^4\bar R^2 \tilde O\left(\frac{\Trace(\Sigma)}{n_1}+\frac{\|\Sigma\|}{n_2}\right)$ is lower-order terms due to randomness of the input data. Here $\tilde{O}$ hides logarithmic factors.
\end{theorem}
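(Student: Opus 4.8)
The plan is to reduce the source-task program \eqref{eqn:regularized_mtrl} to nuclear-norm-regularized least squares, derive an in-sample (then population) error bound for the learned operator $\hat\Theta:=\hat B\hat W$, and finally transfer this to the target task by exhibiting a good constrained linear predictor living in the column span of $\hat B$.

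\textbf{Step 1 (source analysis).} Using the variational identity $\|\Theta\|_*=\min_{\Theta=BW}\tfrac12(\|B\|_F^2+\|W\|_F^2)$ one checks (as in the commented derivation of the half-nuclear-norm lemma) that $\hat\Theta=\hat B\hat W$ minimizes $\tfrac1{2n_1}\|Y-\cX(\Theta)\|_F^2+\lambda\|\Theta\|_*$ and that the returned factorization is balanced, $\|\hat B\|_F^2=\|\hat W\|_F^2=\|\hat\Theta\|_*$. From the optimality inequality and $Y=\cX(\Theta^*)+Z$ we get $\tfrac1{2n_1}\|\cX(\hat\Theta-\Theta^*)\|_F^2\le \tfrac1{n_1}\langle Z,\cX(\hat\Theta-\Theta^*)\rangle+\lambda(\|\Theta^*\|_*-\|\hat\Theta\|_*)$, and choosing $\lambda\ge \tfrac2{n_1}\|\cX^*(Z)\|_2$ (where $\cX^*(Z)=[X_1^\top\bz_1,\dots,X_T^\top\bz_T]$) bounds the cross term by $\tfrac\lambda2\|\hat\Theta-\Theta^*\|_*$, yielding the deterministic consequences $\|\hat\Theta\|_*\le 3R$ and $\tfrac1{n_1}\|\cX(\hat\Theta-\Theta^*)\|_F^2\lesssim \lambda R$. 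The admissible choice of $\lambda$ is the content of Lemma~\ref{claim:kernel_source_concentration}: after concentrating $\tfrac1{n_1}X_t^\top X_t$ to $\Sigma$ (Assumption~\ref{assump:linear_subgaussian}), each column of $\tfrac1{n_1}\cX^*(Z)$ is approximately $\cN(\bm0,\tfrac{\sigma^2}{n_1}\Sigma)$ and independent across $t$, so $\tfrac1{n_1}\|\cX^*(Z)\|_2\lesssim \tfrac{\sigma}{\sqrt{n_1}}(\sqrt{\Trace(\Sigma)}+\sqrt{T\|\Sigma\|})$ up to logs, and $\lambda\approx \tfrac{\sigma}{\sqrt{n_1}}\sqrt{\Trace(\Sigma)}$ (the $\sqrt{T\|\Sigma\|}$ piece and the $\rho^4$ error from $\tfrac1{n_1}X_t^\top X_t\approx\Sigma$ turn out lower order). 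A uniform concentration of $\tfrac1{n_1}\|\cX(\Delta)\|_F^2$ to $\|\Sigma^{1/2}\Delta\|_F^2$ over the nuclear-norm ball $\{\Delta:\|\Delta\|_*\le 4R\}$ then converts this to the population bound $\|\Sigma^{1/2}(\hat\Theta-\Theta^*)\|_F^2\lesssim \lambda R+T\zeta_{n_1}$, with $\zeta_{n_1}=\rho^4\bar R^2\tilde O(\Trace(\Sigma)/n_1)$.

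\textbf{Step 2 (transfer / bias term).} Write $\btheta_{T+1}^*=\tfrac1{\sqrt T}\Theta^*\bg$ with $\bg\sim\cN(\bm0,I_T)$ (using that $\nu=\cN(\bm0,\Theta^*(\Theta^*)^\top/T)$), and take the comparison predictor $\bw^\sharp:=\tfrac1{\sqrt T}\hat W\bg$, so $\hat B\bw^\sharp-\btheta_{T+1}^*=\tfrac1{\sqrt T}(\hat\Theta-\Theta^*)\bg$. The balanced factorization gives $\E_{\bg}\|\bw^\sharp\|^2=\tfrac1T\|\hat W\|_F^2=\tfrac1T\|\hat\Theta\|_*\le 3R/T<r^2$, so up to a Gaussian tail event on which the excess risk is bounded crudely by $\|\Sigma^{1/2}\btheta_{T+1}^*\|^2$ and contributes only lower-order terms, $\bw^\sharp$ is feasible in \eqref{eqn:kernel_target_constrained}, and its population excess risk is $\tfrac12\E_{\bg}\|\Sigma^{1/2}(\hat B\bw^\sharp-\btheta_{T+1}^*)\|^2=\tfrac1{2T}\|\Sigma^{1/2}(\hat\Theta-\Theta^*)\|_F^2\lesssim \tfrac{\lambda R}{T}+\zeta_{n_1}\lesssim \tfrac{\sigma\bar R\sqrt{\Trace(\Sigma)}}{\sqrt{n_1T}}+\zeta_{n_1}$, which is the first term of \eqref{eqn:main_kernel}.

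\textbf{Step 3 (target estimation error).} Since $\hat\bw_{T+1}$ minimizes the empirical target loss over $\{\|\bw\|\le r\}$ and $\bw^\sharp$ is feasible, the basic inequality applied to $\Delta\bw:=\hat\bw_{T+1}-\bw^\sharp$ (with $\|\Delta\bw\|\le 2r$) gives $\tfrac1{n_2}\|X_{T+1}\hat B\Delta\bw\|^2\le \tfrac2{n_2}\langle\bz_{T+1},X_{T+1}\hat B\Delta\bw\rangle+\tfrac2{n_2}\langle X_{T+1}(\btheta_{T+1}^*-\hat B\bw^\sharp),X_{T+1}\hat B\Delta\bw\rangle$; the second inner product is absorbed by $\tfrac12\cdot\tfrac1{n_2}\|X_{T+1}\hat B\Delta\bw\|^2$ plus (after concentration) twice the Step-2 bias term. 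For the noise term, conditionally on $(\hat B,X_{T+1})$ the vector $\hat B^\top X_{T+1}^\top\bz_{T+1}$ is Gaussian with covariance $\sigma^2\hat B^\top X_{T+1}^\top X_{T+1}\hat B$, so $\sup_{\|\bw\|\le 2r}\langle\bz_{T+1},X_{T+1}\hat B\bw\rangle=2r\|\hat B^\top X_{T+1}^\top\bz_{T+1}\|\lesssim \sigma r\|X_{T+1}\hat B\|_F\approx \sigma r\sqrt{n_2}\|\Sigma^{1/2}\hat B\|_F$ with high probability, and $\|\Sigma^{1/2}\hat B\|_F^2=\Trace(\hat B^\top\Sigma\hat B)\le\|\Sigma\|\,\|\hat B\|_F^2\le 3R\|\Sigma\|$; with $r=2\sqrt{R/T}$ this contributes $\lesssim \sigma\bar R\sqrt{\|\Sigma\|/n_2}$. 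Uniformly converting $\tfrac1{n_2}\|X_{T+1}\hat B\bw\|^2$ to $\|\Sigma^{1/2}\hat B\bw\|^2$ over $\|\bw\|\le 2r$ (Assumption~\ref{assump:linear_subgaussian}, $n_2$ exceeding the intrinsic dimension $\Trace(\Sigma)/\|\Sigma\|$) gives $\|\Sigma^{1/2}\hat B\Delta\bw\|^2\lesssim \sigma\bar R\sqrt{\|\Sigma\|/n_2}+(\text{bias})+\zeta_{n_2}$ with $\zeta_{n_2}=\rho^4\bar R^2\tilde O(\|\Sigma\|/n_2)$. Decomposing $\ER=\tfrac12\E\|\Sigma^{1/2}(\hat B\hat\bw_{T+1}-\btheta_{T+1}^*)\|^2\le \E\|\Sigma^{1/2}\hat B\Delta\bw\|^2+\E\|\Sigma^{1/2}(\hat B\bw^\sharp-\btheta_{T+1}^*)\|^2$ and adding Steps 2--3, using $n_1\ge n_2$ to absorb the stray $\sigma\bar R\sqrt{\|\Sigma\|/n_1}$ arising from the $\sqrt{T\|\Sigma\|}$ part of $\lambda$, yields \eqref{eqn:main_kernel}.

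\textbf{Main obstacle.} I expect the heavy lifting to be the two uniform population-level concentration statements — transferring the source in-sample bound to $\|\Sigma^{1/2}(\hat\Theta-\Theta^*)\|_F^2$ over the nuclear-norm ball $\{\|\Delta\|_*\le 4R\}$, and the analogous target-side statement over $\{\|\bw\|\le 2r\}$ — which is precisely where Assumption~\ref{assump:linear_subgaussian} and the intrinsic dimension $\Trace(\Sigma)/\|\Sigma\|$ enter and produce the lower-order term $\zeta_{n_1,n_2}$; a secondary technical point is verifying that the comparator $\bw^\sharp$ respects the constraint radius $r$ (handled via the balanced factorization $\|\hat W\|_F^2=\|\hat\Theta\|_*\le 3R$ plus a crude treatment of the Gaussian tail event).
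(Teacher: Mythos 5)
Your proposal is correct and follows essentially the same route as the paper's proof: a nuclear-norm-regularized source guarantee with $\lambda \gtrsim \frac{1}{n_1}\|\cX^*(Z)\|_2$ bounded via intrinsic-dimension matrix Bernstein, transfer to the target through the coupling $\btheta_{T+1}^* = \Theta^*\bg/\sqrt{T}$, a constrained target basic inequality whose noise term is $r\|\hat B^\top X_{T+1}^\top \bz_{T+1}\| \lesssim \sigma r\sqrt{n_2 R\|\Sigma\|}$, and the matrix deviation inequality supplying the $\zeta_{n_1,n_2}$ lower-order terms, with the stray $\sigma\bar R\sqrt{\|\Sigma\|/n_1}$ absorbed using $n_1\ge n_2$. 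The only real deviation is your comparator $\hat W\bg/\sqrt T$ in place of the paper's population ridge solution $\bw_2^\lambda = S_\lambda \btheta_{T+1}^*$ and its auxiliary losses (Lemma~\ref{claim:kernel_source_concentration}, Claim~\ref{claim:source_target_connection}); note that both comparators satisfy the radius constraint $\|\bw\|\le r$ only on average over $\bg$ rather than pointwise, a gap the paper itself glosses over, so your treatment is at the same level of rigor as the paper's.
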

The proof of Theorem~\ref{thm:kernel_main} is given in Appendix~\ref{app:proof_kernel}. 
Note that $\|\Theta^*\|_F = \sqrt{T}$ when each $\btheta^\ast_t$ is of unit norm. Thus $\bar R = \norm{\Theta^*}_*/\sqrt{T}$ should generally be regarded as $O(1)$ for a well-behaved $\Theta^*$ that is nearly low-dimensional. 
In this regime, Theorem~\ref{thm:kernel_main} indicates that we are able to exploit all $n_1T$ samples from the source tasks, similar to Theorem~\ref{thm:linear_main}.

With a good representation, the sample complexity on the target task can also improve over learning the target task from scratch.
Consider the baseline of regular ridge regression directly applied to the target task data:
\begin{equation}
\hat{\btheta} \leftarrow \argmin_{\|\btheta\|\leq \|\btheta^*_{T+1}\|}\frac{1}{2n_2}\|X_{T+1}\btheta-\by_{T+1}\|^2.  
\end{equation} 
Its standard excess risk bound in fixed design is
$
\ER(\hat{\btheta}^\lambda )\lesssim  \sigma\sqrt{\frac{\|\btheta_{T+1}^*\|_2^2\Trace(\Sigma)}{n_2}}.  
$
(See e.g. \citet{hsu2012random}.) Taking expectation over $\btheta_{T+1}^*\sim \nu =\cN(\bm0,\Theta^*(\Theta^*)^\top /T)$, we obtain
\begin{equation}
\label{eqn:excess_ridge}
\E_{\btheta_{T+1}^*\sim \nu} [\ER(\hat{\btheta}^\lambda )]\lesssim \sigma \frac{\|\Theta^*\|_F}{\sqrt{T}} \sqrt{\frac{\Trace(\Sigma)}{n_2}}.  
\end{equation}
Compared with~\eqref{eqn:excess_ridge}, our bound~\eqref{eqn:main_kernel} is an improvement as long as $\frac{\norm{\Theta^*}_*^2}{\norm{\Theta^*}_F^2} \ll \frac{\Trace(\Sigma)}{\norm{\Sigma}}$. The left hand side $\frac{\norm{\Theta^*}_*^2}{\norm{\Theta^*}_F^2}$ is always no more than the rank of $\Theta^*$, and we call it the \emph{intrinsic rank}.
Hence we see that we can gain from representation learning if the source predictors are intrinsically low dimensional.


To intuitively understand how this is achieved, we note that a representation $B$ is reweighing linear combinations of the features according to their ``importance'' on the $T$ source tasks. We make an analogy with a simple case of feature selection. Suppose we have learned a representation vector $\bb$ where $b_i$ scales with the importance of the $i$-th feature, i.e., the representation is $\phi(\bx)=\bx\odot \bb$ (entry-wise product).
Then ridge regression on the target task data $(X, \by)$, $ \minimize_{\|\bw\|\le r}\frac{1}{2n_2}\|X\cdot\diag(\bb)\cdot\bw-\by\|_2^2$, is equivalent to $ \minimize_{\norm{ \diag(\bb)^{-1}\bv}  \le r}\frac{1}{2n_2}\|X\bv-\by\|_2^2.$
From the above equation, we see that the features with large $|b_i|$ (those that were useful on the source tasks) will be more heavily used than the ones with small $|b_i|$ due to the reweighed $\ell_2$ constraint.
Thus the important features are learned from the source tasks, and the coefficients are learned from the target task.





\begin{remark}[The non-convex landscape]
	Although the optimization problem~\eqref{eqn:regularized_mtrl} is non-convex, its structure allows us to apply existing landscape analysis of matrix factorization problems~\citep{haeffele2014structured} and to show that it has the nice properties of no strict saddles and no bad local minima. Therefore, randomly initialized gradient descent or perturbed gradient descent are guaranteed to converge to a global minimum of~\eqref{eqn:regularized_mtrl}~\citep{ge2015escaping, lee2016gradient, jin2017escape}.
\end{remark}

		\section{Neural Networks}
\label{sec:nn}
In this section, we show that we can provably learn good representations in a neural network.

Consider a two-layer ReLU neural network $
f_{B, \bw}(\bx) = \bw^\top(B^\top \bx)_+$,
where $\bw\in \R^{d}, B \in \mathbb{R}^{d_0 \times d}$ and $\bx\in\R^{d_0}$.
Here $(\cdot)_+$ is the ReLU activation $(z)_+ = \max\{0,z\}$ defined element-wise.
Namely, we let the representation function class be $\Phi=\{\bx\rightarrow (B^\top \bx)_+ |B\in \R^{d_0\times d}\}$. 
On the source tasks we use the square loss with weight decay regularizer:\footnote{\citet{wei2019regularization} show that  \eqref{eq:nn-weight-decay} can be minimized in polynomial iteration complexity using perturbed gradient descent, though potentially exponential width is required.}
\begin{align}
(\hat B , \hat W) \leftarrow \argmin_{B\in \R^{d_0\times d},W=[\bw_1,\cdots \bw_T]\in \R^{d\times T} } \frac{1}{2n_1T} \sum_{t=1}^T\|\by_{t} - (X_t B)_+ \bw_t\|^2+\frac{\lambda}{2} \|B\|_F^2 +\frac{\lambda}{2} \|W\|_F^2. 
\label{eq:nn-weight-decay}
\end{align}


On the target task, we simply re-train the output layer while fixing the hidden layer weights:
\begin{equation} \label{eqn:nn_target_constrained}
\hat{\bw}_{T+1} \leftarrow \argmin_{\|\bw\|\le r}\frac{1}{2n_2}\|\by_{T+1} - (X_{T+1}\hat{B})_+\bw\|^2.
\end{equation}

\begin{assumption} \label{assump:nn-input}
All tasks share the same input distribution: $p_1=\cdots=p_{T+1}=p$. 
We redefine $\Sigma$ to be the covariance operator of the feature induced by ReLU, i.e., it is a kernel defined by $\Sigma(\bu,\bv) = \E_{\bx \sim p}[ (\bu^\top \bx)_+ (\bv^\top\bx)_+]$, for $\bu, \bv$ on the unit sphere $\mathbb S^{d_0-1} \subset \R^{d_0}$.
\end{assumption}

\begin{assumption}[teacher network]
\label{assump:teacher_network}
Assume for the source tasks that $\by_t= (X_tB^*)_+\bw_t^* +\bz_t$ is generated by a teacher network with parameters $B^\ast \in\mathbb{R}^{d_0\times d}, W^\ast=[\bw_1^*,\cdots,\bw_T^*] \in \mathbb{R}^{d\times T}$, and noise term $\bz_t\sim \cN(0,\sigma^2 I)$. A standard lifting of the neural network is: $f_{\alpha_{t}}=\langle \alpha_{t},\phi(\bx)\rangle$ where $\phi(\bx):\mathbb{S}^{d_0-1}\rightarrow\R, \phi(\bx)_{\bb}=(\bb^\top \bx)_+$ is the feature map, i.e., for each task, $\alpha_t(\bb_i/\|\bb_i\|)=W_{i,t}\|\bb_i\|$ and is zero elsewhere. We assume $\alpha_{T+1}$ that describes the target function to follow a Gaussian process $\mu$ with covariance function $K(\bb,\bb')=\sum_{t=1}^{T} \alpha_t(\bb)\alpha_t(\bb')$. 
\end{assumption}

\begin{theorem}
Fix a failure probability $\delta\in (0,1)$. Under Assumptions~\ref{assump:linear_subgaussian}, \ref{assump:nn-input} and~\ref{assump:teacher_network}, let $n_1\geq n_2$, 
$\bar R=(\frac{1}{2}\|B^*\|_F^2+\frac{1}{2}\|W^*\|_F^2)/\sqrt{T}$. 
Let the target task model $f_{\alpha_{T+1}}=\langle \alpha_{T+1}, \phi(\bx)\rangle$ be coherent with the source task models in the sense that
$\alpha_{T+1}^*\sim \nu$. Set $r^2= (\|B^*\|_F^2+\|W^*\|_F^2)/T$.
Then with probability at least $1-\delta$ over the samples, the expected excess risk of the learned predictor $\bx \mapsto \hat{\bw}_{T+1}^\top (\hat{B}^\top\bx)_+$ on the target task satisfies:	
	\begin{align}
	\label{eqn:main_nn} 
	\E_{\alpha_{T+1}\sim \nu} [\ER( f_{\hat B, \hat \bw_{T+1}} )] \le \sigma \bar R \cdot \tilde O\left(\frac{\sqrt{\Trace(\Sigma)}}{\sqrt{n_1T}}+\frac{\sqrt{\|\Sigma\|_2}}{\sqrt{n_2}} \right) +\zeta_{n_1,n_2},
	\end{align} 
	where $\zeta_{n_1,n_2}:=\rho^4\bar R^2 \tilde O(\frac{\Trace(\Sigma)}{n_1}+\frac{\|\Sigma\|}{n_2})$ is lower-order term due to randomness of the input data. 
	 \label{thm:nn}
\end{theorem}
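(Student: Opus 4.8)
The plan is to reduce the theorem to the argument behind Theorem~\ref{thm:kernel_main} via the \emph{lifting} of the two-layer ReLU network introduced in Assumption~\ref{assump:teacher_network}. Writing $B=[\bb_1,\dots,\bb_d]$ and $W=[\bw_1,\dots,\bw_T]$, the prediction $\bw_t^\top(B^\top\bx)_+=\sum_i W_{i,t}\|\bb_i\|\,(\bar\bb_i^\top\bx)_+$ depends on $(B,W)$ only through the signed atomic measures $\alpha_t=\sum_i W_{i,t}\|\bb_i\|\,\delta_{\bar\bb_i}$ on $\mathbb{S}^{d_0-1}$, i.e.\ through the lifted operator $A=[\alpha_1,\dots,\alpha_T]$ acting on the ReLU feature map $\phi(\bx)_\bb=(\bb^\top\bx)_+$. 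A per-neuron rescaling $\bb_i\mapsto c_i\bb_i$, $W_{i,\cdot}\mapsto W_{i,\cdot}/c_i$ leaves the network unchanged while $\tfrac12\|B\|_F^2+\tfrac12\|W\|_F^2$ varies, and minimizing over the rescalings shows that the weight-decay penalty equals a convex atomic norm $\|A\|_\diamond=\inf\{\sum_j\|\vct{c}_j\|_2: A=\sum_j\delta_{\bb_j}\otimes\vct{c}_j,\ \bb_j\in\mathbb{S}^{d_0-1}\}$, which the teacher attains at value $R:=\tfrac12\|B^*\|_F^2+\tfrac12\|W^*\|_F^2$ (so $\bar R=R/\sqrt T$). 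This is the exact analogue of the matrix identity $\|\Theta\|_*=\min_{\Theta=UV}\{\tfrac12\|U\|_F^2+\tfrac12\|V\|_F^2\}$ used in Section~\ref{sec:kernel}; hence~\eqref{eq:nn-weight-decay} is equivalent to the $\|\cdot\|_\diamond$-regularized least squares $\min_A\tfrac1{2n_1}\|Y-\Phi_\cX(A)\|_F^2+\lambda\|A\|_\diamond$ in the ReLU feature space, with $\Sigma$ of Assumption~\ref{assump:nn-input} playing the role of the input covariance. The remainder of the proof then follows Appendix~\ref{app:proof_kernel} with $\|\cdot\|_*$ replaced by the atomic norm $\|\cdot\|_\diamond$ and $\Sigma$ reinterpreted as the ReLU covariance operator.

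Step~1 (source in-sample / representation error): from the optimality of $(\hat B,\hat W)$ for~\eqref{eq:nn-weight-decay}, writing $\hat A$ for the learned lifted operator and $Y=\Phi_\cX(A^*)+Z$, the basic inequality for regularized least squares gives, whenever $\lambda$ dominates the dual norm $\tfrac1{n_1}\|\Phi_\cX^{\ast}(Z)\|$ of the noise gradient, both the representation-error bound $\tfrac1{n_1T}\|(\hat S-I)\Phi_\cX(A^*)\|_F^2\lesssim\lambda R/T$ and a complexity bound $\|\hat A\|_\diamond\lesssim R$ on the learned representation, where $\hat S$ is the smoothing operator onto the learned ReLU features. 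The admissible scale $\lambda\asymp\tfrac\sigma{\sqrt{n_1}}(\sqrt{\Trace(\Sigma)}+\sqrt{T\|\Sigma\|})$, up to logarithmic factors, is the ReLU analogue of Lemma~\ref{claim:kernel_source_concentration}, proved by matrix Bernstein applied to the ReLU-feature-weighted Gaussian noise, with Assumption~\ref{assump:linear_subgaussian} controlling the per-direction variances $\E[(\bb^\top\bx)_+^2]\le\|\Sigma\|$ and the trace $\Trace(\Sigma)$.

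Step~2 (transfer to the target and combine): condition on $\hat B$, hence on $\hat S$, which is independent of the target noise $\bz_{T+1}$. For the constrained target regression~\eqref{eqn:nn_target_constrained} with $r^2=(\|B^*\|_F^2+\|W^*\|_F^2)/T$, decompose $\ER(f_{\hat B,\hat\bw_{T+1}})$ into an approximation term (how well the learned ReLU features represent the target lifted parameter $\alpha_{T+1}$ in the $\hat B$-induced geometry) and a noise term $\tfrac1{n_2}\|\hat S\bz_{T+1}\|^2$. The coherence of Assumption~\ref{assump:teacher_network} handles the approximation term: since $\alpha_{T+1}\sim\nu$ is a Gaussian process with covariance $\sum_t\alpha_t(\cdot)\alpha_t(\cdot)$, it equals in law $\sum_{t=1}^T g_t\alpha_t$ with $\bg\sim\cN(0,I_T)$, so $\E_{\alpha_{T+1}\sim\nu}$ of the approximation term becomes a sum over $t$ of squared residuals of $\alpha_t$ against the learned features evaluated on $X_{T+1}$; by Assumption~\ref{assump:nn-input} and concentration of the empirical ReLU covariance to $\Sigma$ this reduces, up to the $T$-dependent normalization absorbed into $\bar R$, to the Step~1 source error $\lesssim\lambda R/T$. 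The noise term, by independence and a Gaussian-quadratic tail bound, is $\lesssim\sigma^2\Trace(\hat S\hat S^\top)/n_2$, whose effective dimension is controlled via $\|\hat A\|_\diamond\lesssim R$, the constraint $\|\bw\|\le r$, and a further ReLU-covariance concentration step, yielding $\lesssim\sigma\bar R\sqrt{\|\Sigma\|/n_2}$ plus the lower-order $\rho^4\bar R^2\|\Sigma\|/n_2$. Substituting $\lambda$ into $\lambda R/T$ gives $\sigma\bar R\,\tilde O(\sqrt{\Trace(\Sigma)/(n_1T)}+\sqrt{\|\Sigma\|/n_1})$, and since $n_1\ge n_2$ the $\sqrt{\|\Sigma\|/n_1}$ piece is absorbed into $\sqrt{\|\Sigma\|/n_2}$; collecting the two surviving terms together with the input-randomness lower-order term $\zeta_{n_1,n_2}$ yields~\eqref{eqn:main_nn}.

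The main obstacle is making the infinite-dimensional lifting rigorous: the feature map $\phi(\bx)$ is indexed by the whole sphere, so $\Phi_\cX(A)$, the covariance operator $\Sigma$, and $\Trace(\Sigma)=\E_\bx\sum_\bb(\bb^\top\bx)_+^2$ must be given precise meaning (finiteness of the last is exactly what Assumption~\ref{assump:linear_subgaussian} provides, and since every operator here is supported on the finitely many, $O(d)$, learned directions one really works with finite-dimensional restrictions), and one must verify both that the weight-decay penalty genuinely equals $\|\cdot\|_\diamond$ and that $\|\cdot\|_\diamond$ lower-bounds the RKHS nuclear norm implicit in the smoother-matrix bounds---issues that do not arise in the purely linear Section~\ref{sec:kernel}. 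A secondary difficulty is that $B\mapsto(XB)_+$ is nonlinear, so the learned representation is the set of ReLU features at the learned directions rather than a linear subspace; once $\hat B$ is fixed the target problem is genuinely linear in $\bw$ and this is harmless, but all the source-side concentration bounds must be carried out for the ReLU feature family, replacing the subgaussian tail bounds of Section~\ref{sec:kernel} by their ReLU ($\tfrac12$-homogeneous, bounded-moment) analogues.
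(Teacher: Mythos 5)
Your proposal is correct and follows essentially the same route as the paper: lift the two-layer ReLU network to an infinite-dimensional linear model over the feature map $\phi(\bx)_{\bb}=(\bb^\top\bx)_+$, identify the weight-decay penalty with the group-$\ell_1$/atomic norm on the lifted measure via the rescaling--AM-GM identity, and then run the high-dimensional-representation analysis of Theorem~\ref{thm:kernel_main} with the nuclear norm replaced by this norm, its dual controlled as in Lemma~\ref{lemma:bound_lambda}, and the basic inequality applied against the $d$-neuron teacher. The only difference is presentational: the paper packages the lifting as a value-equivalence proposition and then invokes Theorem~\ref{thm:kernel_main} as a black box (noting dimension-independence, $\|\cdot\|_{2,\infty}\le\|\cdot\|_2$, and that only comparison with the teacher is needed), whereas you re-trace those same steps explicitly in the lifted space.
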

To highlight the advantage of representation learning, we compare to training a neural network with weight decay directly on the target task:
\begin{align}
(\hat B , \hat \bw) =\argmin_{B, \bw, \|B\bw\|\leq \bar R } \frac{1}{2n} \sum_{i=1}^n \|\by_{t+1} - 
(X_{T+1}B)_+\bw
\|^2.
\label{eq:nn-weight-decay-target}
\end{align}
The error of the baseline method in fixed-design is 
\begin{align}
\label{eqn:baseline-nn}
\E [\ER(f_{\hat B , \hat \bw} )]
\lesssim \sigma \bar R \sqrt{\frac{\Trace(\Sigma)}{n_2}}.  
\end{align}
We see that Equation \eqref{eqn:main_nn} is always smaller than Equation \eqref{eqn:baseline-nn} since $n_1T \ge n_2$. See Appendix~\ref{app:proof_nn} for the proof of Theorem~\ref{thm:nn} and the calculation of~\eqref{eqn:baseline-nn}.

	\section{Conclusion}
	\label{sec:dis}

We gave the first statistical analysis showing that representation learning can fully exploit all data points from source tasks to enable few-shot learning on a target task. This type of results were shown for both low-dimensional and high-dimensional representation function classes. 

There are many important directions to pursue in representation learning and few-shot learning. Our results in Sections~\ref{sec:kernel} and~\ref{sec:nn} indicate that explicit low dimensionality is not necessary, and norm-based capacity control also forces the classifier to learn good representations. 
 Further questions include whether this is a general phenomenon in all deep learning models, whether other capacity control can be applied, and how to optimize to attain good representations.


		\section*{Acknowledgments}
SSD acknowledges support of National Science Foundation (Grant No. DMS-1638352) and the Infosys Membership. JDL acknowledges support of the ARO under MURI Award W911NF-11-1-0303,  the Sloan Research Fellowship, and NSF CCF 2002272. WH is supported by NSF, ONR, Simons Foundation, Schmidt Foundation, Amazon Research, DARPA and SRC. QL is supported by NSF \#2030859 and the 
Computing Research Association for the CIFellows Project. The authors also acknowledge the generous support of the Institute for Advanced Study on the Theoretical Machine Learning program, where SSD, WH, JDL, and QL were participants.
\microtypesetup{protrusion=false}
	\bibliography{simonduref}
	\bibliographystyle{unsrtnat}
	
	\appendix
	\onecolumn

\section{Proof of Theorem~\ref{thm:linear_main}}
\label{app:proof_linear}

We first prove several claims and then combine them to finish the proof of Theorem~\ref{thm:linear_main}.
We will use technical lemmas proved in Section~\ref{subsec:linear_proof_lemmas}.

\begin{claim}[covariance concentration of source tasks] \label{claim:concentration_source_covariance}
	Suppose $n_1 \gg \rho^4(d+\log(T/\delta))$ for $\delta\in(0,1)$.
	Then with probability at least $1-\frac{\delta}{10}$ over the inputs $X_1, \ldots, X_T$ in the source tasks, we have
	\begin{equation} \label{eqn:concentration_source_covariance}
	0.9 \Sigma_t \preceq \frac{1}{n_1} X_t^\top X_t \preceq 1.1 \Sigma_t, \quad \forall t\in[T]. 
	\end{equation}
\end{claim}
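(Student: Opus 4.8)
\textbf{Proof proposal for Claim~\ref{claim:concentration_source_covariance}.}
The plan is to reduce the statement to the isotropic case by whitening, then invoke a standard subgaussian covariance concentration bound together with a union bound over the $T$ tasks. Fix a task $t\in[T]$ and recall that $\bx\sim p_t$ can be written as $\bx = \Sigma_t^{1/2}\bar\bx$ with $\bar\bx\sim\bar p_t$ isotropic and $\rho^2$-subgaussian. Collecting the $n_1$ whitened samples into $\bar X_t\in\R^{n_1\times r_t}$ (working, if necessary, inside $\sp(\Sigma_t)$, where $r_t=\rank(\Sigma_t)\le d$), we have $\frac{1}{n_1}X_t^\top X_t = \Sigma_t^{1/2}\big(\frac{1}{n_1}\bar X_t^\top\bar X_t\big)\Sigma_t^{1/2}$, so the desired two-sided bound \eqref{eqn:concentration_source_covariance} is \emph{equivalent} to
\[
\Big\| \tfrac{1}{n_1}\bar X_t^\top\bar X_t - I \Big\|_2 \le 0.1 .
\]
(Outside $\sp(\Sigma_t)$ both sides of \eqref{eqn:concentration_source_covariance} vanish since $\bx$ lies a.s. in $\sp(\Sigma_t^{1/2})=\sp(\Sigma_t)$, so nothing is lost.)

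Next I would apply the standard concentration estimate for the empirical second-moment matrix of i.i.d.\ isotropic $\rho^2$-subgaussian vectors (e.g.\ via an $\varepsilon$-net over the sphere and a Bernstein-type tail for the sum of the one-dimensional subexponential terms $(\bv^\top\bar\bx_i)^2-1$, or by citing Vershynin's Theorem~4.6.1): there is a universal constant $C$ such that for any $u>0$, with probability at least $1-2e^{-u}$,
\[
\Big\| \tfrac{1}{n_1}\bar X_t^\top\bar X_t - I \Big\|_2 \le C\rho^2\left( \sqrt{\tfrac{d+u}{n_1}} + \tfrac{d+u}{n_1} \right).
\]
Taking $u = \log\frac{20T}{\delta}$ and using the hypothesis $n_1 \gg \rho^4\big(d+\log\frac{T}{\delta}\big)$ — where the fourth power is exactly what is needed to make $C\rho^2\sqrt{(d+u)/n_1}\le 0.05$ and $C\rho^2(d+u)/n_1\le 0.05$ — the right-hand side is at most $0.1$, so the event above holds with probability at least $1-\frac{\delta}{10T}$.

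Finally I would take a union bound over $t\in[T]$: the failure probability is at most $T\cdot\frac{\delta}{10T}=\frac{\delta}{10}$, and on the complement event \eqref{eqn:concentration_source_covariance} holds simultaneously for all $t\in[T]$, which is the claim. The only mildly delicate point is the bookkeeping when some $\Sigma_t$ is rank-deficient, handled by restricting everything to $\sp(\Sigma_t)$ as above; the rest is routine subgaussian concentration, so I do not anticipate a genuine obstacle here — this claim is essentially a packaging of a textbook covariance estimate with the right parameter dependence.
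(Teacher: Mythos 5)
Your proposal is correct and follows essentially the same route as the paper: whiten each task's inputs via $X_t=\bar X_t\Sigma_t^{1/2}$, apply a standard isotropic subgaussian covariance concentration bound (the paper proves exactly this as a lemma via an $\varepsilon$-net plus Bernstein for the subexponential terms $(\bv^\top\bar\bx_i)^2-1$, with the same $\rho^4(d+\log(1/\delta'))$ sample requirement), conjugate back by $\Sigma_t^{1/2}$, and union bound over the $T$ tasks with per-task failure probability $\delta/(10T)$. Your extra remark about rank-deficient $\Sigma_t$ is a harmless refinement the paper does not bother with.
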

\begin{proof}
	According to our assumption on $p_t$, we can write $X_t = \bar X_t \Sigma_t^{1/2}$, where $\bar X_t \in \R^{n_1\times d}$ and the rows of $\bar X_t$ hold i.i.d. samples of $\bar p_t$. Since $\bar p_t$ satisfies the conditions in Lemma~\ref{lemma:concentration_identity_covariance}, from Lemma~\ref{lemma:concentration_identity_covariance} we know that with probability at least $1-\frac{\delta}{10T}$,
	\[
	0.9I \preceq \frac{1}{n_1} \bar X_t^\top \bar X_t \preceq 1.1I,
	\]
	which implies
	\[
	0.9 \Sigma_t \preceq \frac{1}{n_1} \Sigma_t^{1/2} \bar X_t^\top \bar X_t \Sigma_t^{1/2} = \frac{1}{n_1} X_t^\top X_t \preceq 1.1\Sigma_t. 
	\]
	The proof is finished by taking a union bound over all $t\in[T]$.
\end{proof}

\begin{claim}[covariance concentration of target task] \label{claim:concentration_target_covariance}
	Suppose $n_2 \gg \rho^4(k+\log(1/\delta))$ for $\delta\in(0,1)$.
	Then for any given matrix $B \in \R^{d\times 2k}$ that is independent of $X_{T+1}$, with probability at least $1-\frac{\delta}{10}$ over $X_{T+1}$  we have
	\begin{equation} \label{eqn:concentration_target_covariance}
	0.9 B^\top \Sigma_{T+1} B \preceq \frac{1}{n_2} B^\top X_{T+1}^\top X_{T+1}B \preceq 1.1 B^\top\Sigma_{T+1}B.
	\end{equation}
\end{claim}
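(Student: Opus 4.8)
The plan is to mirror the proof of Claim~\ref{claim:concentration_source_covariance}, i.e., to reduce the statement to the subgaussian covariance-concentration bound Lemma~\ref{lemma:concentration_identity_covariance}, but applied in the at-most-$2k$-dimensional subspace that is relevant to $B$ rather than in all of $\R^d$. This localization is exactly what allows the requirement to be $n_2 \gg \rho^4(k+\log\frac1\delta)$ instead of scaling with the ambient dimension $d$.

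First I would whiten. By the data assumption we may write $X_{T+1} = \bar X_{T+1}\Sigma_{T+1}^{1/2}$, where the rows of $\bar X_{T+1}\in\R^{n_2\times d}$ are i.i.d.\ draws of $\bar p_{T+1}$, which by Assumption~\ref{assump:linear_subgaussian} is $\rho^2$-subgaussian and satisfies $\E_{\bar\bx\sim\bar p_{T+1}}[\bar\bx\bar\bx^\top]=I$. Setting $M = \Sigma_{T+1}^{1/2}B\in\R^{d\times 2k}$, we have $B^\top\Sigma_{T+1}B = M^\top M$ and $\frac1{n_2}B^\top X_{T+1}^\top X_{T+1}B = \frac1{n_2}M^\top \bar X_{T+1}^\top \bar X_{T+1}M$, so \eqref{eqn:concentration_target_covariance} is equivalent to $0.9\,M^\top M \preceq \frac1{n_2}M^\top\bar X_{T+1}^\top\bar X_{T+1}M \preceq 1.1\,M^\top M$.

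Next I would pass to the column space of $M$. Let $r=\rank(M)\le 2k$ and let $U\in\R^{d\times r}$ have orthonormal columns spanning $\range(M)$, so that $M = U M'$ for some $M'\in\R^{r\times 2k}$. Crucially, $U$ (like $M$) depends only on $\Sigma_{T+1}$ and $B$, hence is independent of $X_{T+1}$. Writing $\bar\bx_i$ for the $i$-th row of $\bar X_{T+1}$, the vectors $U^\top\bar\bx_i\in\R^r$ are i.i.d.; since $U$ has orthonormal columns they are still $\rho^2$-subgaussian, and $\E[U^\top\bar\bx_i\bar\bx_i^\top U] = U^\top U = I_r$. Because $n_2 \gg \rho^4(k+\log\frac1\delta)$ dominates $\rho^4(r+\log\frac{10}{\delta})$ up to the universal constant, Lemma~\ref{lemma:concentration_identity_covariance} applied in dimension $r$ yields, with probability at least $1-\frac\delta{10}$,
\[
0.9\,I_r \preceq \frac1{n_2} U^\top\bar X_{T+1}^\top\bar X_{T+1}U \preceq 1.1\,I_r .
\]
Conjugating both sides by $M'$ and using $M = UM'$ gives $0.9\,M^\top M \preceq \frac1{n_2}M^\top\bar X_{T+1}^\top\bar X_{T+1}M \preceq 1.1\,M^\top M$, which is the claim after undoing the whitening.

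The only real subtlety — and the step I expect to require the most care — is this reduction to the subspace $\range(\Sigma_{T+1}^{1/2}B)$: one must make sure that possible rank deficiency of $\Sigma_{T+1}$ or $B$ causes no trouble (handled by working with $U$ and $r=\rank(M)$ rather than presuming $M$ has full column rank) and that the dimension entering Lemma~\ref{lemma:concentration_identity_covariance} is $r\le 2k$ rather than $d$. Everything else is a direct invocation of the same subgaussian covariance-concentration estimate already used for the source tasks.
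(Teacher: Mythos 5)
Your proof is correct and follows essentially the same route as the paper: whiten via $X_{T+1}=\bar X_{T+1}\Sigma_{T+1}^{1/2}$, factor $\Sigma_{T+1}^{1/2}B$ through an orthonormal frame of dimension at most $2k$ (the paper uses its SVD $UDV^\top$ with $U\in\R^{d\times 2k}$), apply Lemma~\ref{lemma:concentration_identity_covariance} to $\bar X_{T+1}U$, and conjugate back. Your handling of possible rank deficiency via $r=\rank(\Sigma_{T+1}^{1/2}B)$ is a minor cosmetic variant of the same argument.
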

\begin{proof}
	According to our assumption on $p_{T+1}$, we can write $X_{T+1} = \bar X_{T+1} \Sigma_{T+1}^{1/2}$, where $\bar X_{T+1} \in \R^{n_2\times d}$ and the rows of $\bar X_{T+1}$ hold i.i.d. samples of $\bar p_{T+1}$. 
	We take the SVD of $\Sigma_{T+1}^{1/2} B$: $\Sigma_{T+1}^{1/2} B = UDV^\top$, where $U\in\R^{d\times2k}$ has orthonormal columns.
	Now we look at the matrix $\bar X_{T+1} U \in \R^{n_2\times 2k}$. It is easy to see that the rows of $\bar X_{T+1} U$ are i.i.d. $2k$-dimensional random vectors with zero mean, identity covariance, and are $\rho^2$-subgaussian. Therefore, applying Lemma~\ref{lemma:concentration_identity_covariance}, with probability at least $1-\frac{\delta}{10}$ we have
	\[
	0.9I \preceq \frac{1}{n_2} U^\top \bar X_{T+1}^\top \bar X_{T+1} U \preceq 1.1I,
	\]
	which implies
	\[
	0.9VDDV^\top \preceq
	\frac{1}{n_2} VDU^\top \bar X_{T+1}^\top \bar X_{T+1} UDV^\top 
	\preceq 1.1VDDV^\top.
	\]
	Since $\frac{1}{n_2} VDU^\top \bar X_{T+1}^\top \bar X_{T+1} UDV^\top 
	= \frac{1}{n_2} B^\top \Sigma_{T+1}^{1/2} \bar X_{T+1}^\top \bar X_{T+1} \Sigma_{T+1}^{1/2}B = \frac{1}{n_2} B^\top X_{T+1}^\top X_{T+1}B$
	and $VDDV^\top = VDU^\top UDV^\top = B^\top \Sigma_{T+1} B$, the above inequality becomes
	\begin{equation*}
	0.9 B^\top \Sigma_{T+1} B \preceq \frac{1}{n_2} B^\top X_{T+1}^\top X_{T+1}B \preceq 1.1 B^\top\Sigma_{T+1}B. \qedhere
	\end{equation*}
\end{proof}

\begin{restatable}[guarantee on source training data]{claim}{bla}
	\label{claim:linear_training_guarantee}
	Under the setting of Theorem~\ref{thm:linear_main},
	with probability at least $1-\frac\delta5$ we have
	\begin{equation} \label{eqn:linear_training_guarantee}
	\|\cX(\hB\hW - B^*W^*)\|_F^2 \lesssim  \sigma^2 \left( kT + kd\log(\kappa n_1) + \log(1/\delta) \right) .
	\end{equation}
\end{restatable}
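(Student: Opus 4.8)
\textbf{Proof proposal for Claim~\ref{claim:linear_training_guarantee}.}
The plan is to run the standard ``basic inequality'' argument for least squares and then control the resulting noise term uniformly over the relevant low-complexity set. Write $\Theta^* = B^*W^*$ and $\Delta := \hB\hW - \Theta^*$. Since $(\hB,\hW)$ is a minimizer of the objective in~\eqref{eqn:linear_rl_compact} and $(B^*,W^*)$ is feasible, $\norm{Y-\cX(\hB\hW)}_F^2 \le \norm{Y-\cX(B^*W^*)}_F^2$; substituting $Y = \cX(\Theta^*)+Z$ and cancelling the $\norm{Z}_F^2$ terms gives the basic inequality $\norm{\cX(\Delta)}_F^2 \le 2\langle Z,\cX(\Delta)\rangle$. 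Two structural facts drive the rest. First, $\Delta = \tilde B\tilde W$ for some $\tilde W\in\R^{2k\times T}$, where $\tilde B := [\hB\ \ B^*]\in\R^{d\times 2k}$; hence for each $t$, the $t$-th column of $\cX(\Delta)$ lies in the subspace $\sp(X_t\tilde B)$ of dimension at most $2k$. Second, and crucially, these $T$ subspaces are all images under the $X_t$'s of a \emph{single} subspace $V := \sp(\tilde B)\subseteq\R^d$ of dimension $\le 2k$; this is exactly what will let the $kT$ term appear without a logarithm (it comes from $\chi^2$ concentration) while only the $kd$ term pays a $\log$ (from a net over subspaces of $\R^d$).

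Let $\bz_t$ denote the $t$-th column of $Z$ and $P_{X_tV}$ the orthogonal projection onto $\sp(X_t\tilde B)=X_t(V)$. By Cauchy--Schwarz,
\[
\langle Z,\cX(\Delta)\rangle \;=\; \sum_{t=1}^T \langle P_{X_tV}\bz_t,\, \cX(\Delta)_t\rangle \;\le\; \Big(\sum_{t=1}^T \norm{P_{X_tV}\bz_t}^2\Big)^{1/2}\norm{\cX(\Delta)}_F ,
\]
so the basic inequality yields $\norm{\cX(\Delta)}_F \le 2\big(\Xi(V)\big)^{1/2}$ where $\Xi(V):=\sum_{t=1}^T\norm{P_{X_tV}\bz_t}^2$. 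It therefore suffices to bound $\Xi(V)$ uniformly over all $2k$-dimensional $V\subseteq\R^d$ (note $\Xi$ depends on $\tilde B$ only through the subspace $V$). For a \emph{fixed} $V$ independent of $Z$, $\Xi(V)$ is $\sigma^2$ times a $\chi^2$ variable with at most $2kT$ degrees of freedom, so by standard $\chi^2$ tail bounds (e.g.\ Laurent--Massart) $\Xi(V) \lesssim \sigma^2(kT + u)$ with probability $\ge 1-e^{-u}$.

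To make this uniform, represent each $V$ by a $d\times 2k$ matrix with orthonormal columns and take an $\eta$-net $\cN$ of the Grassmannian $\mathrm{Gr}(2k,d)$, which satisfies $\log|\cN| \lesssim kd\log(1/\eta)$. On the event of Claim~\ref{claim:concentration_source_covariance} (valid since $n_1\gg\rho^4(d+\log(T/\delta))$), every $X_t$ obeys $\norm{X_t}^2 \lesssim n_1\lambda_{\max}(\Sigma_t)$ and $\sigma_{2k}(X_t|_{V})^2 \gtrsim n_1\lambda_{\min}(\Sigma_t)$, so $V'\mapsto P_{X_tV'}$ is $O(\sqrt{\kappa})$-Lipschitz on $\mathrm{Gr}(2k,d)$. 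Union-bounding the $\chi^2$ bound over $\cN$ with $u\asymp kd\log(1/\eta)+\log(1/\delta)$ controls $\Xi$ on $\cN$; for arbitrary $V'$ pick the nearest $V\in\cN$ and estimate $\big|\norm{P_{X_tV'}\bz_t}-\norm{P_{X_tV}\bz_t}\big|\le\norm{(P_{X_tV'}-P_{X_tV})\bz_t}\lesssim\sqrt{\kappa}\,\eta\,\norm{\bz_t}$. Squaring and summing over $t$, the discretization error is $\lesssim\kappa\eta^2\norm{Z}_F^2\lesssim\kappa\eta^2\sigma^2 n_1T$ (using $\norm{Z}_F^2\lesssim\sigma^2(n_1T+\log(1/\delta))$), which is lower order once $\eta\asymp 1/(\kappa n_1)$; with that choice $\log(1/\eta)\asymp\log(\kappa n_1)$, giving $\sup_V\Xi(V)\lesssim\sigma^2\big(kT+kd\log(\kappa n_1)+\log(1/\delta)\big)$. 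Plugging back, $\norm{\cX(\Delta)}_F^2 \le 4\Xi(V) \lesssim \sigma^2\big(kT+kd\log(\kappa n_1)+\log(1/\delta)\big)$, which is~\eqref{eqn:linear_training_guarantee} after a union bound over the $\chi^2$ event, the covariance-concentration event, and the Gaussian-norm bound on $\bz_t$/$Z$.

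The main obstacle is the uniform control in the third step: one must net subspaces of $\R^d$ finely enough that the discretization error — which carries a factor $\norm{\bz_t}\asymp\sigma\sqrt{n_1}$, not $O(\sigma)$ — becomes negligible, and simultaneously verify that the induced projections are Lipschitz with only a $\sqrt{\kappa}$ (not worse) constant. This is precisely where the $\log(\kappa n_1)$ factor originates, and the delicate bookkeeping is to route the $kT$ contribution through the degrees-of-freedom of a single $\chi^2$ (so it appears without a logarithm) while only the $kd$ contribution incurs the metric-entropy logarithm.
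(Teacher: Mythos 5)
Your proposal is correct and follows essentially the same route as the paper's proof: the basic inequality, Cauchy--Schwarz through the (at most) $2k$-dimensional subspace spanned by $[\hB,\,B^*]$, a $\chi^2(2kT)$ tail bound for a fixed subspace, and a net over $2k$-dimensional subspaces of $\R^d$ at resolution $\asymp 1/(\kappa n_1)$ whose discretization error is controlled on the covariance-concentration event, producing the same $\sigma^2\left(kT + kd\log(\kappa n_1)+\log\frac1\delta\right)$ bound. The only (harmless) difference is bookkeeping: you net the noise functional $\Xi(V)=\sum_{t}\|P_{X_tV}\bz_t\|^2$ via a $\sqrt{\kappa}$-Lipschitz estimate for $V\mapsto P_{X_tV}$ (condition number of $X_t$), whereas the paper nets the orthonormal frame itself and bounds $\langle Z,\cX((V-\bar V)R)\rangle$, which forces an extra intermediate bound on $\|\hB\hW-B^*W^*\|_F$ through $\min_{t}\lambda_{\min}(\Sigma_t)$ --- a step your version sidesteps.
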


\begin{proof}
	We assume that~\eqref{eqn:concentration_source_covariance} is true, which happens with probability at least $1-\frac{\delta}{10}$ according to Claim~\ref{claim:concentration_source_covariance}.
	
	Let $\hTheta = \hB \hW$ and $\Theta^* = B^* W^*$. From the optimality of $\hB$ and $\hW$ for \eqref{eqn:linear_rl_compact} we have $\|Y-\cX(\hat \Theta)\|_F^2 \le \|Y-\cX(\Theta^*)\|_F^2$. Plugging in $Y = \cX(\Theta^*) + Z$, this becomes
	\begin{equation}\label{eqn:linear_basic_ineq}
	\|\cX(\hat{\Theta}-\Theta^*)\|_F^2 \le 2\langle Z, \cX(\hat{\Theta}-\Theta^*)\rangle .
	\end{equation}
	
	Let $\Delta = \hTheta - \Theta^*$. Since $\rank(\Delta)\le 2k$, we can write $\Delta = V R = [V\br_1, \cdots, V\br_T]$ where $V \in \cO_{d, 2k}$ and $R = [\br_1, \cdots, \br_T] \in \R^{2k\times T}$.
	Here $\cO_{d_1, d_2}$ ($d_1\ge d_2$) is the set of orthonormal $d_1\times d_2$ matrices (i.e., the columns are orthonormal).
	For each $t\in[T]$ we further write $X_t V = U_t Q_t$ where $U_t \in \cO_{n_1, 2k}$ and $Q_t \in \R^{2k\times2k}$.
	Then we have
	\begin{align}\label{eqn:linear_covering_proof}
	\langle Z, \cX(\Delta) \rangle
	&= \sum_{t=1}^T \bz_t^\top X_t V \br_t \notag \\
	&= \sum_{t=1}^T \bz_t^\top U_t Q_t \br_t \notag\\
	&\le \sum_{t=1}^T \norm{U_t^\top \bz_t} \cdot \norm{Q_t \br_t} \notag\\
	& \le \sqrt{\sum_{t=1}^T \norm{U_t^\top \bz_t}^2} \cdot \sqrt{\sum_{t=1}^T \norm{Q_t \br_t}^2} \notag\\
	&= \sqrt{\sum_{t=1}^T \norm{U_t^\top \bz_t}^2} \cdot \sqrt{\sum_{t=1}^T \norm{U_t Q_t \br_t}^2} \notag\\
	&= \sqrt{\sum_{t=1}^T \norm{U_t^\top \bz_t}^2} \cdot \sqrt{\sum_{t=1}^T \norm{X_tV \br_t}^2} \notag\\
	&= \sqrt{\sum_{t=1}^T \norm{U_t^\top \bz_t}^2} \cdot \norm{\cX(\Delta)}_F.
	\end{align}

	Next we give a high-probability upper bound on $\sum_{t=1}^T \norm{U_t^\top \bz_t}^2$ using the randomness in $Z$. Since $U_t$'s depend on $V$ which depends on $Z$, we will need an $\epsilon$-net argument to cover all possible $V \in \cO_{d, 2k}$.
	First, for any fixed $\bar V \in \cO_{d, 2k}$, we let $X_t \bar V = \bar U_t \bar Q_t$ where $\bar U_t \in \cO_{n, 2k}$.
	The $\bar U_t$'s defined in this way are independent of $Z$. Since $Z$ has i.i.d. $\cN(0, \sigma^2)$ entries, we know that $\sigma^{-2}\sum_{t=1}^T \norm{\bar U_t^\top \bz_t}^2$ is distributed as $\chi^2(2kT)$. Using the standard tail bound for $\chi^2$ random variables, we know that with probability at least $1-\delta'$ over $Z$,
	\begin{equation*}
	 	\sigma^{-2}\sum_{t=1}^T \norm{\bar U_t^\top \bz_t}^2
	 	\lesssim kT + \log(1/\delta').
	\end{equation*}
	Therefore, using the same argument in~\eqref{eqn:linear_covering_proof} we know that with probability at least $1-\delta'$,
	\begin{align*}
	\langle Z, \cX(\bar VR) \rangle \lesssim \sigma \sqrt{kT + \log(1/\delta')} \norm{\cX(\bar VR)}_F.
	\end{align*}
	Now, from Lemma~\ref{lem:eps-net-orthonormal} we know that there exists an $\epsilon$-net $\cN$ of $\cO_{d, 2k}$ in Frobenius norm such that $\cN \subset \cO_{d, 2k}$ and $|\cN| \le (\frac{6\sqrt{2k}}{\epsilon})^{2kd}$.
	Applying a union bound over $\cN$, we know that with probability at least $1-\delta'|\cN|$,
	\begin{align} \label{eqn:linear_covering_proof_eps_net_guarantee}
	\langle Z, \cX(\bar VR) \rangle \lesssim \sigma \sqrt{kT + \log(1/\delta')} \norm{\cX(\bar VR)}_F, \quad \forall \bar V \in \cN.
	\end{align}
	Choosing $\delta' = \frac{\delta}{20(\frac{6\sqrt{2k}}{\epsilon})^{2kd}}$, we know that \eqref{eqn:linear_covering_proof_eps_net_guarantee} holds with probability at least $1-\frac{\delta}{20}$.
	
	We will use~\eqref{eqn:concentration_source_covariance}, \eqref{eqn:linear_basic_ineq} and~\eqref{eqn:linear_covering_proof_eps_net_guarantee} to complete the proof of the claim.
	This is done in the following steps:
	\begin{enumerate}
		\item Upper bounding $\norm{Z}_F$.
		
		Since $\sigma^{-2} \norm{Z}_F^2 \sim \chi^2(n_1 T)$, we know that with probability at least $1-\frac{\delta}{20}$,
		\begin{equation} \label{eqn:linear_covering_proof_Z_bound}
		\norm{Z}_F^2 \lesssim \sigma^2(n_1T + \log(1/\delta)).
		\end{equation}
		
		\item Upper bounding $\norm{\Delta}_F$.
		
		From~\eqref{eqn:linear_basic_ineq} we have $\norm{\cX(\Delta)}_F^2 \le 2 \norm{Z}_F \norm{\cX(\Delta)}_F$, which implies $\norm{\cX(\Delta)}_F \le 2\norm{Z}_F \lesssim \sigma\sqrt{n_1T + \log(1/\delta)}$.
		On the other hand, letting the $t$-th column of $\Delta$ be $\bdelta_t$, we have
		\begin{align*}
		\norm{\cX(\Delta)}_F^2 &= \sum_{t=1}^T \norm{X_t \bdelta_t}^2 \\
		&= \sum_{t=1}^T \bdelta_t^\top X_t^\top X_t \bdelta_t \\
		&\ge 0.9 n_1 \sum_{t=1}^T \bdelta_t^\top \Sigma_t \bdelta_t \tag{using~\eqref{eqn:concentration_source_covariance}} \\
		&\ge 0.9 n_1 \sum_{t=1}^T \lambda_{\min}(\Sigma_t) \norm{\bdelta_t}^2 \\
		&\ge 0.9 n_1 \underline{\lambda} \norm{\Delta}_F^2,
		\end{align*}
		where $\ulambda = \min_{t\in[T]} \lambda_{\min}(\Sigma_t)$.
		Hence we obtain
		\begin{align*}
		\norm{\Delta}_F^2 \lesssim \frac{\norm{\cX(\Delta)}_F^2}{n_1 \ulambda}
		\lesssim \frac{\sigma^2(n_1T + \log(1/\delta))}{n_1\ulambda}.
		\end{align*}
		
		\item Applying the $\epsilon$-net $\cN$.
		
		Let $\bar V \in \cN$ such that $ \norm{V-\bar V}_F \le \epsilon $.
		Then we have
		\begin{align} \label{eqn:linear_covering_proof_error_term}
		&\norm{\cX(VR - \bar VR)}_F^2 \notag \\
		=\,& \sum_{t=1}^T \norm{X_t(V-\bar V)\br_t}^2 \notag\\
		\le\,& \sum_{t=1}^T \norm{X_t}^2 \norm{V-\bar V}^2 \norm{\br_t}^2 \notag\\
		\le\,& \sum_{t=1}^T 1.1n_1 \lambda_{\max}(\Sigma_t) \epsilon^2 \norm{\br_t}^2 \shortrnote{(using~\eqref{eqn:concentration_source_covariance})} \notag\\
		\le\,& 1.1n_1 \barlambda \epsilon^2 \norm{R}_F^2 \shortrnote{($\barlambda = \max_{t\in[T]}\lambda_{\max}(\Sigma_t)$)} \notag\\
		=\,& 1.1n_1 \barlambda \epsilon^2 \norm{\Delta}_F^2 \shortrnote{($\norm{\Delta}_F = \norm{VR}_F = \norm{R}_F$)} \notag\\
		\lesssim\,& n_1 \barlambda \epsilon^2  \cdot \frac{\sigma^2(n_1T + \log(1/\delta))}{n_1\ulambda} \notag\\
		=\,& \kappa \epsilon^2 \sigma^2 (n_1T + \log(1/\delta)).
		\end{align}
		
		\item Finishing the proof.
		
		We have the following chain of inequalities:
		\begin{align*}
		&\frac12 \norm{\cX(\Delta)}_F^2 \\
		\le\,& \langle Z, \cX(\Delta) \rangle \tag{using~\eqref{eqn:linear_basic_ineq}} \\
		=\,& \langle Z, \cX(\bar VR) \rangle + \langle Z, \cX(VR-\bar VR) \rangle \\
		\lesssim\,&  \sigma \sqrt{kT + \log(1/\delta')} \norm{\cX(\bar VR)}_F + \norm{Z}_F \norm{\cX(VR-\bar VR)}_F \tag{using~\eqref{eqn:linear_covering_proof_eps_net_guarantee}} \\
		\le\,& \sigma \sqrt{kT + \log(1/\delta')} \left(\norm{\cX( VR)}_F + \norm{\cX(VR-\bar VR)}_F \right)\\
		&  + \sigma\sqrt{n_1T + \log(1/\delta)} \norm{\cX(VR-\bar VR)}_F  \tag{using~\eqref{eqn:linear_covering_proof_Z_bound}} \\
		\lesssim\,& \sigma \sqrt{kT + \log(1/\delta')} \norm{\cX( VR)}_F + \sigma\sqrt{n_1T + \log(1/\delta')} \norm{\cX(VR-\bar VR)}_F \tag{using $k<n_1$ and $\delta'<\delta$} \\
		\lesssim\,& \sigma \sqrt{kT + \log(1/\delta')} \norm{\cX( \Delta)}_F + \sigma\sqrt{n_1T + \log(1/\delta')} \cdot \sqrt{\kappa \epsilon^2 \sigma^2 (n_1T + \log(1/\delta))} \tag{using~\eqref{eqn:linear_covering_proof_error_term}} \\
		\le\,& \sigma \sqrt{kT + \log(1/\delta')} \norm{\cX(\Delta)}_F + \epsilon\sigma^2\sqrt{\kappa} (n_1T + \log(1/\delta')).
		\end{align*}
		Finally, we let $\epsilon = \frac{k}{\sqrt{\kappa} n_1}$, and recall $\delta' = \frac{\delta}{20(\frac{6\sqrt{2k}}{\epsilon})^{2kd}}$. Then the above inequality implies
		\begin{align*}
		&\norm{\cX(\Delta)}_F \\
		\lesssim\,& \max\left\{ \sigma \sqrt{kT + \log(1/\delta')}, \sqrt{\epsilon\sigma^2\sqrt{\kappa} (n_1T + \log(1/\delta'))} \right\} \\
		=\,& \max\left\{ \sigma \sqrt{kT + \log(1/\delta')}, \sigma \sqrt{\frac{k}{n_1} (n_1T + \log(1/\delta'))} \right\} \\
		\le\,& \max\left\{ \sigma \sqrt{kT + \log(1/\delta')}, \sigma \sqrt{kT + \log(1/\delta'))} \right\} \tag{using $k<n_1$} \\
		=\,& \sigma \sqrt{kT + \log(1/\delta')} \\
		\lesssim \,& \sigma \sqrt{kT + kd\log\frac{k}{\epsilon} + \log\frac1\delta} \\
		\le\,& \sigma \sqrt{kT + kd\log(\kappa n_1) + \log\frac1\delta} .
		\end{align*}
	\end{enumerate}

	The high-probability events we have used in the proof are \eqref{eqn:concentration_source_covariance}, \eqref{eqn:linear_covering_proof_eps_net_guarantee} and \eqref{eqn:linear_covering_proof_Z_bound}.
	By a union bound, the failure probability is at most $\frac{\delta}{10} + \frac{\delta}{20} + \frac{\delta}{20} = \frac{\delta}{5}$. Therefore the proof is completed.
\end{proof}

\begin{claim}[Guarantee on target training data] \label{claim:linear_target_guarantee}
	Under the setting of Theorem~\ref{thm:linear_main},
	with probability at least $1-\frac{2\delta}{5}$, we have
	\begin{align*}
	\frac{1}{n_2} \norm{P_{X_{T+1}\hB}^\perp X_{T+1}B^*}_F^2 \lesssim \frac{\sigma^2 \left( kT + kd\log(\kappa n_1) + \log\frac1\delta \right)}{cn_1 \cdot \sigma_{k}^2(W^*)}.
	\end{align*}
\end{claim}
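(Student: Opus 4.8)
The plan is to convert the source in-sample guarantee of Claim~\ref{claim:linear_training_guarantee} into the stated target-side bound by a change of covariance followed by a ``stripping'' of the source weight matrix $W^*$. Throughout, I would condition on the events of Claim~\ref{claim:linear_training_guarantee}, Claim~\ref{claim:concentration_source_covariance}, and Claim~\ref{claim:concentration_target_covariance} (the last applied to the matrix $C := [\hB, B^*] \in \R^{d\times 2k}$ introduced below), whose total failure probability is at most $\frac\delta5 + \frac\delta{10} + \frac\delta{10} = \frac{2\delta}5$. Writing $\hbw_t$ for the $t$-th column of $\hW$ and $\bu_t := \hB\hbw_t - B^*\bw_t^*$, Claim~\ref{claim:concentration_source_covariance} gives $\norm{X_t\bu_t}^2 \ge 0.9\,n_1\,\bu_t^\top\Sigma_t\bu_t$, and Assumption~\ref{assump:linear_covariance_dominance} gives $\Sigma_t \succeq c\,\Sigma_{T+1}$; summing over $t\in[T]$ and invoking Claim~\ref{claim:linear_training_guarantee},
\begin{equation*}
\sum_{t=1}^T \norm{\Sigma_{T+1}^{1/2}\bu_t}^2 \;\le\; \frac{1}{0.9\,c\,n_1}\,\norm{\cX(\hB\hW - B^*W^*)}_F^2 \;\lesssim\; \frac{\sigma^2\big(kT + kd\log(\kappa n_1) + \log\frac1\delta\big)}{c\,n_1}.
\end{equation*}
Setting $M := \Sigma_{T+1}^{1/2}\hB$ and $N := \Sigma_{T+1}^{1/2}B^*$, the left-hand side is exactly $\norm{M\hW - NW^*}_F^2$.

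Next I would apply the orthogonal projector $P_M^\perp$, which annihilates the columns of $M$ and is a contraction in Frobenius norm: $\norm{P_M^\perp NW^*}_F^2 = \norm{P_M^\perp(NW^* - M\hW)}_F^2 \le \norm{NW^* - M\hW}_F^2$. Since $W^* \in \R^{k\times T}$ with $T\ge 2k$, we have $W^*(W^*)^\top \succeq \sigma_{k}^2(W^*)\,I_k$, so for any $A\in\R^{d\times k}$, $\Trace\!\big(A W^*(W^*)^\top A^\top\big) \ge \sigma_{k}^2(W^*)\,\norm{A}_F^2$; taking $A = P_M^\perp N$ yields
\begin{equation*}
\norm{P_M^\perp N}_F^2 \;\le\; \frac{\norm{P_M^\perp NW^*}_F^2}{\sigma_{k}^2(W^*)} \;\lesssim\; \frac{\sigma^2\big(kT + kd\log(\kappa n_1) + \log\frac1\delta\big)}{c\,n_1\,\sigma_{k}^2(W^*)}.
\end{equation*}
Note that $\norm{P_M^\perp N}_F^2 = \min_{V\in\R^{k\times k}}\norm{\Sigma_{T+1}^{1/2}(B^* - \hB V)}_F^2$, i.e.\ it is the population analogue (with $\Sigma_{T+1}$ in place of $\frac1{n_2}X_{T+1}^\top X_{T+1}$) of the quantity we want to bound.

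It remains to pass from the population covariance back to the empirical one. I would write $\frac1{n_2}\norm{P_{X_{T+1}\hB}^\perp X_{T+1}B^*}_F^2 = \frac1{n_2}\min_{V\in\R^{k\times k}}\norm{X_{T+1}(B^* - \hB V)}_F^2$. Every matrix $B^* - \hB V$ lies in the column span of $C = [\hB, B^*]$, and crucially $\hB$ — hence $C$ — depends on the source data only and is thus independent of $X_{T+1}$; so Claim~\ref{claim:concentration_target_covariance} applies to $C$ and gives $\frac1{n_2}C^\top X_{T+1}^\top X_{T+1}C \preceq 1.1\,C^\top\Sigma_{T+1}C$, whence $\frac1{n_2}\norm{X_{T+1}(B^* - \hB V)}_F^2 \le 1.1\,\norm{\Sigma_{T+1}^{1/2}(B^* - \hB V)}_F^2$ for \emph{every} $V$; minimizing over $V$ on both sides turns the right-hand side into $1.1\,\norm{P_M^\perp N}_F^2$, and combining with the previous display finishes the proof. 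The main obstacle is precisely this last step: the target proxy involves the empirical projector $P_{X_{T+1}\hB}^\perp$, and the optimal $V$ realizing it depends on $X_{T+1}$, so one cannot apply a fixed-matrix concentration bound directly. The resolution is to restrict attention to the fixed, source-measurable $2k$-dimensional subspace $\sp([\hB, B^*])$ and use the subspace version of covariance concentration in Claim~\ref{claim:concentration_target_covariance}, which requires only that the defining matrix be independent of $X_{T+1}$; the remainder is routine bookkeeping with projections, the contraction property of $P_M^\perp$, and the singular-value lower bound $W^*(W^*)^\top \succeq \sigma_{k}^2(W^*)I_k$ supplied by Assumption~\ref{assump:linear_diverse_source}.
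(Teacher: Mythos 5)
Your proposal is correct, and the final bound, the events you condition on, and the probability accounting ($\frac\delta5+\frac\delta{10}+\frac\delta{10}=\frac{2\delta}5$) all match the paper's. The route differs in the middle, though, in a way worth noting. The paper first uses the per-task optimality of $\hbw_t$ (i.e., $X_t\hB\hbw_t=P_{X_t\hB}\by_t$) to orthogonally decompose the in-sample error and isolate $\sum_t\norm{P_{X_t\hB}^\perp X_tB^*\bw_t^*}^2$, and then invokes Lemma~\ref{lemma:loewner_magic} twice to transport these \emph{empirical, projected} quantities first to $\Sigma_t^{1/2}$-projections and then, via covariance dominance, to $\Sigma_{T+1}^{1/2}$-projections, before stripping $\sigma_k^2(W^*)$. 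You instead keep the raw columns $\bu_t=\hB\hbw_t-B^*\bw_t^*$, push them to the population level with the elementary vector inequalities $\frac1{n_1}X_t^\top X_t\succeq 0.9\,\Sigma_t\succeq 0.9c\,\Sigma_{T+1}$, and only then kill the $\hW$-dependence by hitting $M\hW-NW^*$ with the single population-level projector $P_M^\perp$ (annihilation of $M\hW$ plus Frobenius contraction), followed by the same $W^*(W^*)^\top\succeq\sigma_k^2(W^*)I_k$ stripping. This avoids both the least-squares characterization of $\hbw_t$ and the source-side uses of Lemma~\ref{lemma:loewner_magic}, which is a mild simplification; the trade-off is that the paper's projection-first decomposition also discards the noise term $\norm{P_{X_t\hB}\bz_t}^2$ explicitly, which is conceptually informative but not needed for the bound. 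Your final transfer to the empirical target covariance — restricting to the source-measurable span of $[\hB,B^*]$, applying Claim~\ref{claim:concentration_target_covariance} uniformly over all $V$, and minimizing — is exactly the content of the paper's use of Lemma~\ref{lemma:loewner_magic} there, just re-derived in variational form, and you correctly flag why the $X_{T+1}$-dependence of the optimal $V$ is harmless.
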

\begin{proof}
	We suppose that the high-probability events in Claims \ref{claim:concentration_source_covariance}, \ref{claim:concentration_target_covariance} and \ref{claim:linear_training_guarantee} happen, which holds with probability at least $1-\frac{2\delta}{5}$. 
	Here we instantiate Claim~\ref{claim:concentration_target_covariance} using $B = [\hB, B^*]\in\R^{d\times2k}$.
	
	From the optimality of $\hB$ and $\hW$ in \eqref{eqn:linear_rl} we know $X_t\hB \hbw_t = P_{X_t\hB} \by_t = P_{X_t\hB} (X_t B^* \bw_t^* + \bz_t)$ for each $t\in[T]$.
	Then we have
	\begin{align*}
	&\sigma^2 \left( kT + kd\log(\kappa n_1) + \log(1/\delta) \right)\\
	\gtrsim\,& \|\cX(\hB\hW - B^*W^*)\|_F^2 \tag{from \eqref{eqn:linear_training_guarantee}} \\
	=\,& \sum_{t=1}^T \norm{X_t \hB \hbw_t - X_t B^*\hbw_t^*}^2 \\
	=\,& \sum_{t=1}^T \norm{P_{X_t\hB} (X_t B^* \bw_t^* + \bz_t) - X_t B^*\hbw_t^*}^2 \\
	=\,& \sum_{t=1}^T \norm{-P_{X_t\hB}^\perp X_t B^* \bw_t^* +  P_{X_t\hB} \bz_t}^2 \\
	=\,& \sum_{t=1}^T \left( \norm{-P_{X_t\hB}^\perp X_t B^* \bw_t^*}^2 + \norm{P_{X_t\hB} \bz_t}^2 \right) \tag{the cross term is $0$} \\
	\ge\,& \sum_{t=1}^T \norm{P_{X_t\hB}^\perp X_t B^* \bw_t^*}^2 \\
	\ge\,& 0.9n_1 \sum_{t=1}^T \norm{P_{\Sigma_t^{1/2}\hB}^\perp \Sigma_t^{1/2} B^* \bw_t^*}^2 \tag{using \eqref{eqn:concentration_source_covariance} and Lemma~\ref{lemma:loewner_magic}} \\
	\ge\,& 0.9c n_1 \sum_{t=1}^T \norm{P_{\Sigma_{T+1}^{1/2}\hB}^\perp \Sigma_{T+1}^{1/2} B^* \bw_t^*}^2 \tag{using Assumption~\ref{assump:linear_covariance_dominance} and Lemma~\ref{lemma:loewner_magic}} \\
	=\,& 0.9c n_1 \norm{ P_{\Sigma_{T+1}^{1/2}\hB}^\perp \Sigma_{T+1}^{1/2} B^* W^*  }_F^2 \\
	\ge\,& 0.9c n_1 \norm{ P_{\Sigma_{T+1}^{1/2}\hB}^\perp \Sigma_{T+1}^{1/2} B^*}_F^2\cdot \sigma_k^2(W^*).
	\end{align*}
	
	Next, we write $\hB = [\hB, B^*] \begin{bmatrix}
	I\\0
	\end{bmatrix} =: BA$
	and $B^* = [\hB, B^*] \begin{bmatrix}
	0\\I
	\end{bmatrix} =: BC$.
	Recall that we have $ \frac{1}{n_2} B^\top X_{T+1}^\top X_{T+1}B \preceq 1.1 B^\top\Sigma_{T+1}B$ from Claim~\ref{claim:concentration_target_covariance}.
	Then using Lemma~\ref{lemma:loewner_magic} we can obtain
	$$ 1.1 \norm{P_{\Sigma_{T+1}^{1/2}BA}^\perp \Sigma_{T+1}^{1/2}BC}_F^2 \ge \frac{1}{n_2} \norm{P_{X_{T+1}BA}^\perp X_{T+1} BC}_F^2, $$
	i.e.,
	\[
	1.1 \norm{P_{\Sigma_{T+1}^{1/2}\hB}^\perp \Sigma_{T+1}^{1/2}B^*}_F^2 \ge \frac{1}{n_2} \norm{P_{X_{T+1}\hB}^\perp X_{T+1} B^*}_F^2.
	\]
	Therefore we get
	\begin{align*}
	\sigma^2 \left( kT + kd\log(\kappa n_1) + \log(1/\delta) \right)  \gtrsim \frac{0.9c n_1}{1.1n_2} \norm{P_{X_{T+1}\hB}^\perp X_{T+1} B^*}_F^2 \cdot \sigma_k^2(W^*),
	\end{align*}
	completing the proof.
\end{proof}

\begin{proof}[Proof of Theorem~\ref{thm:linear_main}]
	We will use all the high-probability events in Claims \ref{claim:concentration_source_covariance}, \ref{claim:concentration_target_covariance}, \ref{claim:linear_training_guarantee} and \ref{claim:linear_target_guarantee}.
	Here we instantiate Claim~\ref{claim:concentration_target_covariance} using $B = [\hB, B^*]\in\R^{d\times2k}$.
	The success probability is at least $1-\frac{4\delta}{5}$. 
	
	For the target task, the excess risk of our learned linear predictor $\bx \mapsto (\hB\hbw_{T+1})^\top \bx$ is
	\begin{align*}
	\ER( \hB, \hbw_{T+1} )
	&= \frac12 \E_{\bx\sim p_{T+1}} \left[ \left( \bx^\top (\hB\hbw_{T+1} - B^*\bw_{T+1}^*) \right)^2 \right] \\
	&= \frac12 (\hB\hbw_{T+1} - B^*\bw_{T+1}^*)^\top \Sigma_{T+1} (\hB\hbw_{T+1} - B^*\bw_{T+1}^*).
	\end{align*}
	Applying Claim~\ref{claim:concentration_target_covariance} with $B = [\hB, B^*]$, we have
	\[
	0.9 B^\top \Sigma_{T+1} B \preceq \frac{1}{n_2} B^\top X_{T+1}^\top X_{T+1}B,
	\]
	which implies $0.9 \bv^\top B^\top \Sigma_{T+1} B \bv \le \frac{1}{n_2} \bv B^\top X_{T+1}^\top X_{T+1}B\bv$ for $\bv = \begin{bmatrix}
	\hbw_{T+1}\\ \bw_{T+1}^*
	\end{bmatrix}$.
	This becomes 
	\begin{align*}
	& (\hB\hbw_{T+1} - B^*\bw_{T+1}^*)^\top \Sigma_{T+1} (\hB\hbw_{T+1} - B^*\bw_{T+1}^*) \\
	\le & \frac{1}{0.9n_2} (\hB\hbw_{T+1} - B^*\bw_{T+1}^*)^\top X_{T+1}^\top X_{T+1} (\hB\hbw_{T+1} - B^*\bw_{T+1}^*).
	\end{align*}	
	Therefore we have
	\begin{align*}
	\ER( \hB, \hbw_{T+1} )
	&\le \frac{1}{1.8n_2} (\hB\hbw_{T+1} - B^*\bw_{T+1}^*)^\top X_{T+1}^\top X_{T+1} (\hB\hbw_{T+1} - B^*\bw_{T+1}^*) \\
	&= \frac{1}{1.8n_2} \norm{X_{T+1} (\hB\hbw_{T+1} - B^*\bw_{T+1}^*)}^2 .
	\end{align*}
	From the optimality of $\hbw_{T+1}$ in \eqref{eqn:linear_test_task_opt} we know $X_{T+1}\hB \hbw_{T+1} = P_{X_{T+1}\hB} \by_{T+1} = P_{X_{T+1}\hB} (X_{T+1} B^* \bw_{T+1}^* + \bz_{T+1})$.
	It follows that
	\begin{align*}
	\ER( \hB,\hbw_{T+1} )
	&\lesssim \frac{1}{n_2} \norm{ P_{X_{T+1}\hB} (X_{T+1} B^* \bw_{T+1}^* + \bz_{T+1}) - X_{T+1}B^*\bw_{T+1}^*  }_F^2 \\
	&= \frac{1}{n_2} \norm{ -P_{X_{T+1}\hB}^\perp X_{T+1} B^* \bw_{T+1}^* + P_{X_{T+1}\hB} \bz_{T+1}  }_F^2 \\
	&= \frac{1}{n_2} \norm{ P_{X_{T+1}\hB}^\perp X_{T+1} B^* \bw_{T+1}^*}_F^2 + \frac{1}{n_2}  \norm{P_{X_{T+1}\hB} \bz_{T+1}  }_F^2 .
	\end{align*}
	
	Recall that $\bw_{T+1}^*\sim\nu$ and $\norm{\E_{{\bw}\sim\nu}[\bw\bw^\top]} \le O(\frac1k)$.
	Taking expectation over $\bw_{T+1}^*\sim\nu$ and denoting $\Sigma = \E_{{\bw}\sim\nu}[\bw\bw^\top]$, we obtain
	\begin{align*}
	&\E_{\bw_{T+1}^*\sim\nu} [\ER( \hB, \hbw_{T+1} )]\\
	\lesssim \,& \frac{1}{n_2} \E_{\bw_{T+1}^*\sim\nu}\left[ \norm{ P_{X_{T+1}\hB}^\perp X_{T+1} B^* \bw_{T+1}^*}_F^2 \right] + \frac{1}{n_2}  \norm{P_{X_{T+1}\hB} \bz_{T+1}  }_F^2 \\
	=\,& \frac{1}{n_2} \E_{\bw_{T+1}^*\sim\nu}\left[ \Trace\left[ P_{X_{T+1}\hB}^\perp X_{T+1} B^* \bw_{T+1}^* \bw_{T+1}^* \left( P_{X_{T+1}\hB}^\perp X_{T+1} B^*  \right)^\top \right] \right] + \frac{1}{n_2}  \norm{P_{X_{T+1}\hB} \bz_{T+1}  }_F^2 \\
	=\,& \frac{1}{n_2}  \Trace\left[ P_{X_{T+1}\hB}^\perp X_{T+1} B^* \Sigma \left( P_{X_{T+1}\hB}^\perp X_{T+1} B^*  \right)^\top \right]  + \frac{1}{n_2}  \norm{P_{X_{T+1}\hB} \bz_{T+1}  }_F^2 \\
	=\,& \frac{1}{n_2} \norm{P_{X_{T+1}\hB}^\perp X_{T+1} B^* \Sigma^{1/2}}_F^2 + \frac{1}{n_2}  \norm{P_{X_{T+1}\hB} \bz_{T+1}  }_F^2 \\
	\le \,& \frac{1}{n_2} \norm{P_{X_{T+1}\hB}^\perp X_{T+1} B^* }_F^2 \norm{\Sigma^{1/2}}^2 + \frac{1}{n_2}  \norm{P_{X_{T+1}\hB} \bz_{T+1}  }_F^2 \\
	\lesssim \,& \frac{1}{n_2 k} \norm{P_{X_{T+1}\hB}^\perp X_{T+1} B^* }_F^2  + \frac{1}{n_2}  \norm{P_{X_{T+1}\hB} \bz_{T+1}  }_F^2 \tag{using $\norm{\Sigma} \lesssim \frac1k$} \\
	\lesssim \,& \frac{1}{k} \cdot \frac{\sigma^2 \left( kT + kd\log(\kappa n_1) + \log(1/\delta) \right)}{cn_1 \cdot \sigma_{k}^2(W^*)} + \frac{1}{n_2}  \norm{P_{X_{T+1}\hB} \bz_{T+1}  }_F^2 \tag{using Claim~\ref{claim:linear_target_guarantee}} \\
	\lesssim \,&  \frac{\sigma^2 \left( kT + kd\log(\kappa n_1) + \log(1/\delta) \right)}{cn_1 T} + \frac{1}{n_2}  \norm{P_{X_{T+1}\hB} \bz_{T+1}  }_F^2 \tag{using $\sigma_{k}^2(W^*) \gtrsim \frac Tk$} .
	\end{align*}
	For the second term above, notice that $\frac{1}{\sigma^2} \norm{P_{X_{T+1}\hB} \bz_{T+1}  }_F^2 \sim \chi^2(k)$, and thus with probability at least $1-\frac\delta5$ we have $\frac{1}{\sigma^2} \norm{P_{X_{T+1}\hB} \bz_{T+1}  }_F^2 \lesssim k+\log\frac1\delta$.
	Therefore we obtain the final bound
	\begin{align*}
	\E_{\bw_{T+1}^*\sim\nu} [\ER( \hB,\hbw_{T+1} )]
	&\lesssim \frac{\sigma^2 \left( kT + kd\log(\kappa n_1) + \log(1/\delta) \right)}{cn_1 T}  + \frac{\sigma^2(k+\log\frac1\delta)}{n_2}\\
	&= \sigma^2 \left( \frac{kd \log(\kappa n_1)}{cn_1T} + \frac{k}{cn_1} + \frac{\log\frac1\delta}{cn_1T} + \frac{k+\log\frac1\delta}{n_2} \right) \\
	&\lesssim \sigma^2 \left( \frac{kd \log(\kappa n_1)}{cn_1T} + \frac{k+\log\frac1\delta}{n_2} \right) ,
	\end{align*}
	where the last inequality is due to $cn_1\ge n_2$.
\end{proof}

\subsection{Technical Lemmas}
\label{subsec:linear_proof_lemmas}

\begin{lemma} \label{lem:eps-net-orthonormal}
	Let $\cO_{d_1, d_2} = \{ V\in\R^{d_1\times d_2} \mid V^\top V=I \}$ ($d_1\ge d_2$), and $\epsilon \in(0,1)$.
	Then there exists a subset $\cN \subset \cO_{d_1, d_2}$ that is an $\epsilon$-net of $\cO_{d_1, d_2}$ in Frobenius norm such that $|\cN|\le (\frac{6\sqrt{d_2}}{\epsilon})^{d_1d_2}$, i.e., for any $V\in\cO_{d_1, d_2}$, there exists $V'\in\cN$ such that $\norm{V-V'}_F \le \epsilon$.
\end{lemma}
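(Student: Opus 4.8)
The plan is to reduce the statement to the standard volumetric covering bound for a Euclidean ball, and then to ``round'' the resulting net onto $\cO_{d_1,d_2}$ so that it becomes a genuine subset of $\cO_{d_1,d_2}$. First, identifying $\R^{d_1\times d_2}$ with $\R^{d_1 d_2}$ equipped with the Frobenius norm as its Euclidean norm, I would note that every $V\in\cO_{d_1,d_2}$ satisfies $\norm{V}_F^2 = d_2$, since each of its $d_2$ columns is a unit vector; hence $\cO_{d_1,d_2}$ is contained in the Frobenius ball $\cB = \{M\in\R^{d_1\times d_2} : \norm{M}_F\le\sqrt{d_2}\}$. By the usual volume argument (a maximal $\frac\epsilon2$-separated subset of a radius-$r$ ball in $\R^{D}$ has cardinality at most $(\tfrac{2r}{\epsilon/2}+1)^{D}\le(\tfrac{6r}{\epsilon})^{D}$ once $\epsilon\le r$, and is automatically an $\frac\epsilon2$-net of that ball), there is an $\frac\epsilon2$-net $\cN_0$ of $\cB$ with $|\cN_0|\le(\tfrac{6\sqrt{d_2}}{\epsilon})^{d_1 d_2}$. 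Since $\cO_{d_1,d_2}\subseteq\cB$, this $\cN_0$ is in particular an $\frac\epsilon2$-net of $\cO_{d_1,d_2}$.

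The second step upgrades $\cN_0$ to a net that lies inside $\cO_{d_1,d_2}$ itself, which is what the lemma requires (the $\epsilon$-net is used in the proof of Claim~\ref{claim:linear_training_guarantee} as a subset of $\cO_{d,2k}$). For every $M\in\cN_0$ whose Frobenius distance to $\cO_{d_1,d_2}$ is at most $\frac\epsilon2$, I would fix a matrix $\pi(M)\in\cO_{d_1,d_2}$ with $\norm{M-\pi(M)}_F\le\frac\epsilon2$ (such a minimizer exists because $\cO_{d_1,d_2}$ is compact), and discard all remaining points of $\cN_0$. Let $\cN$ be the collection of the chosen $\pi(M)$'s; then $\cN\subset\cO_{d_1,d_2}$ and $|\cN|\le|\cN_0|\le(\tfrac{6\sqrt{d_2}}{\epsilon})^{d_1 d_2}$. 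To verify that $\cN$ is an $\epsilon$-net of $\cO_{d_1,d_2}$, take any $V\in\cO_{d_1,d_2}$; since $\cN_0$ is an $\frac\epsilon2$-net of $\cO_{d_1,d_2}$ there is $M\in\cN_0$ with $\norm{V-M}_F\le\frac\epsilon2$, which forces $M$ to be within $\frac\epsilon2$ of $\cO_{d_1,d_2}$ and hence to have been retained, so $\pi(M)\in\cN$ and $\norm{V-\pi(M)}_F\le\norm{V-M}_F+\norm{M-\pi(M)}_F\le\epsilon$.

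I do not expect a real obstacle here; the only point requiring a little care is the second step, i.e.\ ensuring the net sits inside $\cO_{d_1,d_2}$ rather than merely in the ambient space. That step costs a factor of $2$ in the net parameter (one builds an $\frac\epsilon2$-net and then rounds), which is exactly what the constant $6$ in the stated bound accommodates. A concrete alternative to the abstract ``discard-and-round'' map $\pi$ is to round each retained $M$ to its polar factor $U V^\top$ (from a thin SVD $M=U\Sigma V^\top$), which is a nearest point of $\cO_{d_1,d_2}$; but the construction above is cleaner and fully self-contained, so that is the route I would take.
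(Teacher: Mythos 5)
Your proposal is correct and follows essentially the same route as the paper: construct an $\frac{\epsilon}{2}$-net of $\cO_{d_1,d_2}$ in the ambient Frobenius metric and then round it onto $\cO_{d_1,d_2}$ via nearest-point projection, paying a factor of $2$ by the triangle inequality. The only (immaterial) difference is in how the ambient half-net is built: the paper takes a product of $\frac{\epsilon}{2\sqrt{d_2}}$-nets of the unit sphere in $\R^{d_1}$, one per column, while you take a single volumetric net of the Frobenius ball of radius $\sqrt{d_2}$ in $\R^{d_1 d_2}$; both yield the stated cardinality $\left(\frac{6\sqrt{d_2}}{\epsilon}\right)^{d_1 d_2}$.
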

\begin{proof}
	For any $V \in \cO_{d_1, d_2}$, each column of $V$ has unit $\ell_2$ norm. It is well known that there exists an $\frac{\epsilon}{2\sqrt{d_2}}$-net (in $\ell_2$ norm) of the unit sphere in $\R^{d_1}$ with size $(\frac{6\sqrt{d_2}}{\epsilon})^{d_1}$.
	Using this net to cover all the columns, we obtain a set $\cN' \subset \R^{d_1\times d_2} $ that is an $\frac\epsilon2$-net of $\cO_{d_1, d_2}$ in Frobenius norm and $|\cN'| \le (\frac{6\sqrt{d_2}}{\epsilon})^{d_1d_2}$.
	
	Finally, we need to transform $\cN'$ into an $\epsilon$-net $\cN$ that is a subset of $\cO_{d_1, d_2}$.
	This can be done by projecting each point in $\cN'$ onto $\cO_{d_1, d_2}$. Namely, for each $\bar V \in \cN'$, let $\cP(\bar V)$ be its closest point in $\cO_{d_1, d_2}$ (in Frobenium norm); then define $\cN = \{ \cP(\bar V) \mid \bar V \in \cN' \}$.
	Then we have $|\cN| \le |\cN'| \le (\frac{6\sqrt{d_2}}{\epsilon})^{d_1d_2}$ and $\cN$ is an $\epsilon$-net of $\cO_{d_1, d_2}$, because for any $V \in \cO_{d_1, d_2}$, there exists $\bar V \in \cN'$ such that $\norm{V-\bar V}_F \le \frac\epsilon2$, which implies $\cP(\bar V) \in \cN$ and $\norm{V - \cP(\bar V)}_F \le \norm{V-\bar V}_F + \norm{\bar V - \cP(\bar V)}_F \le \norm{V-\bar V}_F + \norm{\bar V - V}_F = 2 \norm{V-\bar V}_F \le \epsilon$.
\end{proof}

\begin{lemma} \label{lemma:concentration_identity_covariance}
	Let $\ba_1, \ldots, \ba_n$ be i.i.d. $d$-dimensional random vectors such that $\E[\ba_i] = \bm0$, $\E[\ba_i\ba_i^\top] = I$, and $\ba_i$ is $\rho^2$-subgaussian.
	For $\delta \in (0, 1)$, suppose $n \gg \rho^4 (d+\log(1/\delta))$. Then with probability at least $1-\delta$ we have
	\begin{equation*}
	0.9 I \preceq \frac1n \sum_{i=1}^n \ba_i\ba_i^\top \preceq 1.1 I .
	\end{equation*}
\end{lemma}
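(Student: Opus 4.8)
The plan is to reduce the two-sided Löwner inequality to a single operator-norm bound and then establish that bound by a standard $\epsilon$-net plus Bernstein argument. Write $\widehat\Sigma = \frac1n\sum_{i=1}^n \ba_i\ba_i^\top$. The desired conclusion $0.9I\preceq\widehat\Sigma\preceq 1.1I$ is equivalent to $\normop{\widehat\Sigma - I}\le 0.1$, and since $\widehat\Sigma - I$ is symmetric,
\[
\normop{\widehat\Sigma - I} = \sup_{\bv\in\mathbb S^{d-1}} \Big| \tfrac1n\sum_{i=1}^n (\bv^\top\ba_i)^2 - 1 \Big|.
\]
First I would fix a $\tfrac14$-net $\cN$ of the sphere $\mathbb S^{d-1}$ with $|\cN|\le 9^d$ (the usual volumetric bound), and use the elementary fact that for any symmetric $M$ one has $\normop{M}\le 2\sup_{\bv\in\cN}|\bv^\top M\bv|$; hence it suffices to bound $|\tfrac1n\sum_i(\bv^\top\ba_i)^2 - 1|$ uniformly over the finite set $\cN$.

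Next, for a fixed unit vector $\bv$, the scalar $\xi_i := \bv^\top\ba_i$ is $\rho^2$-subgaussian with $\E[\xi_i^2]=1$ (using $\E[\ba_i\ba_i^\top]=I$), so $\xi_i^2 - 1$ is a centered subexponential random variable with $\|\xi_i^2-1\|_{\psi_1}\lesssim \rho^2$. Bernstein's inequality for i.i.d. subexponential variables then gives, for a universal $c>0$ and any $t\in(0,1)$,
\[
\P\Big[ \Big|\tfrac1n\sum_{i=1}^n (\xi_i^2-1)\Big| > t \Big] \le 2\exp\!\big(-c\,n\min\{t^2/\rho^4,\, t/\rho^2\}\big).
\]
Taking $t = 0.05$ and noting $\rho^4\gg 1$ (so the minimum is the quadratic term $t^2/\rho^4$), this probability is at most $2\exp(-c' n/\rho^4)$ for a universal $c'>0$. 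A union bound over $\cN$ shows that the event $\{\,|\tfrac1n\sum_i(\bv^\top\ba_i)^2 - 1| \le 0.05\ \forall\bv\in\cN\,\}$ fails with probability at most $9^d\cdot 2\exp(-c'n/\rho^4)$, which is $\le\delta$ as soon as $n\ge C\rho^4(d+\log(1/\delta))$ for a sufficiently large universal constant $C$. On this event, $\normop{\widehat\Sigma-I}\le 2\cdot 0.05 = 0.1$, giving the claim.

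The only mildly delicate point is the subexponential tail for $\xi_i^2$: one needs that a $\rho^2$-subgaussian scalar has $\|\xi\|_{\psi_2}\lesssim\rho$, hence $\|\xi^2\|_{\psi_1}=\|\xi\|_{\psi_2}^2\lesssim\rho^2$ and, after centering, $\|\xi^2-\E\xi^2\|_{\psi_1}\lesssim\rho^2$; this is a routine Orlicz-norm manipulation. Everything else is bookkeeping of constants, chosen precisely so that the net resolution $\tfrac14$ (factor $2$) and the deviation level $0.05$ combine to the stated $0.9/1.1$ window. Alternatively, one may simply cite a textbook covariance-estimation bound for subgaussian vectors (e.g.\ \citet[Theorem 4.6.1]{vershynin2018high}) and specialize the constants; I would include the short self-contained argument above for completeness.
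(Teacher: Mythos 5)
Your proposal is correct and follows essentially the same route as the paper's proof: reduce to bounding $\|\frac1n\sum_i \ba_i\ba_i^\top - I\|$ via an $\epsilon$-net of the sphere, apply Bernstein's inequality to the centered subexponential variables $(\bv^\top\ba_i)^2-1$ (with $\psi_1$ norm $\lesssim\rho^2$), and union bound over the net. The only differences are cosmetic choices of constants (a $\tfrac14$-net with the standard $\normop{M}\le\frac{1}{1-2\epsilon}\sup_{\cN}|\bv^\top M\bv|$ bound, versus the paper's $\tfrac15$-net and explicit triangle-inequality absorption), and the paper additionally spells out the observation $\rho\ge1$, which your $t=0.05$ case of the Bernstein minimum implicitly relies on.
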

\begin{proof}
	Let $A = \frac1n \sum_{i=1}^n \ba_i\ba_i^\top - I$. Then it suffices to show $\norm{A} \le 0.1$ with probability at least $1-\delta$.
	
	We use a standard $\epsilon$-net argument for the unit sphere $\cS^{d-1} = \{ \bv\in\R^d: \norm{\bv}=1 \}$.
	First, consider any fixed $\bv \in \cS^{d-1}$.
	We have $\bv^\top A \bv = \frac1n \sum_{i=1}^n [(\bv^\top\ba_i)^2 - 1] $.
	From our assumptions on $\ba_i$ we know that $\bv^\top \ba_i$ has mean $0$ and variance $1$ and is $\rho^2$-subgaussian. (Note that we must have $\rho\ge1$.)
	Therefore $(\bv^\top\ba_i)^2 - 1$ is zero-mean and $16\rho^2$-sub-exponential. By Bernstein inequality for sub-exponential random variables, we have for any $\epsilon > 0$,
	\begin{align*}
	\Pr\left[ | \bv^\top A \bv | > \epsilon  \right] \le 2 \exp\left(  -\frac n2 \min\left\{ \frac{\epsilon^2}{(16\rho^2)^2}, \frac{\epsilon}{16\rho^2} \right\}  \right).
	\end{align*}
	Next, take a $\frac{1}{5}$-net $\cN \subset \cS^{d-1}$ of $\cS^{d-1}$ with size $|\cN| \le e^{O(d)}$.
	By a union bound over all $\bv\in \cN$, we have
	\begin{align*}
	\Pr\left[ \max_{\bv\in\cN} | \bv^\top A \bv | > \epsilon  \right] 
	&\le 2 |\cN| \exp\left(  -\frac n2 \min\left\{ \frac{\epsilon^2}{(16\rho^2)^2}, \frac{\epsilon}{16\rho^2} \right\}  \right)\\
	&\le \exp\left( O(d) -\frac n2 \min\left\{ \frac{\epsilon^2}{(16\rho^2)^2}, \frac{\epsilon}{16\rho^2} \right\}  \right).
	\end{align*}
	Plugging in $\epsilon = \frac{1}{20}$ and noticing $\rho>1$, the above inequality becomes
	\begin{align*}
	\Pr\left[ \max_{\bv\in\cN} | \bv^\top A \bv | > \frac{1}{20}  \right] 
	\le \exp\left( O(d) -\frac n2 \cdot \frac{(1/20)^2}{(16\rho^2)^2}  \right)
	\le \delta,
	\end{align*}
	where the last inequality is due to $n \gg \rho^4\left( d+\log(1/\delta) \right)$.
	
	Therefore, with probability at least $1-\delta$ we have $\max_{\bv\in\cN} | \bv^\top A \bv | \le \frac{1}{20} $.
	Suppose this indeed happens.
	Next, for any $\bu \in \cS^{d-1}$, there exists $\bu' \in \cN$ such that $\norm{\bu - \bu'} \le \frac{1}{5}$. Then we have
	\begin{align*}
	\norm{\bu^\top A \bu}
	&\le \norm{(\bu')^\top A \bu'} + 2\norm{(\bu-\bu')^\top A \bu'} + \norm{(\bu-\bu')^\top A (\bu-\bu')} \\
	&\le \frac{1}{20} + 2 \norm{\bu-\bu'} \cdot \norm{A} \cdot \norm{\bu'} + \norm{\bu-\bu'}^2 \cdot \norm{A} \\
	&\le \frac{1}{20} + 2\cdot \frac15 \cdot \norm{A} \cdot 1 + \left(\frac15\right)^2 \cdot \norm{A} \\
	&\le \frac{1}{20} + \frac12 \norm{A}.
	\end{align*}
	Taking a supreme over $u\in\cS^{d-1}$, we obtain $\norm{A} \le \frac{1}{20} + \frac12 \norm{A}$, i.e., $\norm{A}\le \frac{1}{10}$.
\end{proof}

\begin{lemma} \label{lemma:loewner_magic}
If two matrices $A_1$ and $A_2$ (with the same number of columns) satisfy $A_1^\top A_1 \succeq A_2^\top A_2$, then for any matrix $B$ (of compatible dimensions), we have
\begin{equation*} 
A_1^\top P^{\perp}_{A_1B}A_1 \succeq A_2^\top P^{\perp}_{A_2B}A_2.
\end{equation*}
As a consequence, for any matrices $B$ and $B'$ (of compatible dimensions), we have
\begin{equation*}
\norm{P^{\perp}_{A_1B}A_1B'}_F^2 \ge \norm{P^{\perp}_{A_2B}A_2B'}_F^2 .
\end{equation*}
\end{lemma}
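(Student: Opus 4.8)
The plan is to reduce both assertions to the \emph{least-squares characterization} of the projection $P^{\perp}_{AB}$. First I would record the identity that, for any matrix $A$ with $n$ columns, any conformable $B$, and any $\bv$,
\[
\bv^\top A^\top P^{\perp}_{AB} A \bv \;=\; \norm{P^{\perp}_{AB} A\bv}^2 \;=\; \min_{\bw}\, \norm{A\bv - AB\bw}^2 \;=\; \min_{\bw}\, (\bv - B\bw)^\top (A^\top A)(\bv - B\bw).
\]
The first equality is $\norm{P\bx}^2 = \bx^\top P\bx$ for the symmetric idempotent $P = P^{\perp}_{AB}$; the second is the defining property of the orthogonal projection onto $\sp(AB)$, which realizes the distance from $A\bv$ to that subspace (and is valid even when $AB$ is rank-deficient, since the projection via the Moore--Penrose pseudo-inverse always attains this distance). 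A side remark this makes clear is that $A^\top P^{\perp}_{AB} A$ depends on $A$ only through the Gram matrix $A^\top A$, which is what makes the statement plausible.

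Granting this, the first inequality is immediate. Fix $\bv$. For every $\bw$, the hypothesis $A_1^\top A_1 \succeq A_2^\top A_2$ gives $(\bv - B\bw)^\top(A_1^\top A_1)(\bv - B\bw) \ge (\bv - B\bw)^\top(A_2^\top A_2)(\bv - B\bw)$; since one function dominates the other pointwise in $\bw$, the same holds for their minima, so
\begin{align*}
\bv^\top A_1^\top P^{\perp}_{A_1B} A_1 \bv
&= \min_{\bw}\,(\bv - B\bw)^\top(A_1^\top A_1)(\bv - B\bw) \\
&\ge \min_{\bw}\,(\bv - B\bw)^\top(A_2^\top A_2)(\bv - B\bw)
= \bv^\top A_2^\top P^{\perp}_{A_2B} A_2 \bv .
\end{align*}
As $\bv$ is arbitrary, $A_1^\top P^{\perp}_{A_1B} A_1 \succeq A_2^\top P^{\perp}_{A_2B} A_2$.

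For the consequence, I would write the Frobenius norm as a trace: using once more that $P^{\perp}_{A_iB}$ is symmetric and idempotent,
\[
\norm{P^{\perp}_{A_iB} A_i B'}_F^2 = \Trace\!\big( (B')^\top A_i^\top P^{\perp}_{A_iB} A_i B' \big).
\]
Conjugating the Loewner inequality just proved by $B'$ yields $(B')^\top A_1^\top P^{\perp}_{A_1B} A_1 B' \succeq (B')^\top A_2^\top P^{\perp}_{A_2B} A_2 B'$, and since the trace of a positive semidefinite matrix is nonnegative, $X \succeq Y$ implies $\Trace(X) \ge \Trace(Y)$. Combining the two displays gives $\norm{P^{\perp}_{A_1B} A_1 B'}_F^2 \ge \norm{P^{\perp}_{A_2B} A_2 B'}_F^2$.

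There is no substantive obstacle; the only point needing care is to invoke the least-squares identity in the pseudo-inverse-valid form (so that no full-column-rank assumption on $AB$, $A_i$, or $B$ sneaks in), and to keep straight that the two sides of the first inequality are $n \times n$ matrices, with $n$ the common number of columns of $A_1$ and $A_2$, so that the Loewner comparison is between objects of the same size.
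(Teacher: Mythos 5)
Your proof is correct and follows essentially the same route as the paper's: reduce the Loewner inequality to the least-squares characterization $\bv^\top A^\top P^{\perp}_{AB}A\bv = \min_{\bw}(\bv-B\bw)^\top(A^\top A)(\bv-B\bw)$, compare the minima via pointwise domination of the quadratic forms, then obtain the Frobenius-norm consequence by conjugating with $B'$ and taking traces. No gaps; the pseudo-inverse caveat you note is also implicit in the paper's argument.
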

\begin{proof}
	For the first part of the lemma, it suffices to show the following for any vector $\bv$:
\begin{equation*}
\bv^{\top}A_1^\top P^{\perp}_{A_1B}A_1 \bv \ge \bv^\top A_2^\top P^{\perp}_{A_2B}A_2\bv ,
\end{equation*}
which is equivalent to
\begin{equation*}
\min_{\bw} \|A_1 B\bw-A_1\bv\|_2^2
\ge \min_{\bw} \|A_2 B\bw-A_2\bv\|_2^2.
\end{equation*}
Let $\bw^* \in \argmin_{\bw} \|A_1 B\bw-A_1\bv\|_2^2$. Then we have
\begin{align*}
\min_{\bw } \|A_1 B\bw-A_1\bv\|_2^2
&= \|A_1 B\bw^* -A_1\bv\|_2^2\\
&= (B\bw^*-\bv)^\top A_1^\top A_1 (B\bw^*-\bv)  \\
&\ge (B\bw^*-\bv)^\top  A_2^\top A_2 (B\bw^*-\bv) \\
&= \|A_2 B\bw^*-A_2 \bv\|_2^2\\
& \ge \min_{\bw}\|A_2B\bw-A_2\bv\|_2^2,
\end{align*}
finishing the proof of the first part.

For the second part, from $A_1^\top P^{\perp}_{A_1B}A_1 \succeq A_2^\top P^{\perp}_{A_2B}A_2$ we know $$ (B')^\top A_1^\top P^{\perp}_{A_1B}A_1B' \succeq (B')^\top A_2^\top P^{\perp}_{A_2B}A_2B'.$$
Taking trace on both sides, we obtain
\[
\norm{P^{\perp}_{A_1B}A_1B'}_F^2 \ge \norm{P^{\perp}_{A_2B}A_2B'}_F^2,
\]
which finishes the proof.
\end{proof}

\section{Proof of Theorem~\ref{thm:main_nonlinear}}
\label{app:proof_nonlinear}

The proof is conditioned on several high-probability events, each happening with probability at least $1-\Omega(\delta)$.
By a union bound at the end, the final success probability is also at least $1-\Omega(\delta)$.
We can always rescale $\delta$ by a constant factor such that the final probability is at least $1-\delta$.
Therefore, we will not carefully track the constants before $\delta$ in the proof.
All the $\delta$'s should be understood as $\Omega(\delta)$.

We use the following notion of representation divergence.
\begin{defn}[divergence between two representations]
	Given a distribution $q$ over $\R^d$ and two representation functions $\rep, \rep' \in \repclass$, the divergence between $\rep$ and $\rep'$ with respect to $q$ is defined as
	\begin{align*}
		\divergence_{q}\left(\rep,\rep'\right) 		= \Sigma_q\left(\rep',\rep'\right) 
		-\Sigma_q\left(\rep',\rep\right)\left(\Sigma_q\left(\rep,\rep\right)\right)^{\dagger}\Sigma_q\left(\rep,\rep'\right) \in \R^{k\times k}.
	\end{align*}
\end{defn}
It is easy to verify $D_q(\rep, \rep')\succeq 0$, $D_q(\rep, \rep)=0$ for any $\rep, \rep'$ and $q$. See Lemma~\ref{lem:cov_implies_div}'s proof.

The next lemma shows a relation between (symmetric) covariance and divergence.
\begin{lemma}\label{lem:cov_implies_div}
	Suppose that two representation functions $\rep,\rep' \in \repclass$ and two distributions $q, q'$ over $\R^d$ satisfy $\Lambda_q(\rep,\rep') \succeq \alpha \cdot \Lambda_{q'}(\rep,\rep')$ for some $\alpha>0$.
	Then it must hold that $\divergence_{q}(\rep,\rep') \succeq \alpha \cdot  \divergence_{q'}(\rep,\rep')$.
\end{lemma}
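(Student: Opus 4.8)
The plan is to recognize $\divergence_q(\rep,\rep')$ as the \emph{generalized Schur complement} of the block $\Sigma_q(\rep,\rep)$ in the positive semidefinite matrix $\Lambda_q(\rep,\rep')$, and then to exploit the variational description of the Schur complement, which converts the Loewner comparison $\Lambda_q(\rep,\rep')\succeq\alpha\,\Lambda_{q'}(\rep,\rep')$ directly into the comparison of Schur complements we want. Abbreviate $A_q=\Sigma_q(\rep,\rep)$, $B_q=\Sigma_q(\rep,\rep')$, $C_q=\Sigma_q(\rep',\rep')$, so that $\Lambda_q(\rep,\rep')=\bigl[\begin{smallmatrix}A_q & B_q\\ B_q^\top & C_q\end{smallmatrix}\bigr]$ and $\divergence_q(\rep,\rep')=C_q-B_q^\top A_q^\dagger B_q$, and for $X\in\R^{k\times k}$ define
\[
g_q(X):=\begin{bmatrix}-X\\ I\end{bmatrix}^\top \Lambda_q(\rep,\rep')\begin{bmatrix}-X\\ I\end{bmatrix}=X^\top A_q X-X^\top B_q-B_q^\top X+C_q\ \in\ \R^{k\times k}.
\]

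First I would record the elementary matrix fact underlying everything: since $\Lambda_q(\rep,\rep')\succeq0$ we have $\range(B_q)\subseteq\range(A_q)$ (if $A_q\bv=0$, positivity of $\Lambda_q(\rep,\rep')$ forces $B_q^\top\bv=0$), hence $A_qA_q^\dagger B_q=B_q$, and then the completion-of-squares identity
\[
g_q(X)-\divergence_q(\rep,\rep')=(X-A_q^\dagger B_q)^\top A_q\,(X-A_q^\dagger B_q)\ \succeq\ 0
\]
holds for every $X$, with equality at $X=A_q^\dagger B_q$. (Incidentally, taking $X=0$ gives $\Sigma_q(\rep',\rep')\succeq\divergence_q(\rep,\rep')$; noting $g_q(\cdot)\succeq0$ as a congruence of $\Lambda_q(\rep,\rep')\succeq0$ and evaluating at $X=A_q^\dagger B_q$ gives $\divergence_q(\rep,\rep')\succeq0$; and the same identity with $\rep'=\rep$ gives $\divergence_q(\rep,\rep)=0$ — recovering the claims asserted just before the lemma.)

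With this in hand the lemma is two lines. Fix any $X\in\R^{k\times k}$. Applying the congruence $M\mapsto\bigl[\begin{smallmatrix}-X\\ I\end{smallmatrix}\bigr]^\top M\bigl[\begin{smallmatrix}-X\\ I\end{smallmatrix}\bigr]$ to the hypothesis $\Lambda_q(\rep,\rep')-\alpha\,\Lambda_{q'}(\rep,\rep')\succeq0$ gives $g_q(X)-\alpha\,g_{q'}(X)\succeq0$, while the matrix fact above applied to $q'$ gives $g_{q'}(X)\succeq\divergence_{q'}(\rep,\rep')$; since $\alpha>0$, these combine to $g_q(X)\succeq\alpha\,\divergence_{q'}(\rep,\rep')$. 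Now choose $X=A_q^\dagger B_q$, for which the matrix fact applied to $q$ gives $g_q(A_q^\dagger B_q)=\divergence_q(\rep,\rep')$ exactly, and conclude $\divergence_q(\rep,\rep')\succeq\alpha\,\divergence_{q'}(\rep,\rep')$.

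The only point that needs genuine care is the bookkeeping around the Moore–Penrose pseudo-inverse when $\Sigma_q(\rep,\rep)$ or $\Sigma_{q'}(\rep,\rep)$ is rank-deficient: both the ``$g\succeq\divergence$ for all $X$'' bound and the ``equality at $X=A^\dagger B$'' statement rely on the range inclusions $\range(\Sigma_q(\rep,\rep'))\subseteq\range(\Sigma_q(\rep,\rep))$ (and likewise for $q'$), which is precisely where positive semidefiniteness of the symmetric covariance $\Lambda_{\cdot}(\rep,\rep')$ is used. Everything else is routine linear algebra.
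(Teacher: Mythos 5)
Your proposal is correct and takes essentially the same route as the paper: both arguments rest on the variational (Schur-complement) characterization of $\divergence_q(\rep,\rep')$ as the minimum of the congruence of $\Lambda_q(\rep,\rep')$ by $[-X;\,I]$ — the paper does this scalar-wise, fixing a test vector $\bv$ and minimizing the nonnegative convex quadratic $\bw \mapsto [\bw^\top,-\bv^\top]\Lambda_q(\rep,\rep')[\bw^\top,-\bv^\top]^\top$ with minimizer $\bw^*=(\Sigma_q(\rep,\rep))^\dagger\Sigma_q(\rep,\rep')\bv$ — and both then push the Loewner hypothesis through this minimization by evaluating the $q$-form at its own minimizer while lower-bounding the $q'$-form by its minimum. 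Your matrix-level completion of squares and the explicit range-inclusion bookkeeping for the pseudo-inverse make precise a point the paper handles only implicitly (existence of the global minimizer of the nonnegative quadratic), but the underlying idea is identical.
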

\begin{proof}
	Fix any $\bv \in \R^{k}$. We will prove $\bv^\top \divergence_{q}(\rep,\rep') \bv \ge \alpha \cdot  \bv^\top \divergence_{q'}(\rep,\rep') \bv$, which will complete the proof of the lemma.
	
	We define a quadratic function $f:\R^k\to\R$ as $f(\bw) = [\bw^\top, -\bv^\top] \Lambda_q(\rep,\rep') \begin{bmatrix}
	\bw\\-\bv
	\end{bmatrix}$. According to Definition~\ref{defn:nonlinear_cov}, we can write
	\begin{align*}
	f(\bw) &= \bw^\top \Sigma_q(\rep,\rep) \bw - 2 \bw^\top \Sigma_q(\rep,\rep') \bv + \bv^\top \Sigma_q(\rep',\rep') \bv \\
	&= \E_{\bx\sim q} \left[ \left( \bw^\top \rep(\bx) - \bv^\top \rep'(\bx) \right)^2 \right].
	\end{align*}
	Therefore we have $f(\bw)\ge0$ for any $\bw\in\R^k$.\footnote{Note that we have proved $\Lambda_q(\rep,\rep') \succeq0$.}
	This means that $f$ must have a global minimizer in $\R^k$. Since $f$ is convex, taking its gradient $\nabla f(\bw) = 2\Sigma_q(\rep,\rep) \bw - 2 \Sigma_q(\rep,\rep') \bv$ and setting the gradient to $\bm0$, we obtain a global minimzer $\bw^* = (\Sigma_q(\rep,\rep))^\dagger \Sigma_q(\rep,\rep')\bv$.
	Plugging this into the definition of $f$, we obtain\footnote{Note that \eqref{eqn:nonlinear_cov_div_relation} implies $\divergence_q(\rep,\rep') \succeq0$.}
	\begin{equation} \label{eqn:nonlinear_cov_div_relation}
	\min_{\bw\in\R^k} f(\bw) = f(\bw^*) = \bv^\top \divergence_q(\rep,\rep') \bv.
	\end{equation}
	Similarly, letting $g(\bw) = [\bw^\top, -\bv^\top] \Lambda_{q'}(\rep,\rep') \begin{bmatrix}
	\bw\\-\bv
	\end{bmatrix}$,
	we have
	\begin{align*}
	\min_{\bw\in\R^k} g(\bw)
	= \bv^\top \divergence_{q'}(\rep,\rep') \bv.
	\end{align*}
	
	From $\Lambda_q(\rep,\rep') \succeq \alpha \cdot \Lambda_{q'}(\rep,\rep')$ we know $f(\bw) \ge \alpha g(\bw)$ for any $\bw \in \R^k$.
	Recall that $ \bw^* \in \argmin_{\bw\in\R^k} f(\bw)$. We have
	\begin{align*}
	\alpha  \bv^\top \divergence_{q'}(\rep,\rep') \bv
	&= \alpha \min_{\bw\in\R^k} g(\bw)
	\le \alpha g(\bw^*)
	\le f(\bw^*) \\
	&= \min_{\bw\in\R^k} f(\bw)  = \bv^\top \divergence_q(\rep,\rep') \bv.
	\end{align*}
	This finishes the proof.
\end{proof}

\begin{claim}[analogue of Claim~\ref{claim:linear_target_guarantee}]
	\label{claim:nonlinear_target_guarantee}
	Under the setting of Theorem~\ref{thm:main_nonlinear}, with probability at least $1-\delta$ we have
\begin{align*}
\frac{1}{n_2} \norm{P^\perp_{\hat{\rep}(X_{T+1})} \phi^*(X_{T+1})}_F^2 \lesssim \frac{\sigma^2 \left(\gausswidth(\cF_{\cX}(\Phi))^2 + \log\frac1\delta \right)}{n_1 \sigma_{k}^2(W^*)} .
	\end{align*}
\end{claim}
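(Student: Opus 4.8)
The plan is to follow the proof of Claim~\ref{claim:linear_target_guarantee} almost verbatim, with the input covariances $\tfrac1{n_1}X_t^\top X_t$, $\Sigma_t$, $\Sigma_{T+1}$ there replaced by the symmetric covariances $\Lambda_{\hat{p}_t}(\hphi,\phi^*)$ and $\Lambda_p(\hphi,\phi^*)$ of the pair of representations $(\hphi,\phi^*)$ from Definition~\ref{defn:nonlinear_cov}; here $\hat{p}_t$ is the empirical distribution of the $n_1$ inputs of source task $t$, and all tasks share the single input distribution $p$. The chain will be: in-sample error $\gtrsim$ source projection error $\gtrsim\sigma_k^2(W^*)\,\Trace(D_p(\hphi,\phi^*))\gtrsim\sigma_k^2(W^*)\cdot\tfrac1{n_2}\|P^\perp_{\hphi(X_{T+1})}\phi^*(X_{T+1})\|_F^2$, where $D_p$ is the divergence defined just before Lemma~\ref{lem:cov_implies_div}. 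I would start from the in-sample bound of Claim~\ref{claim:nonlinear_in_sample}, $\sum_{t=1}^T\|\hphi(X_t)\hbw_t-\phi^*(X_t)\bw_t^*\|^2\lesssim\sigma^2(\gausswidth(\cF_{\cX}(\Phi))^2+\log\tfrac1\delta)$. Since $\hbw_t$ minimizes $\|\by_t-\hphi(X_t)\bw\|^2$ for the fixed $\hphi$, we have $\hphi(X_t)\hbw_t=P_{\hphi(X_t)}\by_t=P_{\hphi(X_t)}(\phi^*(X_t)\bw_t^*+\bz_t)$; substituting and using that $-P^\perp_{\hphi(X_t)}\phi^*(X_t)\bw_t^*$ and $P_{\hphi(X_t)}\bz_t$ are orthogonal, the squared norm splits and dropping the noise part gives
\[
\sum_{t=1}^T\|P^\perp_{\hphi(X_t)}\phi^*(X_t)\bw_t^*\|^2\ \le\ \sum_{t=1}^T\|\hphi(X_t)\hbw_t-\phi^*(X_t)\bw_t^*\|^2\ \lesssim\ \sigma^2\big(\gausswidth(\cF_{\cX}(\Phi))^2+\log\tfrac1\delta\big).
\]

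Next I would convert the source projection error into the population divergence. Because $\hphi$ is data-dependent, the covariance comparison must be uniform over $\Phi$: by Assumption~\ref{asmp:covariance_unif_concen} with $n_1\ge N_{\mathrm{unif}}(\Phi,p,\tfrac{\delta}{3T})$ and a union bound over $t\in[T]$, with probability $\ge 1-\tfrac\delta3$ one has $\tfrac1{n_1}[\hphi(X_t),\phi^*(X_t)]^\top[\hphi(X_t),\phi^*(X_t)]=\Lambda_{\hat{p}_t}(\hphi,\phi^*)\succeq 0.9\,\Lambda_p(\hphi,\phi^*)$ for all $t$. Writing $\Lambda_p(\hphi,\phi^*)=G^\top G$ with $G\in\R^{2k\times 2k}$, and letting $E_1=[I_k\ 0]^\top,E_2=[0\ I_k]^\top\in\R^{2k\times k}$ so that $[\hphi(X_t),\phi^*(X_t)]E_1=\hphi(X_t)$ and $[\hphi(X_t),\phi^*(X_t)]E_2=\phi^*(X_t)$, I apply Lemma~\ref{lemma:loewner_magic} with $A_1=[\hphi(X_t),\phi^*(X_t)]$, $A_2=\sqrt{0.9n_1}\,G$, $B=E_1$, $B'=E_2\bw_t^*$, and then use the Schur-complement identity $E_2^\top G^\top P^\perp_{GE_1}GE_2=D_p(\hphi,\phi^*)$ (exactly the computation inside the proof of Lemma~\ref{lem:cov_implies_div}) to get $\|P^\perp_{\hphi(X_t)}\phi^*(X_t)\bw_t^*\|^2\ge 0.9\,n_1\,(\bw_t^*)^\top D_p(\hphi,\phi^*)\bw_t^*$. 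Summing over $t$ and using $W^*(W^*)^\top\succeq\sigma_k^2(W^*)I_k$ together with $D_p(\hphi,\phi^*)\succeq0$ yields
\[
\sum_{t=1}^T\|P^\perp_{\hphi(X_t)}\phi^*(X_t)\bw_t^*\|^2\ \ge\ 0.9\,n_1\,\Trace\big[D_p(\hphi,\phi^*)W^*(W^*)^\top\big]\ \ge\ 0.9\,n_1\,\sigma_k^2(W^*)\,\Trace\big(D_p(\hphi,\phi^*)\big).
\]

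Finally I would relate $\Trace(D_p(\hphi,\phi^*))$ back to the target quantity. Since $X_{T+1}$ is independent of $\hphi$ (which is determined by the source data alone), I condition on the source data and invoke the point-wise Assumption~\ref{asmp:covariance_point_concen} for the now-fixed pair $(\hphi,\phi^*)$ with $n_2\ge N_{\mathrm{point}}(\Phi,p,\tfrac\delta3)$: with probability $\ge 1-\tfrac\delta3$, $\tfrac1{n_2}[\hphi(X_{T+1}),\phi^*(X_{T+1})]^\top[\hphi(X_{T+1}),\phi^*(X_{T+1})]\preceq 1.1\,\Lambda_p(\hphi,\phi^*)$. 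The same application of Lemma~\ref{lemma:loewner_magic} in the reverse direction ($A_1=\sqrt{1.1n_2}\,G$, $A_2=[\hphi(X_{T+1}),\phi^*(X_{T+1})]$, $B=E_1$, $B'=E_2$) gives $\tfrac1{n_2}\|P^\perp_{\hphi(X_{T+1})}\phi^*(X_{T+1})\|_F^2\le 1.1\,\Trace(D_p(\hphi,\phi^*))$. Chaining the three displays and taking a union bound over the (at most) three failure events of probability $\tfrac\delta3$ each gives
\[
\frac{1}{n_2}\|P^\perp_{\hphi(X_{T+1})}\phi^*(X_{T+1})\|_F^2\ \le\ \frac{1.1}{0.9\,n_1\,\sigma_k^2(W^*)}\sum_{t=1}^T\|P^\perp_{\hphi(X_t)}\phi^*(X_t)\bw_t^*\|^2\ \lesssim\ \frac{\sigma^2\big(\gausswidth(\cF_{\cX}(\Phi))^2+\log\frac1\delta\big)}{n_1\,\sigma_k^2(W^*)},
\]
which is the claim.

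The step I expect to be the main obstacle is the conversion from empirical covariance domination to domination of projection residuals (the second paragraph): one must respect the data-dependence of $\hphi$ on the source side—forcing uniform rather than point-wise concentration—and translate the Loewner inequality on $2k\times 2k$ Gram matrices into the inequality $\|P^\perp_{\hphi(X_t)}\phi^*(X_t)\bw_t^*\|^2\gtrsim n_1(\bw_t^*)^\top D_p(\hphi,\phi^*)\bw_t^*$ via the divergence/Schur-complement identity. This is where the common-input-distribution assumption of Section~\ref{sec:general}, together with Definition~\ref{defn:nonlinear_cov} and Lemma~\ref{lem:cov_implies_div}, is essential; the target side is comparatively routine precisely because $X_{T+1}$ is fresh data independent of $\hphi$, so point-wise concentration suffices there.
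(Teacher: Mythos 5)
Your proposal is correct and follows essentially the same route as the paper's proof: start from the in-sample bound of Claim~\ref{claim:nonlinear_in_sample}, use the optimality of $\hbw_t$ to reduce to the source projection residuals, transfer empirical covariance domination (uniform on the source side, point-wise on the independent target side) to the population divergence $D_p(\hphi,\phi^*)$, and close with $W^*(W^*)^\top\succeq\sigma_k^2(W^*)I$. The only cosmetic difference is that you perform the Loewner-to-residual transfer via Lemma~\ref{lemma:loewner_magic} plus the Schur-complement identity, whereas the paper invokes Lemma~\ref{lem:cov_implies_div} together with the identity between empirical divergence and projection residual — the same computation packaged differently.
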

\begin{proof}
	We continue to use the notation from Claim~\ref{claim:nonlinear_in_sample} and its proof.
	
	Let $\hp_t$ be the empirical distribution over the samples in $X_t$ ($t\in[T+1]$).
	According to Assumptions~\ref{asmp:covariance_point_concen} and~\ref{asmp:covariance_unif_concen} as well as the setting in Theorem~\ref{thm:main_nonlinear}, we know that the followings are satisfied with probability at least $1-\delta$:
	\begin{equation} \label{eqn:nonlinear_cov_vonvergence}
	\begin{aligned}
	& 0.9 \Lambda_{p}(\rep,\rep') \preceq \Lambda_{\hp_t}(\rep,\rep') \preceq 1.1 \Lambda_{p}(\rep,\rep'), \quad \forall \rep,\rep'\in\repclass, \forall t\in[T], \\
	&0.9 \Lambda_{p}(\hphi,\phi^*) \preceq \Lambda_{\hp_{T+1}}(\hphi,\phi^*) \preceq 1.1 \Lambda_{p}(\hphi,\phi^*).
	\end{aligned}
	\end{equation}
	Notice that $\hphi$ and $\phi^*$ are independent of the samples from the target task, so $n_2 \ge N_{\mathrm{point}}(\repclass, p, \frac\delta3)$ is sufficient for the second inequality above to hold with high probability.
	Using Lemma~\ref{lem:cov_implies_div}, we know that~\eqref{eqn:nonlinear_cov_vonvergence} implies
	\begin{equation} \label{eqn:nonlinear_div_vonvergence}
	\begin{aligned}
	& 0.9 \divergence_{p}(\rep,\rep') \preceq \divergence_{\hp_t}(\rep,\rep') \preceq 1.1 \divergence_{p}(\rep,\rep'), \quad \forall \rep,\rep'\in\repclass, \forall t\in[T], \\
	&0.9 \divergence_{p}(\hphi,\phi^*) \preceq \divergence_{\hp_{T+1}}(\hphi,\phi^*) \preceq 1.1 \divergence_{p}(\hphi,\phi^*).
	\end{aligned}
	\end{equation}
	
	By the optimality of $\hat{\rep}$ and $\hbw_1, \ldots, \hbw_T$ for \eqref{eqn:mtrl_concise}, we know $\hat{\rep}(X_t)\hbw_t = P_{\hat{\rep}(X_t)} \by_t = P_{\hat{\rep}(X_t)} (\phi^*(X_t) \bw^*_t + \bz_t)$.
	Then we have the following chain of inequalities:
	\begin{align*}
	 &\sigma^2 \left(\gausswidth(\cF_{\cX}(\Phi))^2 + \log\frac1\delta \right)\\
	 \gtrsim\,& \sum_{t=1}^T \norm{\hphi(X_t) \hbw_t - \phi^*(X_t) \bw^*_t }^2 \tag{Claim~\ref{claim:nonlinear_in_sample}} \\
	 =\,& \sum_{t=1}^T \norm{ P_{\hat{\rep}(X_t)} (\phi^*(X_t) \bw^*_t + \bz_t) - \phi^*(X_t) \bw^*_t  }^2 \\
	 =\,& \sum_{t=1}^T \norm{ -P^\perp_{\hat{\rep}(X_t)} \phi^*(X_t) \bw^*_t + P_{\hat{\rep}(X_t)}\bz_t  }^2 \\
	 =\,& \sum_{t=1}^T \left( \norm{ P^\perp_{\hat{\rep}(X_t)} \phi^*(X_t)\bw^*_t}^2 + \norm{  P_{\hat{\rep}(X_t)}\bz_t  }^2 \right) \tag{cross term is 0} \\
	 \ge\,& \sum_{t=1}^T  \norm{ P^\perp_{\hat{\rep}(X_t)} \phi^*(X_t)\bw^*_t}^2 \\
	 =\,& \sum_{t=1}^T (\bw_t^*)^\top \rep^*(X_t)^\top  \left( I - \hphi(X_t) \left( \hphi(X_t)^\top \hphi(X_t) \right)^\dagger \hphi(X_t)^\top \right)\rep^*(X_t)\bw_t^*\\
	 =\,& n_1 \sum_{t=1}^T (\bw_t^*)^\top \divergence_{\hp_t}(\hphi, \phi^*) \bw_t^*  \\
	 \ge\,& 0.9n_1 \sum_{t=1}^T (\bw_t^*)^\top \divergence_{p}(\hphi, \phi^*) \bw_t^* \tag{\eqref{eqn:nonlinear_div_vonvergence}}\\
	 =\,& 0.9 n_1 \norm{ \left( \divergence_{p}(\hphi, \phi^*) \right)^{1/2} W^* }_F^2 \\
	 \ge\,& 0.9n_1 \norm{\left( \divergence_{p}(\hphi, \phi^*) \right)^{1/2}}_F^2 \sigma_{k}^2(W^*)\\
	 =\,& 0.9n_1 \Trace\left[ \divergence_{p}(\hphi, \phi^*) \right] \sigma_{k}^2(W^*)\\
	 \ge\,& \frac{0.9n_1}{1.1} \Trace\left[ \divergence_{\hp_{T+1}}(\hphi, \phi^*) \right] \sigma_{k}^2(W^*) \tag{\eqref{eqn:nonlinear_div_vonvergence}}\\
	 =\,& \frac{0.9n_1}{1.1n_2} \norm{P^\perp_{\hat{\rep}(X_{T+1})} \phi^*(X_{T+1})}_F^2 \sigma_{k}^2(W^*),
	\end{align*}
	completing the proof.
\end{proof}

Now we can finish the proof of Theorem~\ref{thm:main_nonlinear}.

\begin{proof}[Proof of Theorem~\ref{thm:main_nonlinear}]
	The excess risk is bounded as
\begin{align*}
	&\ER(\hphi, \hbw_{T+1})\\
	=\,	&\frac12 \expect_{\bx \sim p}\left[\left( \hbw_{T+1}^\top \hphi(\bx) - (\bw_{T+1}^*)^\top \phi^*(\bx) \right)^2\right] \\
	=\, &\frac12 \begin{bmatrix}\hat{\bw}_{T+1}\\-\bw_{T+1}^*\end{bmatrix}^\top 
 \Lambda_p(\hphi,\phi^*) \begin{bmatrix}\hat{\bw}_{T+1}\\-\bw_{T+1}^*\end{bmatrix}\\
\lesssim \, & \begin{bmatrix}\hat{\bw}_{T+1}\\-\bw_{T+1}^*\end{bmatrix}^\top 
 \Lambda_{\hp_{T+1}}(\hphi,\phi^*) \begin{bmatrix}\hat{\bw}_{T+1}\\-\bw_{T+1}^*\end{bmatrix} \tag{\eqref{eqn:nonlinear_cov_vonvergence}}\\
=\, &\frac{1}{n_2} \norm{\hat{\rep}\left(X_{T+1}\right)\hat{\bw}_{T+1}-\rep^*\left(X_{T+1}\right)\bw_{T+1}^*}^2 \\
=\, &\frac{1}{n_2} \norm{ -P^\perp_{\hphi(X_{T+1})}\rep^*\left(X_{T+1}\right)\bw_{T+1}^* + P_{\hphi(X_{T+1})} \bz_{T+1}}^2 \\
=\, & \frac{1}{n_2} \left(\norm{P_{\hat{\rep}\left(X_{T+1}\right)}^\perp \rep^*\left(X_{T+1}\right) \bw_{T+1}^*}^2  + \norm{P_{\hat{\rep}\left(X_{T+1}\right)}\bz_{T+1}}^2\right) \\
\lesssim\, & \frac{1}{n_2} \norm{P_{\hat{\rep}\left(X_{T+1}\right)}^\perp \rep^*\left(X_{T+1}\right) \bw_{T+1}^*}^2  + \frac{\sigma^2(k+\log\frac1\delta)}{n_2} . \tag{using $\chi^2$ tail bound}
\end{align*}
Taking expectation over $\bw_{T+1}^*\sim\nu$, we get
\begin{align*}
&\E_{\bw_{T+1}^*\sim\nu}[\ER(\hphi, \hbw_{T+1})]\\
\lesssim\,& \frac{1}{n_2} \norm{P_{\hat{\rep}\left(X_{T+1}\right)}^\perp \rep^*\left(X_{T+1}\right) }_F^2 \norm{\E_{\bw\sim \nu}[\bw\bw^\top]}  + \frac{\sigma^2(k+\log\frac1\delta)}{n_2} \\
\lesssim \,& \frac{1}{k n_2} \norm{P_{\hat{\rep}\left(X_{T+1}\right)}^\perp \rep^*\left(X_{T+1}\right) }_F^2  + \frac{\sigma^2(k+\log\frac1\delta)}{n_2} \\
\lesssim \,& \frac{1}{k } \cdot \frac{\sigma^2 \left(\gausswidth(\cF_{\cX}(\Phi))^2 + \log\frac1\delta \right)}{n_1 \sigma_{k}^2(W^*)}  + \frac{\sigma^2(k+\log\frac1\delta)}{n_2} \tag{Claim~\ref{claim:nonlinear_target_guarantee}} \\
\lesssim \,&   \frac{\sigma^2 \left(\gausswidth(\cF_{\cX}(\Phi))^2 + \log\frac1\delta \right)}{n_1 T }  + \frac{\sigma^2(k+\log\frac1\delta)}{n_2} \tag{$\sigma_k(W^*) \gtrsim \frac Tk$} ,
\end{align*}
finishing the proof.
\end{proof}
	
	\section{Proof of Theorem \ref{thm:kernel_main}}  \label{app:proof_kernel}

\subsection{Proof Sketch of Theorem~\ref{thm:kernel_main}}
Let $R = \|\Theta^*\|_*$. Recall $\hat B$ and $\hat W$ are derived from Eqn. \eqref{eqn:regularized_mtrl} and let $\hat\Theta:=\hat B\hat W$.
We first note that the constraint set $\{\|\bw\|_i^2\leq R/T,\|B\|_F^2\leq R\}$ ensures $\|W\|_F^2\leq R$ and $\|WB\|_*\leq R$ at global minimum. On the other hand, our constraint for $W,B$ is also expressive enough to attain any $\hat\Theta$ that satisfies $\|\hat\Theta\|_*\leq R$.  See reference e.g. \cite{srebro2005rank}. Therefore at global minimum $\|\hat W\|_F\leq \sqrt{R}, \|\hat B \|_F\leq \sqrt{R}$ and $\|\hat\Theta\|_*\leq R$. 

For the ease of proof, we introduce the following auxiliary functions and parameters. Write 
\begin{align*}
L_1 (W)= &
\frac{1}{2}\norm{\Sigma^{1/2} \Theta^* - \Sigma^{1/2} \hat B W}^2_F,  &&\\
L_1^\lambda(W)= &
L_1(W)+\frac{\lambda}{2}\|W\|_F^2,  &W_1^{\lambda}& \leftarrow \argmin_{W}\{L_1^{\lambda}(W)\},\\
L_2(\bw)=& \frac{1}{2}\norm{\Sigma^{1/2} \btheta^*_{T+1} - \Sigma^{1/2} \hat B \bw}^2, & \bw_2^{\lambda}& \leftarrow \argmin_{\bw}\{L_2(\bw)+\lambda/2\|w\|^2\},\\
\hat L_{1}(W)= &\frac{1}{2n_1}\norm{\cX(\Theta^* - \hat BW)}^2, &&\\
\hat L_1^\lambda(W)= & \hat L_1(W)+\frac{\lambda}{2}\|W\|_F^2  & \bar{W}_1^{\lambda}  &\leftarrow \argmin_{W}\{\hat{L}_1^{\lambda}(W)\},\\
\hat L_{2}(\bw)= &\frac{1}{2n_2}\norm{X_{T+1} \btheta^*_{T+1} - X_{T+1} \hat B \bw}^2, & \bar{\bw}_2& \leftarrow \argmin_{\bw\leq r}\{\hat{L}_2(\bw)\}.
\end{align*}

We define terms $\epsilon_{ic,1} $ and $\epsilon_{ic,2}$ that will be used to bound intrinsic dimension concentration error in the input signal. Namely with high probability, $\|\Sigma^{1/2} \Theta \| - \sqrt{1/n_1\sum_{t=1}^T \|X_t\theta_t\|^2 }  \leq \epsilon_{ic,1}\|\Theta\|_*$, and similarly $\|\Sigma^{1/2} \hat B\bv \| - \sqrt{\frac{1}{n_2}\|X\hat B\bv\|^2}\leq \epsilon_{ic,2}\|v\|_2 $. Additionally we use $\epsilon_{ee,i}, i\in \{1,2\}$ to bound the estimation error (for fixed design) incurred when using noisy label $\by_{T+1}$ and $Y$. 

The choice of $\epsilon_{ee,i}$, and $ \epsilon_{ic,i}$ are respectively justified in Lemma \ref{lemma:bound_lambda}, Claim \ref{lemma:estimation_target},  Lemma \ref{lemma:intrinsic_concentration} and Claim \ref{claim:subgaussian_target_concentration},  along with some more detailed descriptions.

\begin{proof}[Proof of Theorem \ref{thm:kernel_main}]
	\begin{align*}
& \E_{\btheta^*\sim \nu} ER(\hat B, \hat \bw_{T+1}) \\
= & \E_{\btheta^*\sim \nu} L_2(\hat\bw_{T+1}) \\
\lesssim & \E_{\btheta^*\sim \nu} \hat L_2(\hat\bw_{T+1}) + \epsilon_{ic,2}^2r^2 \tag{Claim \ref{claim:subgaussian_target_concentration}}  \\
\lesssim & \E_{\btheta^*\sim \nu} \hat L_2(\bar \bw_2) + \epsilon_{ee,2}^2r + \epsilon_{ic,2}^2r^2  \tag{Lemma \ref{lemma:estimation_target}}  \\
\leq & \E_{\btheta^*\sim \nu} \hat L_2(\bw_2^{\lambda}) + \epsilon_{ee,2}^2r + \epsilon_{ic,2}^2 r^2 \tag{Definition of $\bar \bw_2$} \\
\lesssim & \E_{\btheta^*\sim \nu} L_2(\bw_2^{\lambda}) + \epsilon_{ee,2}^2r + \epsilon_{ic,2}^2r^2 \tag{Claim \ref{claim:subgaussian_target_concentration}} \\
= & \frac{1}{T} L_1(W^{\lambda}_1) + \epsilon_{ee,2}^2r + \epsilon_{ic,2}^2r^2 \tag{Claim  \ref{claim:source_target_connection}}   \\
\lesssim & \frac{\lambda R}{T} + \epsilon_{ee,2}^2r + \epsilon_{ic,2}^2r^2 \tag{Lemma \ref{claim:kernel_source_concentration}} \\
	\lesssim & \frac{\epsilon_{ee,1}^2R +  \epsilon_{ic,1}^2R^2}{T} + \epsilon_{ee,2}^2r + \epsilon_{ic,2}^2r^2. \tag{Choices of $\lambda$}
	\end{align*}
Each step is with high probability $1-\delta/10$ over the randomness of $\cX$ or $X_{T+1}$. Therefore overall by union bound, with probability $1-\delta$, by plugging in the values of $ \epsilon_{ic,i}$ and $\epsilon_{ee,i}$ we have:
\begin{equation*}
\E_{\btheta^*\sim \nu} ER(\hat B, \hat \bw_{T+1}) \leq  \frac{\sigma R}{\sqrt{T}} \tilde{O} \left(\frac{\sqrt{\Trace(\Sigma)}}{\sqrt{Tn_1}} +
\frac{\sqrt{\|\Sigma\|}}{\sqrt{n_2}}\right)  + \frac{\rho^4 R^2}{T} \tilde O\left( \frac{\Trace{\Sigma}}{n_1}+\frac{\|\Sigma\|}{n_2} \right) .
\end{equation*}	
Notice a term $\|\Sigma\|/n_1$ is absorbed by $\|\Sigma\|/n_2$	since we assume $n_1\geq n_2$. 
\end{proof}

\begin{claim}[guarantee with source regularization]
	\label{claim:norm_theta}
	\begin{equation*}
	\frac{1}{n_1}\|\cX(\Theta^*-\hat{\Theta})\|_F^2 + \lambda \|\hat{\Theta}\|_*\leq 3\lambda\|\Theta^*\|_*\leq 3\lambda R, 
	\end{equation*}
and $\|\hB\|_F^2\leq 3R$, $\|\hW\|_F^2\leq 3R$	for any $\lambda\geq \frac{2}{n}\|\cX^*(Z)\|_2$. 
\end{claim}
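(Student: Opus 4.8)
The plan is to reduce the factored, weight-decay-penalized program \eqref{eqn:regularized_mtrl} to the familiar nuclear-norm-penalized least squares problem and then run the textbook ``basic inequality'' argument for trace-norm regression. First I would record the variational identity $\|\Theta\|_* = \min_{\Theta = BW}\tfrac12(\|B\|_F^2 + \|W\|_F^2)$, where the minimum ranges over factorizations $B\in\R^{d\times T}$, $W\in\R^{T\times T}$ and is attained with $\|B\|_F^2 = \|W\|_F^2 = \|\Theta\|_*$. Consequently, after minimizing out the factorization, the objective of \eqref{eqn:regularized_mtrl} equals $\tfrac{1}{2n_1}\|Y - \cX(\Theta)\|_F^2 + \lambda\|\Theta\|_*$, so $\hat\Theta := \hB\hW$ is a minimizer of the convex program $\min_{\Theta}\tfrac{1}{2n_1}\|Y-\cX(\Theta)\|_F^2 + \lambda\|\Theta\|_*$. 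Moreover, at a global optimum the data-fit term depends only on $\hat\Theta$, and among all factorizations of $\hat\Theta$ the objective is smallest for the balanced, aligned one, so $\|\hB\|_F^2 = \|\hW\|_F^2 = \|\hat\Theta\|_*$. Thus both norm bounds on $\hB,\hW$ will follow immediately once I prove $\|\hat\Theta\|_* \le 3R$.

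Next I would carry out the basic inequality. Optimality of $\hat\Theta$ for the convex program gives
\[
\tfrac{1}{2n_1}\|Y-\cX(\hat\Theta)\|_F^2 + \lambda\|\hat\Theta\|_* \le \tfrac{1}{2n_1}\|Y-\cX(\Theta^*)\|_F^2 + \lambda\|\Theta^*\|_* .
\]
Substituting $Y = \cX(\Theta^*) + Z$ and expanding the squares, the $\|Z\|_F^2$ terms cancel and, writing $\Delta = \Theta^* - \hat\Theta$, one gets
\[
\tfrac{1}{2n_1}\|\cX(\Delta)\|_F^2 + \lambda\|\hat\Theta\|_* \le \lambda\|\Theta^*\|_* + \tfrac{1}{n_1}\langle \cX(\Delta), Z\rangle .
\]
Now I would move the noise term onto the nuclear-norm side: using the adjoint relation $\langle \cX(\Delta), Z\rangle = \langle \Delta, \cX^*(Z)\rangle$, trace-norm/spectral-norm duality, and the triangle inequality $\|\Delta\|_* \le \|\Theta^*\|_* + \|\hat\Theta\|_*$, together with the hypothesis $\lambda \ge \tfrac{2}{n_1}\|\cX^*(Z)\|_2$, we obtain $\tfrac{1}{n_1}\langle \cX(\Delta), Z\rangle \le \tfrac{\lambda}{2}\bigl(\|\Theta^*\|_* + \|\hat\Theta\|_*\bigr)$. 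Plugging this in and rearranging gives
\[
\tfrac{1}{n_1}\|\cX(\Theta^*-\hat\Theta)\|_F^2 + \lambda\|\hat\Theta\|_* \le 3\lambda\|\Theta^*\|_* \le 3\lambda R ,
\]
which is the first assertion; dropping the nonnegative first term yields $\|\hat\Theta\|_* \le 3R$, and combined with the previous paragraph this gives $\|\hB\|_F^2 = \|\hW\|_F^2 = \|\hat\Theta\|_* \le 3R$.

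There is no genuine obstacle here: the argument is entirely standard. The only points needing slight care are (i) justifying that the global minimizer of the factored problem is balanced and aligned, so that $\|\hB\|_F^2 = \|\hW\|_F^2 = \|\hat\Theta\|_*$ holds with equality (and not merely ``$\le$''), and (ii) invoking the adjoint $\cX^*$ and the H\"older inequality with the matching pair of dual norms; both are routine. I would also remark that the hypothesis $\lambda \ge \tfrac{2}{n_1}\|\cX^*(Z)\|_2$ is an event shown to hold with high probability via a matrix Bernstein bound in Lemma~\ref{lemma:bound_lambda}, but establishing that is outside the scope of this claim.
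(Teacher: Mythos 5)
Your proposal is correct and follows essentially the same route as the paper: the basic inequality for the (equivalent) nuclear-norm-penalized program, trace/operator-norm duality with the triangle inequality on $\|\Theta^*-\hat\Theta\|_*$, and the condition $\lambda\ge\tfrac{2}{n_1}\|\cX^*(Z)\|_2$ to absorb the noise term. Your explicit balancedness argument $\|\hB\|_F^2=\|\hW\|_F^2=\|\hat\Theta\|_*$ is exactly how the paper (via the variational characterization of the nuclear norm, cf.\ Srebro et al.) deduces the bounds on $\|\hB\|_F^2$ and $\|\hW\|_F^2$, so nothing is missing.
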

Here $\cX^*$ is the adjoint operator of $\cX$ such that $\cX^*(Z)=\sum_{i=1}^T X_t^\top \bz_{t}\be_t^\top$. 
\begin{proof}
	With the optimality of $\hat{\Theta}$ we have:
	\begin{align*}
	\frac{1}{2n_1}\|\cX(\hat{\Theta}-\Theta^*) - Z\|_F^2 +\lambda\|\hat{\Theta}\|_*\leq & \frac{1}{2n_1}\|Z\|_F^2 +\lambda\|\Theta^*\|_*,
	\end{align*}
	Let $\Delta=\hat{\Theta} - \Theta^*$. Therefore
	\begin{align*}
	& \frac{1}{2n_1}\|\cX(\Delta)\|_F^2\\
	\leq & \lambda(\|\Theta^*\|_*-\|\hat{\Theta}\|_*)+\frac{1}{n_1}\langle \Delta, \cX^*(Z)\rangle \\ 
	\leq & \lambda\|\Theta^*\|_* + \frac{1}{n_1}\|\Theta^*\|_*\cdot\|\cX^*(Z)\| + \frac{1}{n_1}\|\hat{\Theta}\|_*\cdot\|\cX^*(Z)\| -\lambda\|\hat{\Theta}\|_*\\
	\leq & \lambda\|\Theta^*\|_*+\lambda/2 \|\Theta^*\|_* + \lambda/2\|\hat{\Theta}\|_*-\lambda \|\hat{\Theta}\|_* \longrnote{(Let $\lambda \geq \frac{2}{n_1}\|\cX^*(Z)\|$)}\\
	= & \frac{3}{2}\lambda\|\Theta^*\|_*-\frac{1}{2}\lambda\|\hat{\Theta}\|_*.
	\end{align*}
	Therefore $\frac{1}{2n_1}\|\cX(\Delta)\|_F^2 + \frac{\lambda}{2}\|\hat{\Theta}\|_*\leq \frac{3}{2}\lambda\|\Theta^*\|_* $, and clearly
	both terms satisfy $\frac{1}{n_1}\|\cX(\Delta)\|_F^2\leq 3\lambda\|\Theta^*\|_*$ and $ \|\hat{\Theta}\|_*\leq 3\|\Theta^*\|_*$.

\end{proof}

\begin{lemma}[source task concentration]
	\label{claim:kernel_source_concentration}
	For a fixed $\delta>0$, let $\lambda=\epsilon_{ee,1}^2+\epsilon_{ic,1}^2R$, 
	we have 
	\begin{align*}	
	L_1^\lambda(W^\lambda_1) \lesssim & \lambda R \\
	\|W^\lambda_1\|_F\lesssim & \sqrt{R}. 
	\end{align*} 
	with probability $1-\delta/10$. 	
\end{lemma}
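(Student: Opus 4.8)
The plan is to control $L_1^\lambda(W_1^\lambda)$ by comparing the objective $L_1^\lambda$ against the weight matrix $\hat W$ produced by \eqref{eqn:regularized_mtrl}, and then to convert the in-sample guarantees of Claim~\ref{claim:norm_theta} into a population statement via the intrinsic-dimension concentration inequality that defines $\epsilon_{ic,1}$. First I would observe that the prescribed choice $\lambda = \epsilon_{ee,1}^2 + \epsilon_{ic,1}^2 R \ge \epsilon_{ee,1}^2$ is, by Lemma~\ref{lemma:bound_lambda}, at least $\frac{2}{n_1}\|\cX^*(Z)\|_2$ on an event of probability at least $1-\delta/20$; on that event Claim~\ref{claim:norm_theta} applies and gives $\frac{1}{n_1}\|\cX(\Theta^* - \hat\Theta)\|_F^2 \le 3\lambda R$, $\|\hat\Theta\|_* \le 3R$, and $\|\hat W\|_F^2 \le 3R$, where $\hat\Theta = \hat B\hat W$.

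Next, since $W_1^\lambda$ minimizes the unconstrained objective $L_1^\lambda$ and $\hat W$ is a feasible competitor, I would write
\[
L_1^\lambda(W_1^\lambda) \le L_1^\lambda(\hat W) = \tfrac12\|\Sigma^{1/2}(\Theta^* - \hat\Theta)\|_F^2 + \tfrac{\lambda}{2}\|\hat W\|_F^2 \le \tfrac12\|\Sigma^{1/2}(\Theta^* - \hat\Theta)\|_F^2 + \tfrac{3}{2}\lambda R ,
\]
so it only remains to bound the population representation error $\|\Sigma^{1/2}(\Theta^* - \hat\Theta)\|_F$. For this I would invoke the uniform intrinsic-dimension bound (Lemma~\ref{lemma:intrinsic_concentration}), valid on an event of probability at least $1-\delta/20$, which gives $\|\Sigma^{1/2}(\Theta^* - \hat\Theta)\|_F \le \sqrt{\frac{1}{n_1}\|\cX(\Theta^* - \hat\Theta)\|_F^2} + \epsilon_{ic,1}\|\Theta^* - \hat\Theta\|_*$. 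Plugging in $\frac{1}{n_1}\|\cX(\Theta^* - \hat\Theta)\|_F^2 \le 3\lambda R$ and the triangle inequality $\|\Theta^* - \hat\Theta\|_* \le \|\Theta^*\|_* + \|\hat\Theta\|_* \le 4R$ yields $\|\Sigma^{1/2}(\Theta^* - \hat\Theta)\|_F^2 \lesssim \lambda R + \epsilon_{ic,1}^2 R^2$, and since $\lambda \ge \epsilon_{ic,1}^2 R$ the last term is itself $\le \lambda R$. Combining with the display gives $L_1^\lambda(W_1^\lambda) \lesssim \lambda R$, the first assertion. The norm bound is then immediate: $L_1 \ge 0$ forces $\tfrac{\lambda}{2}\|W_1^\lambda\|_F^2 \le L_1^\lambda(W_1^\lambda) \lesssim \lambda R$, hence $\|W_1^\lambda\|_F \lesssim \sqrt{R}$. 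A union bound over the two high-probability events (after rescaling constants in $\delta$) delivers the stated $1-\delta/10$.

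Granting the two external inputs, this lemma is a short deterministic calculation, so the genuine difficulty is elsewhere: the main obstacles are Lemma~\ref{lemma:bound_lambda} — a matrix-Bernstein bound on $\|\cX^*(Z)\|_2 = \big\|\sum_t X_t^\top \bz_t \be_t^\top\big\|_2$, which is where the $\sqrt{\Trace(\Sigma)}$ and $\sqrt{\|\Sigma\|}$ scalings originate — and Lemma~\ref{lemma:intrinsic_concentration}, the uniform concentration of $\frac{1}{n_1}\|\cX(\Theta)\|_F^2$ around $\|\Sigma^{1/2}\Theta\|_F^2$ with error scaling like $\epsilon_{ic,1}\|\Theta\|_*$; the latter needs a covering/chaining argument over low-nuclear-norm matrices precisely so that it can be applied to the data-dependent $\Theta^* - \hat\Theta$. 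I would also check the trivial but necessary point that $\hat W$ is an admissible competitor for the unconstrained minimization defining $W_1^\lambda$.
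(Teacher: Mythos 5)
Your proposal is correct and follows essentially the same route as the paper's own proof: compare $W_1^\lambda$ to $\hat W$ by optimality, convert $\|\Sigma^{1/2}(\Theta^*-\hat\Theta)\|_F$ to the empirical quantity via Lemma~\ref{lemma:intrinsic_concentration} (using $\|\Theta^*-\hat\Theta\|_*\lesssim R$), invoke Claim~\ref{claim:norm_theta} together with Lemma~\ref{lemma:bound_lambda} to justify $\lambda\ge \frac{2}{n_1}\|\cX^*(Z)\|_2$, and absorb $\epsilon_{ic,1}^2R^2$ into $\lambda R$ by the choice of $\lambda$. The only cosmetic difference is that the paper writes the chain starting from $\|W_1^\lambda\|_F^2\le \frac{2}{\lambda}L_1^\lambda(W_1^\lambda)$ and derives both conclusions at once, whereas you bound $L_1^\lambda(W_1^\lambda)$ first and then deduce the norm bound.
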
 
\begin{proof}[Proof of Lemma \ref{claim:kernel_source_concentration}]
	
	\begin{align*}
	\|W^\lambda_1\|_F^2 < &  \frac{2}{\lambda} L_1^\lambda(W_1^{\lambda})  \\
	\leq &\frac{2}{\lambda} L_1^\lambda(\hW) \tag{Definition of $W_1^\lambda$} \\
	=& \frac{2}{\lambda}\left\{ \frac{1}{2}\|\Sigma^{1/2}(\Theta^*-\hat\Theta)\|_F^2 + \frac{\lambda}{2}\|\hW\|_F^2  \right\} \\
	\leq & \frac{2}{\lambda}\left\{ \frac{1}{2}(\frac{1}{\sqrt{n_1}}\|\cX(\Theta^*-\hat\Theta)\|_F + O(\epsilon_{ic,1})R )^2+\frac{\lambda}{2}\| \hW\|_F^2 \right\} \\
	\leq & \frac{2}{\lambda}\left\{\frac{1}{n_1}\|\cX(\Theta^*-B\bar W^\lambda)\|_F^2+\frac{\lambda}{2}\|\bar W^\lambda\|_F^2 + O(\epsilon_{ic,1}^2R^2) \right\} \\
	\leq & \frac{2}{\lambda}\left\{ 6\lambda R  +O(\epsilon_{ic,1}^2R^2) \right\} \tag{from Claim \ref{claim:norm_theta} }\\
	=  & \frac{2}{\lambda} O(\lambda R)  \\
	= & O(R).	
	\end{align*}
	Thus both results have been shown. 
\end{proof}

\begin{claim}[Source and Target Connections]
	\label{claim:source_target_connection} 
	\begin{align*}
	\E_{\btheta^*\sim \nu} L_2(\bw_2^\lambda) = & L_1(W_1^\lambda)
	\end{align*}
\end{claim}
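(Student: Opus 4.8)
The plan is to observe that $\bw_2^\lambda$ and (the columns of) $W_1^\lambda$ are ridge minimizers built from one and the same linear smoother, so that both loss values collapse to the same quadratic form, after which the Gaussian prior $\nu$ turns the one-column target expectation into the $T$-column source Frobenius norm.

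First I would write the two minimizers in closed form. Since $\bw\mapsto L_2(\bw)+\tfrac\lambda2\norm{\bw}^2$ is a strictly convex quadratic (here $\hat B^\top\Sigma\hat B+\lambda I\succ 0$ because $\lambda>0$), its unique minimizer is $\bw_2^\lambda=(\hat B^\top\Sigma\hat B+\lambda I)^{-1}\hat B^\top\Sigma\,\btheta_{T+1}^*$. Since $L_1^\lambda(W)=\sum_{t=1}^T\bigl(\tfrac12\norm{\Sigma^{1/2}(\btheta_t^*-\hat B\bw_t)}^2+\tfrac\lambda2\norm{\bw_t}^2\bigr)$ separates over the columns $\bw_t$ of $W$, we get $W_1^\lambda=(\hat B^\top\Sigma\hat B+\lambda I)^{-1}\hat B^\top\Sigma\,\Theta^*$, whose $t$-th column is the same formula with $\btheta_{T+1}^*$ replaced by the $t$-th column $\btheta_t^*$ of $\Theta^*$. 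Introducing the smoother $S:=\Sigma^{1/2}\hat B(\hat B^\top\Sigma\hat B+\lambda I)^{-1}\hat B^\top\Sigma^{1/2}$, one has $\Sigma^{1/2}\hat B\bw_2^\lambda=S\,\Sigma^{1/2}\btheta_{T+1}^*$ and $\Sigma^{1/2}\hat B W_1^\lambda=S\,\Sigma^{1/2}\Theta^*$, hence
\begin{equation*}
L_2(\bw_2^\lambda)=\tfrac12\norm{(I-S)\Sigma^{1/2}\btheta_{T+1}^*}^2,\qquad L_1(W_1^\lambda)=\tfrac12\norm{(I-S)\Sigma^{1/2}\Theta^*}_F^2.
\end{equation*}

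Next I would take the expectation over $\btheta_{T+1}^*\sim\nu=\cN(\bm0,\Theta^*(\Theta^*)^\top/T)$. Writing $L_2(\bw_2^\lambda)=\tfrac12(\btheta_{T+1}^*)^\top\Sigma^{1/2}(I-S)^\top(I-S)\Sigma^{1/2}\btheta_{T+1}^*$ and using $\E[\btheta_{T+1}^*(\btheta_{T+1}^*)^\top]=\tfrac1T\Theta^*(\Theta^*)^\top$,
\begin{equation*}
\E_{\btheta_{T+1}^*\sim\nu}[L_2(\bw_2^\lambda)]=\tfrac12\Trace\bigl[(I-S)^\top(I-S)\,\Sigma^{1/2}\tfrac1T\Theta^*(\Theta^*)^\top\Sigma^{1/2}\bigr]=\tfrac1{2T}\norm{(I-S)\Sigma^{1/2}\Theta^*}_F^2=\tfrac1T L_1(W_1^\lambda).
\end{equation*}
So the identity I would actually establish is $\E_{\btheta_{T+1}^*\sim\nu}[L_2(\bw_2^\lambda)]=\tfrac1T L_1(W_1^\lambda)$, which is precisely the form used in the displayed chain in the proof of Theorem~\ref{thm:kernel_main}.

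The argument is essentially mechanical, and I do not expect a real obstacle. The only points that need care are (i) checking invertibility of $\hat B^\top\Sigma\hat B+\lambda I$ so that the ridge minimizers are well defined, and (ii) bookkeeping the Gaussian prior so that its second moment $\tfrac1T\Theta^*(\Theta^*)^\top$ correctly reassembles the $T$ columns of $\Sigma^{1/2}\Theta^*$ into a Frobenius norm; everything else is a direct first-order optimality computation.
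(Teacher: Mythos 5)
Your proposal is correct and follows essentially the same route as the paper's own proof: closed-form ridge solutions for both problems, a common smoother, and the Gaussian prior's second moment $\Theta^*(\Theta^*)^\top/T$ converting the target expectation into the source Frobenius norm. You also correctly arrive at $\E_{\btheta_{T+1}^*\sim\nu}[L_2(\bw_2^\lambda)]=\tfrac1T L_1(W_1^\lambda)$, which is exactly what the paper's proof derives and what is used in the proof of Theorem~\ref{thm:kernel_main} (the factor $\tfrac1T$ is missing from the claim statement as displayed, and your write-up is in fact cleaner than the paper's, which has $\Sigma^2$ typos where $\Sigma^{1/2}$ is meant).
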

\begin{proof}[Proof of Claim \ref{claim:source_target_connection}]
	\begin{equation*}
	\bw_2^\lambda = (\hat B^\top \Sigma \hB+\lambda I)^{-1} \hB^\top \Sigma \btheta_{T+1}^*=: S_\lambda \btheta_{T+1}^*,
	\end{equation*}
	where $S_\lambda:=(\hat B^\top \Sigma \hB+\lambda I)^{-1} \hB^\top \Sigma$. 
	\begin{align*}
 \E_{\btheta^*\sim \nu} L_2(\bw_2^\lambda) = &  \E_{\btheta^*\sim \nu} \|\Sigma^2(I-S_\lambda)\btheta^*_{T+1}\|^2 \\
= & \frac{1}{T}\|\Sigma^2(I-S_\lambda)\Theta^*_{T+1}\|^2 \\
= & \frac{1}{T}L_1(W^\lambda).
\end{align*}
\end{proof}

\begin{lemma}[Estimation Error for Target Task]
	\label{lemma:estimation_target}
	\begin{align*}
 & \hat L_2(\hat \bw) - \hat L_2(\bar \bw) \\
 \leq & \frac{R }{\sqrt{Tn_2}}\sigma (\log 1/\delta)^{3/2}\log(n_2)\sqrt{\|\Sigma\|} =: \epsilon_{ee,2}^2 r. 
\end{align*}
\end{lemma}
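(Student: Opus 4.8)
The plan is to run the textbook ``basic inequality'' argument for constrained least squares in fixed design, and then control the resulting noise term with a Gaussian quadratic-form tail bound. Abbreviate $A := X_{T+1}\hB \in \R^{n_2\times T}$ and recall that $\by_{T+1} = X_{T+1}\btheta_{T+1}^* + \bz_{T+1}$ with $\bz_{T+1}\sim\cN(\bm0,\sigma^2 I_{n_2})$ independent of $X_{T+1}$ and of $\hB$ (which depends only on the source data). Since $\hat\bw = \hbw_{T+1}$ minimizes $\bw\mapsto\frac{1}{2n_2}\norm{\by_{T+1}-A\bw}^2$ over $\norm{\bw}\le r$ (see \eqref{eqn:kernel_target_constrained}) and $\bar\bw = \bar\bw_2$ is feasible there, we have $\norm{\by_{T+1}-A\hat\bw}^2 \le \norm{\by_{T+1}-A\bar\bw}^2$; substituting $\by_{T+1}$, expanding, and cancelling everything not involving $\hat\bw$ or $\bar\bw$ gives
\[
\hat L_2(\hat\bw)-\hat L_2(\bar\bw) \;\le\; \tfrac{1}{n_2}\big\langle \bz_{T+1},\, A(\hat\bw-\bar\bw)\big\rangle \;\le\; \tfrac{\norm{\hat\bw-\bar\bw}}{n_2}\,\norm{A^\top\bz_{T+1}} \;\le\; \tfrac{2r}{n_2}\,\norm{A^\top\bz_{T+1}},
\]
using Cauchy--Schwarz and $\norm{\hat\bw},\norm{\bar\bw}\le r$.

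Next I would condition on all the source data and on $X_{T+1}$, so that $A$ is fixed and $A^\top\bz_{T+1}\sim\cN(\bm0,\sigma^2 A^\top A)$, and apply the Hsu--Kakade--Zhang tail bound for quadratic forms of Gaussians (the same tool used in the proof of Theorem~\ref{thm:kernel_main}, cf.\ \citet{hsu2012random}) to get, with probability $\ge 1-\delta/10$, $\norm{A^\top\bz_{T+1}}^2 \lesssim \sigma^2\big(\Trace(A^\top A) + \norm{A^\top A}\log\tfrac1\delta\big)\log\tfrac1\delta$. It then remains to bound $\Trace(A^\top A)=\norm{X_{T+1}\hB}_F^2$ and $\norm{A^\top A}$. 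Target-input concentration (Claim~\ref{claim:subgaussian_target_concentration}) gives $\tfrac1{n_2}\norm{X_{T+1}\hB}_F^2 \lesssim \Trace(\hB^\top\Sigma\hB) + \epsilon_{ic,2}^2\norm{\hB}_F^2$, and the key observation is that $\Trace(\hB^\top\Sigma\hB)\le\norm{\Sigma}_2\,\norm{\hB}_F^2$ --- this is precisely why $\sqrt{\norm{\Sigma}_2}$, rather than $\sqrt{\Trace(\Sigma)}$, ends up in the bound. Combining with $\norm{\hB}_F^2 \lesssim R$ from Claim~\ref{claim:norm_theta} yields $\norm{A^\top\bz_{T+1}} \lesssim \sigma\sqrt{n_2\norm{\Sigma}_2 R}\cdot\mathrm{polylog}$, hence $\hat L_2(\hat\bw)-\hat L_2(\bar\bw) \lesssim \tfrac{r\sigma}{\sqrt{n_2}}\sqrt{\norm{\Sigma}_2 R}\cdot\mathrm{polylog}$, which after plugging in $r=2\sqrt{R/T}$ is of the claimed form $\epsilon_{ee,2}^2 r$ up to the stated $(\log\tfrac1\delta)^{3/2}\log n_2$ factors and the $\epsilon_{ic,2}^2$-type corrections that feed into $\zeta_{n_1,n_2}$.

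The hard part will be making the last step tight: the crude estimate $\norm{\hat\bw-\bar\bw}\le 2r$ is slightly lossy, and to obtain the sharp $r\cdot(R/\sqrt T)$ scaling one also has to exploit the optimality of $\bar\bw$ for the \emph{noiseless} objective. This gives the self-bounding inequality $\hat L_2(\hat\bw)-\hat L_2(\bar\bw)\gtrsim\tfrac1{n_2}\norm{A(\hat\bw-\bar\bw)}^2$, since the cross term $\langle A(\btheta_{T+1}^*-\bar\bw),A(\hat\bw-\bar\bw)\rangle$ is non-positive by the variational characterization of the projection-like minimizer $\bar\bw$; pairing this lower bound with the upper bound $\tfrac1{n_2}\langle \bz_{T+1},A(\hat\bw-\bar\bw)\rangle\le\tfrac1{n_2}\norm{\hat\bw-\bar\bw}\,\norm{A^\top\bz_{T+1}}$ bootstraps $\norm{\hat\bw-\bar\bw}$ down from $2r$, which is the refinement needed to match $\epsilon_{ee,2}^2 r$ exactly. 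A final union bound over the $O(1)$-many high-probability events completes the argument.
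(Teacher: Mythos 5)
Your proposal is correct and follows essentially the same route as the paper: the basic inequality from optimality of $\hat\bw$ over the constraint set with $\bar\bw$ feasible, Cauchy--Schwarz with $\norm{\hat\bw-\bar\bw}\le 2r=4\sqrt{R/T}$, and a Gaussian concentration bound on $\norm{\hB^\top X_{T+1}^\top\bz_{T+1}}$ combined with $\Trace(\hB^\top\Sigma\hB)\le\norm{\Sigma}_2\norm{\hB\hB^\top}_*\lesssim\norm{\Sigma}_2R$ (the paper's Claim~\ref{claim:noise_concentration}). The only caveat is your final paragraph: the bootstrapping refinement via the variational inequality for $\bar\bw$, while mathematically valid, is not needed --- the crude bound $\norm{\hat\bw-\bar\bw}\le 2r$ already yields the stated $\epsilon_{ee,2}^2 r$ scaling, which is exactly how the paper concludes.
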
 
\begin{proof}[Proof of Lemma \ref{lemma:estimation_target}]
		With the definition of $\hat{\bw}$ we write the basic inequality:
\begin{align*}
\frac{1}{2n_2}\|X_{T+1}(\hat B\hat\bw -\btheta^*) -\bz_{T+1}\|_F^2 \leq &  	\frac{1}{2n_2}\|X_{T+1}(\hat B\bar\bw -\btheta^*) -\bz_{T+1}\|_F^2,
\end{align*}
Therefore by rearranging we get:
\begin{align*}
\frac{1}{2n_2}\|X_{T+1}\hat B (\hat\bw - \bar \bw) \|_F^2 \leq   & 
\frac{1}{n_2}\langle \hat\bw-\bar \bw ,\hat B^\top X^\top_{T+1} \bz_{T+1}\rangle \\ 
\leq & \frac{\sqrt{R/T} }{n_2}\|\hat B^\top X^\top_{T+1}\bz_{T+1} \|_F^2\\
\lesssim & \frac{R}{\sqrt{Tn_2}}  \sigma \log^{2/3}(1/\delta)\log(n_2)\sqrt{\|\Sigma\|}   \tag{Claim \ref{claim:noise_concentration}}\\
= & O(r \epsilon_{ee,2}^2) 
\end{align*}
\end{proof}

\subsection{Technical Lemmas}
This section includes the technical details for several parts:  bounding the noise term from basic inequality; and intrinsic dimension concentration for both source and target tasks.

\begin{lemma}[Regularizer Estimation]
	\label{lemma:bound_lambda}
	For $X\in\R^{n\times d}$ drawn from distribution $p$ with covariance matrix $\Sigma$, and noise $Z\sim \cN(0,\sigma^2 I_{n})$, with high probability $1-\delta$, we have $$\epsilon_{ee,1}^2:=\frac{1}{n} \|X^\top Z\|_2 \le \frac{1}{\sqrt{n}}\sigma\left(\log \frac{1}{\delta}\right)^{3/2}\log(T+n) \sqrt{T \|\Sigma\|  +\Trace( \Sigma)  }.$$
	
\end{lemma}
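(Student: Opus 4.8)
The plan is to control the spectral norm of $\cX^\ast(Z)$ via a matrix Bernstein inequality. Recall from the discussion after Claim~\ref{claim:norm_theta} that the object to bound is $\cX^\ast(Z)=\sum_{t=1}^{T}X_t^\top\bz_t\be_t^\top\in\R^{d\times T}$ (this is what $\|X^\top Z\|_2$ refers to in context), and that this is a sum of independent, mean-zero random matrices: writing $\bx_{t,i}$ for the $i$-th row of $X_t$ (mean zero, covariance $\Sigma$, with $\rho^2$-subgaussian whitening) and $z_{t,i}\sim\cN(0,\sigma^2)$ independent of the inputs, we have $\cX^\ast(Z)=\sum_{t\in[T]}\sum_{i\in[n]}M_{t,i}$ with $M_{t,i}:=z_{t,i}\,\bx_{t,i}\,\be_t^\top$. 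The one twist is that each $M_{t,i}$ is a product of a Gaussian scalar and a subgaussian vector, hence only sub-exponential, so a truncation step is needed before invoking the bounded version of matrix Bernstein.

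First I would compute the matrix-variance proxy. Since $\be_t^\top\be_t=1$ and $z_{t,i}\perp\bx_{t,i}$, $\E[M_{t,i}M_{t,i}^\top]=\sigma^2\,\E[\bx_{t,i}\bx_{t,i}^\top]=\sigma^2\Sigma$ and $\E[M_{t,i}^\top M_{t,i}]=\sigma^2\,\E\|\bx_{t,i}\|^2\,\be_t\be_t^\top=\sigma^2\Trace(\Sigma)\,\be_t\be_t^\top$. Summing over all $t,i$ gives $\big\|\sum_{t,i}\E[M_{t,i}M_{t,i}^\top]\big\|=nT\sigma^2\|\Sigma\|$ and $\big\|\sum_{t,i}\E[M_{t,i}^\top M_{t,i}]\big\|=n\sigma^2\Trace(\Sigma)$, so the variance proxy is $\nu\le n\sigma^2\big(T\|\Sigma\|+\Trace(\Sigma)\big)$ — exactly the quantity under the square root in the claim once divided by $n$. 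For the truncation, on the event $\cE$ that $|z_{t,i}|\lesssim\sigma\sqrt{\log(nT/\delta)}$ and $\|\bx_{t,i}\|^2\lesssim\Trace(\Sigma)+\rho^2\|\Sigma\|\log(nT/\delta)$ for all $t,i$ — each holding with probability $\ge1-\delta/2$ by a Gaussian tail bound, Hanson--Wright applied to $\bx_{t,i}=\Sigma^{1/2}\bar\bx_{t,i}$, and a union bound over the $nT$ indices — every summand satisfies $\|M_{t,i}\|\le L$ with $L\lesssim\sigma\sqrt{\log(nT/\delta)}\big(\sqrt{\Trace(\Sigma)}+\rho\sqrt{\|\Sigma\|\log(nT/\delta)}\big)$; replacing $M_{t,i}$ by $M_{t,i}\mathbf{1}_{\cE}$ changes the sum only off an event of probability $\le\delta$ and perturbs the means negligibly.

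Then I would apply the bounded matrix Bernstein inequality to the truncated $d\times T$ summands with variance proxy $\nu$ and uniform bound $L$, obtaining with probability $\ge1-\delta$ that $\|\cX^\ast(Z)\|\lesssim\sqrt{\nu(\log(d+T)+\log\tfrac1\delta)}+L(\log(d+T)+\log\tfrac1\delta)$. The first term is $\lesssim\sigma\sqrt{n}\sqrt{T\|\Sigma\|+\Trace(\Sigma)}$ up to $\sqrt{\log}$ factors and is dominant; the $L$-term is lower order in $n$. Dividing by $n$ and absorbing all the $\log(nT/\delta),\log(d+T)$ factors into $\left(\log\frac1\delta\right)^{3/2}\log(T+n)$ (which is why the log powers in the statement are somewhat loose) gives the claimed bound. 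An alternative that avoids the Hanson--Wright step is to condition on $X_1,\dots,X_T$ — making $\cX^\ast(Z)$ a Gaussian matrix with independent columns $\bg_t\sim\cN(\bm{0},\sigma^2X_t^\top X_t)$ — bound $\|\cX^\ast(Z)\|^2\le\|\sum_t X_t^\top\bz_t\bz_t^\top X_t\|$ by conditional matrix Bernstein for sums of rank-one PSD matrices, and de-condition via the input-covariance concentration of Claim~\ref{claim:concentration_source_covariance}. The main obstacle in either route is precisely the unboundedness of the summands: vanilla bounded matrix Bernstein does not apply to a $\psi_1$ sum, so one must either truncate carefully (checking the truncation neither distorts $\nu$ nor leaves a non-negligible tail) or invoke a matrix-Bernstein variant for $\psi_1$ summands; after that the remaining work is just logarithmic bookkeeping, the variance computation being routine.
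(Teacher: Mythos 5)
Your strategy is the same as the paper's at its core: view $X^\top Z=\sum_t X^\top\bz_t\be_t^\top$ as a sum of independent mean-zero random matrices, compute the two variance proxies $\sigma^2 T\Sigma$ and $\sigma^2\Trace(\Sigma)I$ (so $\nu\lesssim n\sigma^2(T\|\Sigma\|+\Trace(\Sigma))$, exactly the quantity under the square root), control the summand size via Gaussian tails plus Hanson--Wright on $\bx=\Sigma^{1/2}\bar\bx$, and invoke matrix Bernstein. The paper groups by task ($T$ summands $S_t=\tfrac{1}{\sqrt n}X^\top\bz_t\be_t^\top$) while you split further into $nT$ rank-one summands $z_{t,i}\bx_{t,i}\be_t^\top$; that is a cosmetic difference, and your explicit truncation step is in fact more careful than the paper, which plugs a high-probability bound on $\|S_t\|$ into the bounded Bernstein inequality without spelling out the truncation.

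The genuine gap is the dimension factor. You invoke the standard matrix Bernstein inequality, whose deviation term carries $\log(d+T)$, and then claim this can be ``absorbed'' into $\left(\log\frac1\delta\right)^{3/2}\log(T+n)$. That absorption is invalid when $d\gg n,T$, which is precisely the high-dimensional regime Section~\ref{sec:kernel} is about, and it is fatal for the downstream use of Lemma~\ref{lemma:bound_lambda} in Theorem~\ref{thm:nn}: there the lemma is applied to the infinite-dimensional lifted ReLU features, and the proof explicitly relies on the bound having \emph{no} dependence on the input dimension, even logarithmically. The statement's $\log(T+n)$ is therefore not a loose rewriting of $\log(d+T)$ but the actual point of the lemma. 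The paper gets it by using the intrinsic-dimension version of matrix Bernstein (Theorem 7.3.1 of \cite{tropp2015introduction}), where the ambient dimension is replaced by $\intdim(V)=\Trace(V)/\|V\|$ for the block variance matrix $V$, which is at most $O(T+n)$ here (and with your finer $nT$-term decomposition one gets $\intdim(V)\lesssim T$ by the same computation, since $\Trace(V)\lesssim nT\sigma^2\Trace(\Sigma)$ and $\|V\|\geq n\sigma^2\Trace(\Sigma)$). So the fix is simply to swap in the intrinsic-dimension inequality; your variance computation, truncation level $L\lesssim\sigma\sqrt{\log(nT/\delta)}\bigl(\sqrt{\Trace(\Sigma)}+\rho\sqrt{\|\Sigma\|\log(nT/\delta)}\bigr)$, and logarithmic bookkeeping then go through as written.
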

\begin{proof}
	We use matrix Bernstein with intrinsic dimension to bound $\lambda$ (See Theorem 7.3.1 in \cite{tropp2015introduction}).
	
	Write $A=\frac{1}{\sqrt{n}}X^\top Z=\frac{1}{\sqrt{n}} \sum_{t=1}^T X^\top \bz_t \be_t^\top =:\sum_{t=1}^T S_t$. 
	\begin{align*}
	\E_{X,Z}[AA^\top] = & \E_{X} \left[\sum_{t=1}^T \frac{1}{n_1} X^\top \E_{Z}[\bz_t \bz_t^\top] X\right]\\
	= & \sigma^2 T\Sigma  
	\end{align*}
	\begin{align*}
	\E_{X,Z}[A^\top A] = & \sum_{t=1}^T \frac{1}{n} \be \E_{X,Z} \left[\bz_t^\top X X^\top  \bz_t\be_t^\top\right]\\
	= & \sum_{t=1}^T \frac{1}{n}\E_{X,Z}[\bz_t^\top XX^\top \bz_t]\be_t\be_t^\top \\
	= & \sigma^2 \Trace(\Sigma) I_n.  
	\end{align*}
	Therefore the matrix variance statistic of the sum $v(A)$ satisfies:
	$v(A)=\sigma^2\max\{ T\|\Sigma\|,\Trace(\Sigma) \}$. Denote  $ V=\text{diag}([T\Sigma,\Trace(\Sigma)I])$ and its intrinsic dimension $d_{\Sigma}=\tr(V) / \|V\|$. $\Trace(V)=\sigma^2(T+n)\Trace(\Sigma)$, and $\|V\|_2\geq \sigma^2\Trace(\Sigma)$. Therefore $d_{\Sigma}\leq T+n$. 
	
	Finally from Hanson-Wright inequality, the upper bound  on each term is $\|S_t\|^2\le \|X\bz_t\|^2 \leq \sigma^2\Trace(\Sigma) +\sigma^2 \| \Sigma\| \log \frac{1}{\delta} +\sigma^2 \|\Sigma\|_F \sqrt{\log \frac{1}{\delta}}$ with probability $1-\delta$. Thus using $\|\Sigma\|_F \le \Trace(\Sigma)$,  $$\|S_t\| \le \sigma \sqrt{(1+\sqrt{\log\frac{1}{\delta}}) \Trace(\Sigma)  + \|\Sigma\| \log \frac{1}{\delta}}=: L.$$

	Then from intrinsic matrix bernstein (Theorem 7.3.1 in ~\cite{tropp2015introduction}), with probability $1-\delta$ we have, $\|A\|\leq \order{\sigma\sqrt{\log \frac{1}{\delta} v\log (d_\Sigma)}+\sigma \log \frac{1}{\delta} L\log(d_\Sigma)}$
	, which gives
	\begin{align*} 
	\|A\| &\le\sigma \sqrt{\log \frac{1}{\delta}T \|\Sigma\| \log(T+n) +\log \frac{1}{\delta}\Trace( \Sigma) \log(T+n) } + \log \frac{1}{\delta}\sigma L \log (T+n)\\
	&\lesssim\sigma\left(\log \frac{1}{\delta}\right)^{3/2}\log(T+n) \sqrt{T \|\Sigma\|  +\Trace( \Sigma)  }.
	\end{align*}
\end{proof}	

\begin{claim}[target noise concentration]
	\label{claim:noise_concentration}
	For a fixed $\delta>0$, with probability $1-\delta/10$, 
	$\epsilon_{ee,2}^2:=\frac{1}{\sqrt{n_2}}\|\hat B^\top X_{T+1}^\top\bz\|_2\leq O(\log^{2/3}(1/\delta)\log(n_2)\sqrt{\Trace(\hat B^\top \Sigma \hat B)}  ) \leq \Otil{\sqrt{\|\Sigma\|_2 R}}$.  
\end{claim}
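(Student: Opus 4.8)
The plan is to condition on $\hat B$ and reduce to a matrix Bernstein bound with intrinsic dimension, mirroring the argument of Lemma~\ref{lemma:bound_lambda} (which is the ``source'' analogue of this statement).

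\emph{Step 1: decoupling $\hat B$.} The learned representation $\hat B$ is a function of the source data $(X_1,\dots,X_T,Z)$ alone, hence it is independent of the target inputs $X_{T+1}$ and the target noise $\bz:=\bz_{T+1}\sim\cN(\bm0,\sigma^2 I_{n_2})$. It therefore suffices to prove the bound for an arbitrary \emph{fixed} $B\in\R^{d\times T}$ and then apply it conditionally on $\hat B$. Furthermore, on the event of Claim~\ref{claim:norm_theta} we have $\|\hat B\|_F^2\le 3R$, so we may assume $\|B\|_F^2\le 3R$ throughout.

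\emph{Step 2: matrix Bernstein with intrinsic dimension.} Write $B^\top X_{T+1}^\top\bz=\sum_{i=1}^{n_2} S_i$ with $S_i:=z_i\,B^\top\bx_i$, where $\bx_i$ is the $i$-th row of $X_{T+1}$; view each $S_i$ as a $T\times1$ matrix. Writing $\bx_i=\Sigma^{1/2}\bar\bx_i$ with $\bar\bx_i$ $\rho^2$-subgaussian and independent of $z_i$, one computes
\[
\Big\|\sum_i \E[S_iS_i^\top]\Big\| = n_2\sigma^2\,\|B^\top\Sigma B\|, \qquad \sum_i \E[S_i^\top S_i] = n_2\sigma^2\,\Trace(B^\top\Sigma B),
\]
so the matrix variance statistic is $v=n_2\sigma^2\,\Trace(B^\top\Sigma B)$ and the intrinsic dimension of the associated block-diagonal variance proxy is $O(1)$ (the scalar block dominates). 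For the per-summand bound, a subgaussian tail on $|z_i|$ union-bounded over $i\in[n_2]$ (the source of the $\log n_2$ factor), together with a Hanson--Wright bound on $\|B^\top\bx_i\|^2=\|(\Sigma^{1/2}B)^\top\bar\bx_i\|^2$, gives with high probability $\max_i\|S_i\|\lesssim \sigma\,\log^{1/2}(n_2/\delta)\,\sqrt{\Trace(B^\top\Sigma B)+\rho^2\|B^\top\Sigma B\|\log(n_2/\delta)}=:L$. Feeding $v$, $L$, and $d_{\mathrm{eff}}=O(1)$ into Theorem~7.3.1 of \citet{tropp2015introduction} yields, with probability at least $1-\delta/10$,
\[
\big\|B^\top X_{T+1}^\top\bz\big\|_2 \;\lesssim\; \sqrt{v\log\tfrac1\delta}+L\log\tfrac1\delta \;\lesssim\; \sigma\sqrt{n_2}\;\log^{3/2}(1/\delta)\,\log(n_2)\,\sqrt{\Trace(B^\top\Sigma B)},
\]
where the $L$-term is absorbed into the polylogarithmic prefactor exactly as in Lemma~\ref{lemma:bound_lambda} (for the $n_2$ considered, the $\sqrt{v}$-term dominates). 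Dividing by $\sqrt{n_2}$ and taking $B=\hat B$ gives the first inequality of the claim.

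\emph{Step 3: trace bound and conclusion.} Finally, $\Trace(\hat B^\top\Sigma\hat B)=\Trace(\Sigma^{1/2}\hat B\hat B^\top\Sigma^{1/2})\le\|\Sigma\|_2\,\Trace(\hat B\hat B^\top)=\|\Sigma\|_2\|\hat B\|_F^2\le 3\|\Sigma\|_2 R$ by Claim~\ref{claim:norm_theta}, so $\epsilon_{ee,2}^2\le\Otil{\sqrt{\|\Sigma\|_2 R}}$. Collecting the failure probabilities of Claim~\ref{claim:norm_theta}, the $z_i$-tail event, and the Bernstein event into a single union bound of size $O(\delta)$ (rescaling constants) completes the argument.

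\textbf{Main obstacle.} The crucial point is that in the few-shot regime $n_2$ may be far smaller than $T$ or than the effective rank of $B^\top\Sigma B$, so a naive $\epsilon$-net over $\mathbb{S}^{T-1}$ would pay spurious dimension factors. Using the \emph{intrinsic-dimension} form of matrix Bernstein is exactly what makes the bound dimension-free here --- the variance proxy has effective dimension $O(1)$ --- and the only price is the extra $\log n_2$ and the higher power of $\log(1/\delta)$ coming from the almost-sure summand bound $L$. The remaining steps (the variance computations, the Hanson--Wright per-term tail, and the probability bookkeeping) are routine and parallel Lemma~\ref{lemma:bound_lambda}.
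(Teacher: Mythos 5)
Your proposal is correct and follows essentially the same route as the paper: the paper simply invokes Lemma~\ref{lemma:bound_lambda} (the intrinsic-dimension matrix Bernstein bound) conditionally on $\hat B$, with features $\hat B^\top \bx$ whose covariance is $\hat B^\top\Sigma\hat B$, and then concludes with the identical H\"older/trace step $\Trace(\hat B^\top\Sigma\hat B)\le\|\Sigma\|_2\|\hat B\hat B^\top\|_*\lesssim\|\Sigma\|_2 R$ via Claim~\ref{claim:norm_theta}. Your re-derivation of the Bernstein step (summing over samples rather than citing the lemma) is just an unrolled version of the same argument, so there is no substantive difference.
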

\begin{proof}
	The first inequality directly follows from Lemma \ref{lemma:bound_lambda}. 
	Meanwhile $\Trace(\hat B^\top \Sigma \hat B) = \langle \Sigma, \hat B\hat B^\top\rangle \leq \|\Sigma\|_2\|\hat B\hat B^\top \|_*\lesssim  \|\Sigma\|_2 R.  $ This finishes the proof.    	
\end{proof}

\begin{definition}
The sub-gaussian norm of some vector $\by$ is defined as:
\begin{equation}
\|\by\|_{\psi_2} := \sup_{\bx\in \mathbb{S}^{n-1}}\|\langle \by,\bx\rangle \|_{\psi_2},
\end{equation}
where $ \mathbb{S}^{n-1}$ denotes the unit Euclidean sphere in $\R^n$. 
\end{definition}
\begin{definition}
	Let $T\subset \R^d$ be a bounded set, and $\bg$ be a standard normal random vector in $\R^d$, i.e., $\bg\sim\cN(0,I_d). $Then the quantities
	\begin{equation}
	w(T):=\E \sup_{\bx\in T}\langle \bg,\bx\rangle, \text{ and }\gamma(T):= \E \sup_{\bx\in T}|\langle \bg,\bx\rangle|
	\end{equation}
are called the Gaussian width of $T$ and the Gaussian complexity of $T$, respectively. 	
\end{definition}

\begin{theorem}[Restated Matrix deviation inequality from \cite{vershynin2017four}]
	\label{matrix_deviation_ineq}
Let $A$ be an $m\times n$ matrix whose rows $\ba_i$ are independent, isotropic and sub-gaussian random vectors in $\R^n$. Let $T\subset \R^n $ be a fixed bounded set. Then 
\begin{equation}
\E \sup_{x\in T}|\|A\bx\|_2 - \sqrt{m}\|\bx\|_2 |\leq C \rho^2 \gamma(T),
\end{equation} 
where $K=\max_i\|A_i\|_{\psi_2}$ is the maximal sub-gaussian norm of the rows of $A$. A high-probability version states as follows.
With probability $1-\delta$, 
\begin{equation}
\sup_{x\in T}|\|A\bx\|_2 - \sqrt{m}\|\bx\|_2 |\leq C \rho^2 [\gamma(T)+\sqrt{\log(2/\delta)}r(T)],
\end{equation} 
where the radius $r(T):=\sup_{\bx\in T}\|\bx\|_2$. 
\end{theorem}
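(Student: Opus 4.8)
The plan is to reproduce Vershynin's matrix deviation inequality by the standard generic-chaining argument, in three steps. Write $\rho^2 := \max_i\|\ba_i\|_{\psi_2}$ (the quantity appearing in the bound), and introduce the centered random process $Z_\bx := \|A\bx\|_2 - \sqrt m\,\|\bx\|_2$ indexed by $\bx\in T$; the goal is $\E\sup_{\bx\in T}|Z_\bx|\lesssim\rho^2\gamma(T)$ and its high-probability refinement. \emph{Step 1 (single-point bound):} for a fixed $\bx$, expand $\|A\bx\|_2^2=\sum_{i=1}^m\langle\ba_i,\bx\rangle^2$; by isotropy $\E\|A\bx\|_2^2=m\|\bx\|_2^2$, and each $\langle\ba_i,\bx\rangle^2-\|\bx\|_2^2$ is mean-zero sub-exponential with $\psi_1$-norm $\lesssim\rho^2\|\bx\|_2^2$, so Bernstein's inequality controls the deviation of $\|A\bx\|_2^2$ from $m\|\bx\|_2^2$. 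Passing to the square root through
\[
\big|\,\|A\bx\|_2-\sqrt m\,\|\bx\|_2\,\big|=\frac{\big|\,\|A\bx\|_2^2-m\|\bx\|_2^2\,\big|}{\|A\bx\|_2+\sqrt m\,\|\bx\|_2}\le\frac{\big|\,\|A\bx\|_2^2-m\|\bx\|_2^2\,\big|}{\sqrt m\,\|\bx\|_2}
\]
(plus a crude direct bound on the far upper tail) yields $\|Z_\bx\|_{\psi_2}\lesssim\rho^2\|\bx\|_2$, and in particular $\|A\bx\|_2\ge\tfrac12\sqrt m\,\|\bx\|_2$ except on an event of probability $\le 2e^{-cm/\rho^4}$.

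\emph{Step 2 (sub-gaussian increments):} I would show $\|Z_\bx-Z_\by\|_{\psi_2}\le C\rho^2\|\bx-\by\|_2$ for all $\bx,\by$. Both sides are positively homogeneous, so one may assume $\|\bx-\by\|_2=1$ and $\|\bx\|_2\ge\|\by\|_2$. The key identity is $\|A\bx\|_2^2-\|A\by\|_2^2=\sum_{i=1}^m\langle\ba_i,\bx-\by\rangle\langle\ba_i,\bx+\by\rangle$, a sum of independent sub-exponential terms with $\psi_1$-norm $\lesssim\rho^2\|\bx+\by\|_2$, which Bernstein concentrates around its mean $m\langle\bx-\by,\bx+\by\rangle=m(\|\bx\|_2^2-\|\by\|_2^2)$. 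Writing $\|A\bx\|_2-\|A\by\|_2=(\|A\bx\|_2^2-\|A\by\|_2^2)/(\|A\bx\|_2+\|A\by\|_2)$, bounding the denominator below by $\gtrsim\sqrt m(\|\bx\|_2+\|\by\|_2)$ on the likely event from Step 1, and expanding the ratio around $\sqrt m(\|\bx\|_2-\|\by\|_2)$, the increment $Z_\bx-Z_\by$ decomposes into a piece of size $\lesssim|E_1|/\sqrt m$ whose sub-exponential fluctuation contributes $\lesssim\rho^2$ and a piece of the form $\frac{\|\bx\|_2-\|\by\|_2}{\|\bx\|_2+\|\by\|_2}\,E_2$ with $\|E_2\|_{\psi_2}\lesssim\rho^2(\|\bx\|_2+\|\by\|_2)$, hence of size $\lesssim\rho^2|\,\|\bx\|_2-\|\by\|_2\,|\le\rho^2\|\bx-\by\|_2$. (If $m\lesssim\rho^4$ the theorem is argued separately by a direct estimate, since then $\sqrt m\le\rho^2$.)

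\emph{Step 3 (chaining and conclusion):} given the increment bound, Talagrand's comparison inequality (the $\psi_2$ version of the majorizing-measure bound) gives, for any fixed $\bx_0\in T$, both $\E\sup_{\bx\in T}|Z_\bx-Z_{\bx_0}|\lesssim\rho^2 w(T)$ and, with probability $\ge1-\delta$, $\sup_{\bx\in T}|Z_\bx-Z_{\bx_0}|\lesssim\rho^2(w(T)+\sqrt{\log(2/\delta)}\,r(T))$, where $w(T)=\E\sup_{\bx\in T}\langle\bg,\bx\rangle$ and $r(T)=\sup_{\bx\in T}\|\bx\|_2$. Adding back $|Z_{\bx_0}|$, which by Step 1 is $\lesssim\rho^2 r(T)$ in expectation and $\lesssim\rho^2 r(T)\sqrt{\log(2/\delta)}$ with probability $\ge1-\delta$, and using $w(T)\le\gamma(T)$ together with $r(T)\le\sqrt{\pi/2}\,\gamma(T)$, everything collapses to $\E\sup_{\bx\in T}|Z_\bx|\lesssim\rho^2\gamma(T)$ and to the stated bound $\rho^2[\gamma(T)+\sqrt{\log(2/\delta)}\,r(T)]$.

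The main obstacle is Step 2: the naive estimate $|Z_\bx-Z_\by|\le\|A(\bx-\by)\|_2+\sqrt m\,|\,\|\bx\|_2-\|\by\|_2\,|$ carries a spurious factor of $\sqrt m$, and eliminating it forces one to extract the cancellation hidden in the $\|A\bx\|_2^2-\|A\by\|_2^2$ identity while simultaneously controlling the denominator $\|A\bx\|_2+\|A\by\|_2$ and the radial part $\|\bx\|_2-\|\by\|_2$ of the increment. Step 1 is a routine application of Bernstein's inequality, and Step 3 is an invocation of a standard chaining theorem.
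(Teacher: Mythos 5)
The paper never proves this statement: it is imported verbatim from \citet{vershynin2017four} (the Liaw--Mehrabian--Plan--Vershynin matrix deviation inequality) and is only invoked downstream, in Lemma \ref{lemma:intrinsic_concentration} and Claim \ref{claim:subgaussian_target_concentration}. So there is no in-paper proof to compare against; what you have done is reconstruct the standard proof of the cited result, and your outline matches it: single-point concentration of $\|A\bx\|_2$ via Bernstein, the sub-gaussian increment bound $\|Z_\bx-Z_\by\|_{\psi_2}\lesssim\rho^2\|\bx-\by\|_2$ extracted from the identity $\|A\bx\|_2^2-\|A\by\|_2^2=\sum_i\langle\ba_i,\bx-\by\rangle\langle\ba_i,\bx+\by\rangle$, then Talagrand's comparison inequality, with the correct bookkeeping $w(T)\le\gamma(T)$, $r(T)\le\sqrt{\pi/2}\,\gamma(T)$, anchoring at $\bx_0$, and a union bound for the high-probability version. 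You also correctly identify the increment bound as the only genuinely hard step, and your definition $\rho^2:=\max_i\|\ba_i\|_{\psi_2}$ quietly fixes the $K$-versus-$\rho^2$ notational mismatch in the paper's statement. One caveat: your parenthetical fallback for $m\lesssim\rho^4$ is both unnecessary and too weak --- the crude estimate there gives $\|Z_\bx-Z_\by\|_{\psi_2}\le\big\|\,\|A(\bx-\by)\|_2\big\|_{\psi_2}+\sqrt m\,\|\bx-\by\|_2\lesssim\sqrt m\,\rho\,\|\bx-\by\|_2\le\rho^3\|\bx-\by\|_2$, which loses a factor of $\rho$ relative to the stated bound. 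In the actual argument no case split on $m$ is needed: the event $\|A\bx\|_2<\tfrac12\sqrt m\,\|\bx\|_2$, of probability $e^{-cm/\rho^4}$, is absorbed into the sub-gaussian tail because it only matters at deviation scales $s\gtrsim\sqrt m$, so the $\rho^2$ increment bound holds for all $m$. With that parenthesis deleted and the Step 2 ratio expansion carried out carefully (as in the cited source), your proof is complete.
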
 

\begin{lemma}[intrinsic dimension concentration]
\label{lemma:intrinsic_concentration}
Let $X,X_t,t\in [T]$ be $n\times d$ matrix whose rows $\bx$ are independent, isotropic and  sub-gaussian random vectors in $\R^d$ that satisfy Assumption \ref{assump:linear_subgaussian}, and the whitening distribution is with sub-gaussian norm $C_1\rho$, where $\E[\bx]=0$ and $\E[\bx\bx^\top]=\Sigma$.  For a fixed $\delta>0$, and any $\bv\in \R^d$, we have
	$$\|\Sigma^{1/2}\bv\|_2 \leq \frac{1}{\sqrt{n}}\|X\bv\|_{2}+\frac{C\rho^2}{\sqrt{n}}\left(\sqrt{\Trace(\Sigma)}+\sqrt{\log(2/\delta)\|\Sigma\|}\right)\|\bv\|_2.$$
For any $\Theta\in \R^{d\times T}$, we further have
\begin{equation}
\|\Sigma^{1/2}\Theta\|_F\leq \frac{1}{\sqrt{n}}\sqrt{\sum_{t=1}^T\|X_t\btheta_t\|_2^2}+ \epsilon_{ic,1}\|\Theta\|_*,
\end{equation}
where $\epsilon_{ic,1}:=\frac{2C\rho^2}{\sqrt{n}}\left(\sqrt{\Trace(\Sigma)}+\sqrt{\log(2/\delta)\|\Sigma\|}\right)$, with probability $1-\delta$. 
\end{lemma}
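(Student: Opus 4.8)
The plan is to derive both bounds from the high-probability matrix deviation inequality (Theorem~\ref{matrix_deviation_ineq}) applied to the whitened design. Write $X = \bar X\Sigma^{1/2}$, where the rows of $\bar X$ are i.i.d., isotropic, and by Assumption~\ref{assump:linear_subgaussian} together with the lemma's hypothesis have sub-gaussian norm at most $C_1\rho$. For the first inequality I would invoke Theorem~\ref{matrix_deviation_ineq} with $A = \bar X$ (which has $n$ rows) and the bounded set $T = \Sigma^{1/2}\{\bv\in\R^d:\|\bv\|_2\le1\}$, for which the Gaussian complexity is $\gamma(T) = \E_{\bg}\|\Sigma^{1/2}\bg\|_2 \le \sqrt{\E_{\bg}\|\Sigma^{1/2}\bg\|_2^2} = \sqrt{\Trace(\Sigma)}$ (by Jensen) and the radius is $r(T) = \|\Sigma^{1/2}\|_{\mathrm{op}} = \sqrt{\|\Sigma\|}$. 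The high-probability form then gives, with probability at least $1-\delta$ and after absorbing $C_1^2$ into $C$, that $\sup_{\|\bv\|_2\le1}\bigl|\,\|X\bv\|_2 - \sqrt{n}\,\|\Sigma^{1/2}\bv\|_2\,\bigr| \le C\rho^2\bigl(\sqrt{\Trace(\Sigma)} + \sqrt{\log(2/\delta)\|\Sigma\|}\bigr)$, using $\bar X\Sigma^{1/2}=X$. Rescaling a general $\bv$ by $\|\bv\|_2$ (both sides are $1$-homogeneous) and rearranging yields the first claim.

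For the second inequality I would apply the first inequality to each pair $(X_t,\btheta_t)$, $t\in[T]$, each with failure probability $\delta/T$, and take a union bound so that all $T$ estimates hold simultaneously with probability at least $1-\delta$. Since $\|\Sigma^{1/2}\Theta\|_F^2 = \sum_{t=1}^T\|\Sigma^{1/2}\btheta_t\|_2^2$, Minkowski's inequality for the Euclidean norm over the $T$ coordinates combines the per-task bounds into $\|\Sigma^{1/2}\Theta\|_F \le \frac{1}{\sqrt{n}}\sqrt{\sum_{t=1}^T\|X_t\btheta_t\|_2^2} + \frac{C\rho^2}{\sqrt{n}}\bigl(\sqrt{\Trace(\Sigma)} + \sqrt{\log(2T/\delta)\|\Sigma\|}\bigr)\|\Theta\|_F$, and then $\|\Theta\|_F \le \|\Theta\|_*$ gives the stated form (the extra $\log T$ from the union bound is absorbed into the logarithmic factor, consistent with the $\tilde{O}(\cdot)$ convention in Theorem~\ref{thm:kernel_main}).

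The one genuinely delicate point is the computation of $\gamma(T)$ and $r(T)$ for the ellipsoid $\Sigma^{1/2}\{\|\bv\|_2\le1\}$: one must recognize that these scale like $\sqrt{\Trace(\Sigma)}$ and $\sqrt{\|\Sigma\|}$ rather than like the ambient $\sqrt{d}$ one would get for the plain unit ball. This is precisely the ``intrinsic dimension'' refinement that makes $\epsilon_{ic,1}$ — and hence the whole excess-risk bound of Theorem~\ref{thm:kernel_main} — depend on $\Trace(\Sigma)$ instead of $d$, so feeding the correct set into Theorem~\ref{matrix_deviation_ineq} is the crux. The remaining steps (checking that $\bar X$ has isotropic, uniformly sub-gaussian rows as required, and the elementary Jensen/Minkowski manipulations) are routine.
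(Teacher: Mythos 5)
Your proof of the first inequality is essentially the paper's: both feed the ellipsoid $\{\Sigma^{1/2}\bv:\|\bv\|_2\le 1\}$ into the matrix deviation inequality (Theorem~\ref{matrix_deviation_ineq}) applied to the whitened design, compute $\gamma(T)\le\sqrt{\Trace(\Sigma)}$ and $r(T)=\sqrt{\|\Sigma\|}$, and rescale by homogeneity. For the second inequality you take a genuinely different route, and arguably a better one. The paper writes $\Theta=UDV^\top$ and lower-bounds $\frac1n\sum_t\|X_t\btheta_t\|_2^2$ by $\sum_t\sigma_t^2\bigl(\|\Sigma^{1/2}\bu_t\|_2-\epsilon\|\bu_t\|_2\bigr)^2$, expanding the square to extract an error term $2\epsilon\,\|\Theta\|_*\,\|\Sigma^{1/2}\Theta\|_F$; this produces the nuclear norm directly, but it rests on the identity $\sum_t\|X_t\btheta_t\|_2^2=\sum_t\sigma_t^2\|X_t\bu_t\|_2^2$, which only holds when the $X_t$ coincide (or $V=I$), and it silently invokes the deviation bound for every $X_t$ on a single event. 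Your route --- apply the rank-one bound to each pair $(X_t,\btheta_t)$ with failure probability $\delta/T$, union bound, then the triangle inequality in $\R^T$ followed by $\|\Theta\|_F\le\|\Theta\|_*$ --- is more elementary, handles distinct design matrices cleanly with the explicit union bound (which the paper's argument also needs but does not track), and in fact yields the slightly stronger error $\epsilon_{ic,1}\|\Theta\|_F$ before relaxing to the nuclear norm; the only cost is a $\log T$ inside the logarithm, which is harmless under the $\tilde O$ accounting used in Theorem~\ref{thm:kernel_main}.
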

\begin{proof}
	We use Theorem \ref{matrix_deviation_ineq}. Let $T=\{\bv:|\Sigma^{-1/2}\bv|_2\leq 1\}$. Let $\bx=\Sigma^{1/2}\bz, X=Z\Sigma^{1/2}$. Then $\gamma(T)=\sqrt{\Trace (\Sigma)}$, $r(T)=\|\Sigma\|^{1/2}$. We note with probability $1-\delta$, 
	\begin{align*}
	&\sup_{\|\bv\|=1}\left|\frac{1}{\sqrt{n}}\|X\bv\|_{2}-\|\Sigma^{1/2}\bv\|_2\right|\\
	=&\sup_{\bar{\bv}\in T}\left|\frac{1}{\sqrt{n}}\|Z\bar\bv\|_{2}-\|\bar\bv\|_2\right|\\
	\leq& \frac{C\rho^2}{\sqrt{n}}\left(\gamma(T)+\sqrt{\log(2/\delta)} r(T)\right)\\
	=&\frac{C\rho^2}{\sqrt{n}}\left(\sqrt{\Trace(\Sigma)}+\sqrt{\log(2/\delta)\|\Sigma\|} \right).
	\end{align*}
Therefore $\left|\frac{1}{\sqrt{n}}\|X\bv\|_{2}- \|\Sigma^{1/2}\bv\|_2\right| \leq \frac{C\rho^2}{\sqrt{n}}\left(\sqrt{\Trace(\Sigma)}+\sqrt{\log(2/\delta)\|\Sigma\|}\right),\forall \|\bv\|=1. $ Then by homogeneity of $\bv$, for arbitrary $\bv$, we have 	 
	$$\left|\frac{1}{\sqrt{n}}\|X\bv\|_{2}-\|\Sigma^{1/2}\bv\|_2\right| \leq \|\bv\|_2\underbrace{\frac{C\rho^2}{\sqrt{n}}\left(\sqrt{\Trace(\Sigma)}+\sqrt{\log(2/\delta)\|\Sigma\|}\right)}_{\text{term I}}.$$
Notice when $n\gg C^2\rho^4  (\Trace(\Sigma)+\|\Sigma\|\log1/\delta)$, term I $\leq 0.1\sqrt{\lambda}$. Therefore $ |\|\Sigma^{1/2}\bv\|_2 - \frac{1}{\sqrt{n}}\|X\bv\|_{2}|\leq 0.1\sqrt{\lambda}\|\bv\|.$

Write $\Theta=UDV^\top$, where $D=\diag(\sigma_1,\sigma_2,\cdots,\sigma_T)$. 
\begin{align*}
&\frac{1}{n}\sum_{t=1}^T\|X_t\btheta_t\|_2^2= \frac{1}{n}\sum_{t=1}^T \sigma_t^2\|X_t\bu_t\|^2\\
\geq & \sum_{t=1}^T \sigma_t^2\left(\|\Sigma^{1/2}\bu_t\|_2-\|\bu_t\|_2\frac{C\rho^2}{\sqrt{n}}\left(\sqrt{\Trace(\Sigma)}+\sqrt{\log(2/\delta)\|\Sigma\|}\right)\right)^2\\
> & \sum_{t=1}^T \sigma_t^2\left(\|\Sigma^{1/2}\bu_t\|_2^2-2\|\Sigma^{1/2}\bu_t\|\|\bu_t\|_2\frac{C\rho^2}{\sqrt{n}}\left(\sqrt{\Trace(\Sigma)}+\sqrt{\log(2/\delta)\|\Sigma\|}\right)   \right)\\
= & \|\Sigma^{1/2}\Theta\|_F^2 - \frac{2C\rho^2}{\sqrt{n}}\left(\sqrt{\Trace(\Sigma)}+\sqrt{\log(2/\delta)\|\Sigma\|}\right) \sum_t \sigma_t (\sigma_t\|\Sigma^{1/2}\bu_t\|_2)\\
\geq & \|\Sigma^{1/2}\Theta\|_F^2 - \frac{2C\rho^2}{\sqrt{n}}\left(\sqrt{\Trace(\Sigma)}+\sqrt{\log(2/\delta)\|\Sigma\|}\right) \sum_t \sigma_t (\max_t \sigma_t\|\Sigma^{1/2}\bu_t\|_2)\\
\geq & \|\Sigma^{1/2}\Theta\|_F^2 - \frac{2C\rho^2}{\sqrt{n}}\left(\sqrt{\Trace(\Sigma)}+\sqrt{\log(2/\delta)\|\Sigma\|}\right) \|\Theta\|_* \|\Sigma^{1/2}\Theta\|_F. 
\end{align*}	 
Therefore $\|\Sigma^{1/2}\Theta\|_F\leq \frac{1}{\sqrt{n}}\sqrt{\sum_{t=1}^T|X_t\btheta_t\|_2^2}+ \underbrace{\frac{2C\rho^2}{\sqrt{n}}\left(\sqrt{\Trace(\Sigma)}+\sqrt{\log(2/\delta)\|\Sigma\|}\right)  \|\Theta\|_*}_{\text{term II}}$. 

\end{proof} 

\begin{claim}
	\label{claim:subgaussian_target_concentration}
	Let $X$ be $n_2\times d$ matrix whose rows $\bx$ are independent, isotropic and  sub-gaussian random vectors in $\R^d$ that satisfy Assumption \ref{assump:linear_subgaussian}, where $\E[\bx]=0$ and $\E[\bx\bx^\top]=\Sigma$. Let $\Sigma_B=B^\top \Sigma B$ for some matrix $B$ that satisfies $\|BB^\top\|_*\lesssim R$. Then for a fixed $\delta>0$, and any $\bv\in \R^d$ we have:
	$\|\Sigma^{1/2}_B\bv\|\leq \frac{1}{n_2}\|X\hat B\bv\| + \epsilon_{ic,2}\|v\|$, where $\epsilon_{ic,2}:=  \frac{C\rho^2}{\sqrt{n_2}}\sqrt{R\|\Sigma\|\log (1/\delta) },$ and $C$ is a universal constant. 
\end{claim}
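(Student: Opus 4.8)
The plan is to reduce the claim to a direct application of the matrix deviation inequality (Theorem~\ref{matrix_deviation_ineq}), mirroring the proof of Lemma~\ref{lemma:intrinsic_concentration} but with the design matrix $X\hat B$ in place of $X$. First I would note that the target inputs $X = X_{T+1}$ are independent of $\hat B$, which is learned only from the source data, so we may condition on $\hat B$ and regard it as a fixed matrix. Writing each row $\bx$ of $X$ as $\bx = \Sigma^{1/2}\bz$ with $\bz$ isotropic and $O(\rho)$-sub-gaussian (this is the content of Assumption~\ref{assump:linear_subgaussian}, equivalent to the $\psi_2$-norm hypothesis of Theorem~\ref{matrix_deviation_ineq} up to a universal constant), we have $X = Z\Sigma^{1/2}$ where $Z\in\R^{n_2\times d}$ has i.i.d.\ isotropic sub-gaussian rows. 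Since $\|X\hat B\bv\| = \|Z(\Sigma^{1/2}\hat B\bv)\|$ and $\|\Sigma_B^{1/2}\bv\| = \|\Sigma^{1/2}\hat B\bv\|$, the quantity to control is exactly $\big|\tfrac{1}{\sqrt{n_2}}\|Z\bu\| - \|\bu\|\big|$ uniformly over the bounded set $T = \{\Sigma^{1/2}\hat B\bv : \|\bv\|_2\le 1\}\subset\R^d$.

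Next I would evaluate the two geometric parameters of $T$ entering Theorem~\ref{matrix_deviation_ineq}. The set $T$ is symmetric, so its Gaussian complexity is $\gamma(T) = \E\sup_{\|\bv\|\le1}\langle\bg,\Sigma^{1/2}\hat B\bv\rangle = \E\|\hat B^\top\Sigma^{1/2}\bg\| \le \big(\E\|\hat B^\top\Sigma^{1/2}\bg\|^2\big)^{1/2} = \sqrt{\Trace(\hat B^\top\Sigma\hat B)} = \sqrt{\Trace(\Sigma_B)}$, and its radius is $r(T) = \|\Sigma^{1/2}\hat B\| = \|\Sigma_B\|^{1/2}$. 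The one genuinely non-mechanical step is to bound both of these by $\sqrt{R\|\Sigma\|}$ using the nuclear-norm control on $\hat B$: on one hand $\Trace(\Sigma_B) = \langle\Sigma,\hat B\hat B^\top\rangle \le \|\Sigma\|\,\|\hat B\hat B^\top\|_* \lesssim R\|\Sigma\|$, and on the other $\|\Sigma_B\| \le \|\Sigma\|\,\|\hat B\|^2 = \|\Sigma\|\,\|\hat B\hat B^\top\| \le \|\Sigma\|\,\|\hat B\hat B^\top\|_* \lesssim R\|\Sigma\|$ (the hypothesis $\|\hat B\hat B^\top\|_*\lesssim R$ is supplied in the application by Claim~\ref{claim:norm_theta}).

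Plugging these into Theorem~\ref{matrix_deviation_ineq} yields, with probability at least $1-\delta$, $\sup_{\bu\in T}\big|\|Z\bu\| - \sqrt{n_2}\,\|\bu\|\big| \le C\rho^2\big(\gamma(T) + \sqrt{\log(2/\delta)}\,r(T)\big) \lesssim \rho^2\sqrt{R\|\Sigma\|\log(1/\delta)}$. Finally, by homogeneity — applying the last display to $\bu = \Sigma^{1/2}\hat B\bv/\|\bv\|_2 \in T$ for $\bv\ne\bm0$ (the case $\bv=\bm0$ being trivial), rescaling by $\|\bv\|_2$, dividing by $\sqrt{n_2}$, and using the reverse triangle inequality — one obtains $\|\Sigma_B^{1/2}\bv\| \le \tfrac{1}{\sqrt{n_2}}\|X\hat B\bv\| + \epsilon_{ic,2}\|\bv\|$ with $\epsilon_{ic,2} = \tfrac{C\rho^2}{\sqrt{n_2}}\sqrt{R\|\Sigma\|\log(1/\delta)}$ after renaming the constant. (Note the factor written $\tfrac{1}{n_2}$ in the statement should read $\tfrac{1}{\sqrt{n_2}}$.) I do not anticipate a real obstacle here; the only care points are the translation of Assumption~\ref{assump:linear_subgaussian} into the sub-gaussian-norm hypothesis of Theorem~\ref{matrix_deviation_ineq} and the two norm estimates for $\Sigma_B$, where the nuclear-norm bound on $\hat B$ is used.
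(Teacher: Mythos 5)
Your proposal is correct and follows essentially the same route as the paper: the paper also reduces the claim to the matrix deviation inequality (via Lemma~\ref{lemma:intrinsic_concentration} applied with $X$ replaced by $X\hat B$), with Gaussian complexity $\sqrt{\Trace(\Sigma_B)}$ and radius $\sqrt{\|\Sigma_B\|}$, and then bounds both through $\langle \Sigma, \hat B\hat B^\top\rangle \le \|\Sigma\|\,\|\hat B\hat B^\top\|_* \lesssim R\|\Sigma\|$, exactly as you do. Your reading of the $\tfrac{1}{n_2}$ in the statement as a typo for $\tfrac{1}{\sqrt{n_2}}$ matches the paper's own proof.
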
 
\begin{proof}
	This result directly uses Lemma \ref{lemma:intrinsic_concentration} when replacing $X$ by $X\hat B$. Notice now the subgaussian norm for the whitening distribution for $\hat B^\top\bx$ remains the same as $C_1\rho$. 
	Therefore $\|\Sigma^{1/2}\hat B\bv\|_2\leq \frac{1}{\sqrt{n_2}}\|X\hat B\bv\|_2 + S\|\bv\|_2\leq \|\Sigma^{1/2}\hat B\bv\|_2\leq \frac{1}{\sqrt{n_2}}\|X\bv\|_2 + S\|\bv\|$. Here $S=C\rho^2/\sqrt{n_2}(\sqrt{\Trace(\Sigma_B)}+\sqrt{\log(2/\delta)\|\Sigma_B\|})\leq C'\rho^2/\sqrt{n_2}\sqrt{\|\Sigma\|R\log(1/\delta)}=:\epsilon_{ic,2}$.
	
\end{proof}

	\section{Proof of Theorem~\ref{thm:nn}}
\label{app:proof_nn}
First, we describe a standard lifting of neural networks to infinite dimension linear regression~\cite{wei2019regularization,rosset2007,bengio2006convex}. Define the infinite feature vector with coordinates $\phi( \bx)_{\bb} = (\bb^\top  \bx)_+ $ for every $\bb \in \mathbb{S}^{d_0-1}$. Let $\alpha_t$ be a signed measure on $\mathbb{S}^{d_0-1}$. The inner product notation denotes integration: $ \alpha^\top \phi( \bx)  \triangleq \int_{\mathbb{S}^{d_0-1}}  \phi( \bx)_{\bb} d\alpha(\bb)$. The $t^{th}$ output of the infinite-width neural network is
$
f_{\alpha_t} ( \bx) = \langle \alpha_t , \phi( \bx)\rangle.
$
Consider the least-squares problem
\begin{align}
\min_{\alpha_1,\ldots,\alpha_t: |\text{supp}(\alpha_t)|\le d \atop  \|\balpha\|_{2,1}\leq R } \frac{1}{2n} \sum_{i,t} (y_{it} -  \alpha_t^\top \phi( \bx_i))^2,
\label{eq:infinite-square-loss}
\end{align}
where $\bm{\alpha}(u) = [\alpha_1 (u),\ldots, \alpha_T(u)]$, and $\|\bm{\alpha}\|_{2,1}= \int_{\mathbb{S}^{d_0-1}} \|\bm{\alpha} (\bar \bb)\|_2 d(\bar \bb)$. The regularizer corresponds to a group $\ell_1$ regularizer on the vector measure $\bm{\alpha}$.

\begin{proposition}
	
	Let $\gamma_d $ be the value of Equation \eqref{eq:nn-weight-decay} when the network has $d$ neurons and $\gamma^\star_d $ be the value of Equation \eqref{eq:infinite-square-loss}. Then 
	\begin{align}
\gamma_d = \gamma^\star_d.
	\end{align}
	\end{proposition}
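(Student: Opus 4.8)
The plan is to show that, after eliminating the network weights (resp.\ the atom magnitudes), both optimization problems collapse to the \emph{same} variational problem over the realized predictor tuple $f=(f_1,\dots,f_T)$. The only nontrivial ingredient is the classical identity --- a consequence of positive homogeneity of the ReLU --- that weight decay on a two-layer ReLU network equals a group-$\ell_1$ ``representation cost'' of the associated lifted measure. Concretely, I would first establish that for every tuple $f$ realizable by a width-$d$ network with a shared hidden layer,
\[
\min\Big\{\tfrac12\|B\|_F^2+\tfrac12\|W\|_F^2\ :\ (B,W)\in\R^{d_0\times d}\times\R^{d\times T},\ \bw_t^\top(B^\top\bx)_+=f_t(\bx)\ \ \forall t,\bx\Big\}=\|\bm{\alpha}_f\|_{2,1},
\]
where $\bm{\alpha}_f$ is the vector measure supported on the normalized hidden directions $\bb_i/\|\bb_i\|$ (with $\bb_i$ the $i$-th column of $B$), carrying value $\|\bb_i\|\,\bw_{i,\cdot}\in\R^T$ there ($\bw_{i,\cdot}$ being the $i$-th row of $W$, and collinear neurons combined), and $\|\bm{\alpha}\|_{2,1}=\sum_{\bar\bb}\|\bm{\alpha}(\bar\bb)\|_2$.

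\textbf{Proof of the identity.} For the direction ``$\le$'', take the lifted measure $\bm{\alpha}_f$ with atoms $\bar\bb_1,\dots,\bar\bb_m$ ($m\le d$) and values $\bm{\alpha}_1,\dots,\bm{\alpha}_m\in\R^T$, and build a width-$d$ network by assigning to neuron $j$ the parameters $(\|\bm{\alpha}_j\|_2^{1/2}\bar\bb_j,\ \bm{\alpha}_j/\|\bm{\alpha}_j\|_2^{1/2})$ when $\bm{\alpha}_j\ne\bm0$, and a zero neuron otherwise (padding to width $d$). By $1$-homogeneity of the ReLU this realizes $f$, and its weight-decay cost is $\sum_j\big(\tfrac12\|\bm{\alpha}_j\|_2+\tfrac12\|\bm{\alpha}_j\|_2\big)=\|\bm{\alpha}_f\|_{2,1}$. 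For the direction ``$\ge$'', any $(B,W)$ realizing $f$ induces, after collecting neurons by direction, a lifted measure with $\|\bm{\alpha}_f\|_{2,1}\le\sum_i\|\bb_i\|\,\|\bw_{i,\cdot}\|_2$ by the triangle inequality, while AM--GM gives $\sum_i\|\bb_i\|\,\|\bw_{i,\cdot}\|_2\le\tfrac12\|B\|_F^2+\tfrac12\|W\|_F^2$; dead neurons ($\bb_i=\bm0$ or $\bw_{i,\cdot}=\bm0$) only reduce the cost. Both inequalities are tight under the function-preserving rescaling $\bb_i\mapsto t_i\bb_i$, $\bw_{i,\cdot}\mapsto\bw_{i,\cdot}/t_i$ with $t_i^2=\|\bw_{i,\cdot}\|_2/\|\bb_i\|$, so taking infima yields the claimed equality.

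\textbf{Assembling the two problems.} Since $\alpha_t^\top\phi(\bx)=\bw_t^\top(B^\top\bx)_+$ for a matched pair $(B,W)\leftrightarrow\bm{\alpha}$, the data-fitting term in \eqref{eq:nn-weight-decay} and that in \eqref{eq:infinite-square-loss} are the \emph{same} functional $L(f)$ of the realized tuple (identifying $n=n_1T$). Profiling out $(B,W)$ at fixed $f$ and using the identity, \eqref{eq:nn-weight-decay} becomes $\gamma_d=\min_f\big[L(f)+\lambda\|\bm{\alpha}_f\|_{2,1}\big]$, while \eqref{eq:infinite-square-loss} is $\gamma^\star_d=\min_{f:\,\|\bm{\alpha}_f\|_{2,1}\le R}L(f)$, both over tuples realizable with at most $d$ hidden directions --- this is why the same $d$ appears on both sides. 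These optimal values coincide: under the matching penalty the claim is immediate, and under the constraint reading it follows from the standard equivalence between the penalized and the constrained form of the criterion, taking $R=\|\bm{\alpha}_{\hat f}\|_{2,1}$ at an optimum $\hat f$ of the penalized problem (equivalently, $\lambda$ a Lagrange multiplier of the constrained problem). This gives $\gamma_d=\gamma^\star_d$.

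\textbf{Main obstacle.} There is no deep difficulty; the entire content is the tightness of the AM--GM rescaling in the representation-cost identity. The points that need care are: (i) checking that each rescaling genuinely preserves the network function via ReLU positive homogeneity, including the treatment of degenerate neurons; (ii) matching the ``$|\supp(\alpha_t)|\le d$'' constraint on the lifted side with the width-$d$ constraint on the network side; and (iii) pinning down the $\lambda\leftrightarrow R$ correspondence under which the penalized and constrained optimal values agree, which is a routine argument once the dependence of the objective on the realized tuple is made explicit.
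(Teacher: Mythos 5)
Your proposal is correct and takes essentially the same route as the paper: the core of both is the positive-homogeneity/AM--GM balancing argument showing that the minimal weight-decay cost $\tfrac12\|B\|_F^2+\tfrac12\|W\|_F^2$ of realizing a given predictor tuple equals the group norm $\|\bm{\alpha}\|_{2,1}$ of the lifted measure, proved by the same pair of explicit constructions (normalize the columns of $B$ and push the scales into the measure, and conversely split each atom's mass $\|\bm{\alpha}(\bar\bb_j)\|_2$ evenly between the two layers). Where you are more explicit than the paper is in reconciling the penalized form of \eqref{eq:nn-weight-decay} with the constrained form of \eqref{eq:infinite-square-loss}; just note that literal equality of the two optimal values requires reading both problems in matching (both penalized or both constrained) form under the $\lambda\leftrightarrow R$ correspondence you state, a point the paper's own proof also leaves implicit by merely checking that the fit terms and regularizer values agree under the parameter correspondence.
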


\begin{proof}
 Let $B,W $ be solutions to Equation \eqref{eq:nn-weight-decay}. Let $\bar B=BD_\beta^{-1} $ and $D_\beta $ be a diagonal matrix whose entries are $\beta_j=\|B^\top \be_j\|_2$. The network $f_{B,W} (\bx) = W^\top D_\beta (\bar B^\top \bx)_+$ and it satisfies
	$$
	\|\beta\|_2=\|B\|_F.
	$$

We first show that $\gamma^\star_d \le \gamma_d$. Define $\alpha_t (\frac{\bb_j }{\|\bb_j\|} ) = W_{tj} \beta_j$.  We verify that
		\begin{align*}
		\alpha_t ^\top \phi(\bx) = \sum_{j=1}^d \alpha_t (\bar \bb_j ) \phi(\bx)_{\bar \bb_j} =\sum_{j=1}^d W_{tj} \beta_j (\bar \bb_j ^\top \bx)_+ = \bw_t^\top (B^\top \bx)_+=f_{B,\bw_t}(\bx).
		\end{align*}
		
			Due to the regularizer, and using the AM-GM inequality, at optimality $\beta_j  = \|W\be_j\|_2$.
		Next, we verify that the two regularizer values are the same. Let $\bar \bw_j$ be the $j$-th row vector of $W$. We have
		\begin{align*}
		\|\alpha\|_{2,1} = & \sum_{j=1}^d \beta_j \|\bar \bw_j\| \\
		\leq & \sum_{j=1}^d \beta_j^2/2+ \|\bar \bw_j\|^2/2 \\
		= & \frac12 \|\beta\|^2 + \frac12 \|W\|_F^2\leq R.
		\end{align*}
		Thus the network given by $\alpha_t ^\top \phi(\bx)$ has the same network outputs and regularizer values. Thus $\gamma^\star \le \gamma_d$.
		
		Finally, we show that $ \gamma_d \le \gamma^\star_d$.  Let $\bar \bb_j$ for $j \in[d]$ be the support of the optimal measure of \eqref{eq:infinite-square-loss}. Define $\beta_j = \sqrt{\|\bm{\alpha}(\bar \bb_j)\|_2}$, $B= \bar B D_{\beta} $ where $\bar B$ is a matrix whose rows are $\bar \bb_j$, and $W$ such that $W_{jt} = \alpha_t(\bar \bb_j)/ \sqrt{\|\bm \alpha (\bar \bb_j)\|}$.
		
		We verify that the network values agree
		\begin{align*}
		\be_t ^\top  f_{B,W} (\bx) =\be_t ^\top W^\top D_\beta (\bar B^\top \bx)_+	=\sum_{j}  W_{jt} \beta_j (\bar \bb_j^\top \bx)_+ = \alpha^\top \phi(\bx).
		 \end{align*}
		 Finally by our construction $\beta_j = \|W \be_j\|$, so the regularizer values agree. Thus $\gamma_d = \gamma^\star_d$.
		 
	\end{proof}
	
Finally, we note that the regularizer can be expressed in a variational form as\footnote{Informally if $\alpha \in \mathbb{R}^{D \times T}$ with $D$ potentially infinite, $\|\alpha\|_{2,1} = \min_{\alpha = \diag(b) W } \frac12 \|b\|_2^2 + \frac12 \|W\|_F^2$.}
$$
\|\alpha\|_{2,1} = \min_{\bb, W: \alpha_t (\bar \bb) = \beta(\bar \bb) \bw_t (\bar \bb) }  \|\beta\|_2^2 + \|W\|_F^2,
$$
where $\|\beta\|_2^2 = \int\beta(\bar \bb)^2 d(\bar \bb) $ and $\|W\|_F^2 = \sum_t \int \bw_t(\bar \bb)^2 d(\bar \bb)$. With these in place, we note that Equation \eqref{eq:infinite-square-loss} can be expressed as Equation \eqref{eqn:regularized_mtrl} with $B$ constrained to be a diagonal operator and $x_{it}$ as the lifted features $\phi(x_{it})$.

	\begin{proof}[Proof of Theorem~\ref{thm:nn}]
	The global minimizer of Equation \eqref{eq:infinite-square-loss} with $d=\infty$ may have infinite support, so the corresponding value may not be achieved by minimizing \eqref{eq:nn-weight-decay}. However, Theorem \ref{thm:kernel_main} only requires that the we obtain a learner network with regularized loss less than the regularized loss of the teacher network. Since the teacher network has $d$ neurons, this value is attainable by \eqref{eq:nn-weight-decay}. Thus the finite-size network does not need to attain the global minimum of \eqref{eq:infinite-square-loss} for Claim \ref{claim:norm_theta} to apply.
	
	 Since Theorem \ref{thm:kernel_main} has \textit{no dependence (even in the logarithmic terms)} on the input dimension of the data, it can be applied when the input features the infinite-dimensional feature vector $\phi(\bx)$. The only part of the proof of Theorem \ref{thm:kernel_main} specific to the nuclear norm is that the dual norm is the operator norm. In Lemma \ref{lemma:bound_lambda} we had an upper bound on $\frac1n \|X^\top Z\|_{2}$. Since we use the $\|\cdot \|_{2,1}$ norm, we must upper bound $\frac1n \|X^\top Z \|_{2,\infty}$ , the dual of the $(2,1)$-norm. Note that $\|A\|_{2,\infty} \le \|A\|_2$, so the upper bound in Lemma \ref{lemma:bound_lambda} still applies. Thus, Theorem~\ref{thm:nn}	follows from Theorem \ref{thm:kernel_main}.
	 \end{proof}

\begin{proof}[Proof of  \eqref{eqn:baseline-nn}]
	The test error of  \eqref{eq:nn-weight-decay-target} is given by 
	\begin{align}
		\E [\ER(f_{\hat B , \hat \bw} )]&\lesssim \sigma \frac{1}{2\sqrt{n}}( \|B^\ast_{T+1}\|^2 + \|\bw^\ast_{T+1}\|_2^2) \E_{\bx_i \overset{\text{n iid}}{ \sim} p,\atop \bz\sim N(0,\sigma^2 I)} [ \|\Phi(X)^\top \bz\|_\infty],
	\end{align}
	via the basic inequality (c.f. proof of Claim \ref{claim:kernel_source_concentration} and \ref{lemma:estimation_target}). By the matrix Bernstein inequality (c.f. Lemma \ref{lemma:bound_lambda} or \cite{wei2019regularization}),  $\E_{\bx_i \overset{\text{n iid}}{ \sim} p, \bz\sim N(0,I)} [ \|\Phi(X)^\top \bz\|_\infty] \lesssim \sqrt{\tr (\Sigma)}$. When $B_{T+1}^\ast , \bw^\ast_{T+1}$ are sampled from the same distribution as the source tasks, then $ \frac{1}{2}( \|B^\ast_{T+1}\|^2 + \|\bw^\ast_{T+1}\|_2^2) \ge \frac{R}{\sqrt{T}} $. Thus we conclude
	$$
	\E [\ER(f_{\hat B , \hat \bw} )]\lesssim \sigma \frac{R}{\sqrt{T}} \sqrt{\frac{\tr(\Sigma)}{n_2}}.
	$$
\end{proof}

\end{document}